\newif\ifarxiv
\arxivtrue

\documentclass{article}

\ifarxiv
\usepackage{float}
\usepackage{xcolor}
\usepackage{fullpage, multicol}
\date{}
\usepackage{natbib}

\else

\usepackage[main, preprint]{neurips_2026}

\fi

\usepackage[utf8]{inputenc} 
\usepackage[T1]{fontenc}    
\usepackage{hyperref}       
\usepackage{url}            
\usepackage{booktabs}       
\usepackage{amsfonts}       
\usepackage{nicefrac}       
\usepackage{microtype}      
\usepackage{xcolor}  
\usepackage[dvipsnames]{xcolor}
\usepackage[most]{tcolorbox}
\usepackage{amsmath,amsthm,amssymb,bbm}
\usepackage{mathtools}
\usepackage{thmtools, thm-restate}

\usepackage{algorithm}
\usepackage{algpseudocode}
\usepackage{listings}

\usepackage{subcaption}
\usepackage{graphicx}
\usepackage{float}
\usepackage{enumitem}
\usepackage{wrapfig}

\newcommand{\cX}{\mathcal{X}}

\newcommand{\cA}{\mathcal{A}}

\newcommand{\E}{\mathbb{E}}

\DeclareFontFamily{OT1}{pzc}{}
\DeclareFontShape{OT1}{pzc}{m}{it}{<-> s * [1.10] pzcmi7t}{}
\DeclareMathAlphabet{\pzccal}{OT1}{pzc}{m}{it}

\theoremstyle{plain}
\newtheorem{theorem}{Theorem}[section]
\newtheorem*{theorem*}{Theorem}
\newtheorem{proposition}[theorem]{Proposition}
\newtheorem{lemma}[theorem]{Lemma}

\newtheorem{observation}[theorem]{Observation}
\theoremstyle{definition}
\newtheorem{definition}[theorem]{Definition}
\newtheorem{assumption}[theorem]{Assumption}
\theoremstyle{remark}
\newtheorem{remark}[theorem]{Remark}

\newcommand{\mh}[1]{\textcolor{violet}{[MH: #1]}}

\newcommand{\rebuttal}[1]{\textcolor{black}{#1}}

\providecommand{\sgn}{\operatorname{sign}}
\title{Narrowing the Collaboration Gap, Probably}

\ifarxiv

\author{%
    \begin{tabular}{ccc}
    \begin{tabular}{c}
    Mirah Shi\thanks{Equal contribution. Correspondence to mirahshi@seas.upenn.edu or mhussing@mit.edu} \\
    University of Pennsylvania
    \end{tabular}
    &
    \begin{tabular}{c}
    Marcel Hussing\footnotemark[1] \\
    MIT 
    \end{tabular}
    &
    \begin{tabular}{c}
    Ira Globus-Harris \\
    Cornell University
    \end{tabular}
    \\[2.5em]
    \begin{tabular}{c}
    Natalie Collina \\
    MIT
    \end{tabular}
    &
    \begin{tabular}{c}
    Aaron Roth \\
    University of Pennsylvania
    \end{tabular}
    &
    \begin{tabular}{c}
    Surbhi Goel \\
    University of Pennsylvania
    \end{tabular}
    \end{tabular}
    }
\else

\author{%
  Mirah Shi\thanks{Equal contribution} \\
  University of Pennsylvania\\
  mirahshi@seas.upenn.edu\\
  \And
  Marcel Hussing\footnotemark[1]\\
  MIT\\
  mhussing@seas.upenn.edu\\
  \And
  \;\;\;\;\;Ira Globus-Harris \;\;\;\;\;\\
  Cornell University\\
  img53@cornell.edu\\
  \AND
  Natalie Collina \\
  MIT\\
  ncollina@seas.upenn.edu\\
  \And
  Aaron Roth \\
  University of Pennsylvania\\
  aaroth@seas.upenn.edu\\
  \And
  Surbhi Goel \\
  University of Pennsylvania\\
  surbhig@seas.upenn.edu\\
}
\fi

\begin{document}

\maketitle

\begin{abstract}
Large language models are increasingly deployed as teams of agents that hold different private information about a shared task. How should such agents concisely communicate in a way that promotes aggregation of decision-relevant information? Communicating only proposed actions can discard decision-relevant uncertainty: two agents may prefer the same action for different reasons. We study an alternative interface in which agents communicate calibrated beliefs over actions. We first analyze a simple synthetic setting that captures the structure of multi-round reasoning under partial information. In this setting, we show theoretically and empirically that communicating unbiased probabilities can be strictly more powerful than communicating actions. We then test the same principle in a collaborative maze-solving task with trained transformers and pretrained LLM agents. Our results demonstrate that probability communication outperforms action communication when the exchanged predictions preserve calibrated uncertainty. When communicated probabilities are biased, we show that the benefits of probability communication can disappear. We give a post-hoc conversational calibration intervention that consistently improves decision-making by correcting these biases both in a toy transformer setting and when pretrained LLMs use the corrections in context. These results suggest that probabilistic communication is useful for multi-agent collaboration not merely because it transmits more information, but because calibrated uncertainty gives other agents a reliable object to update on.
\end{abstract}

\section{Introduction}

Large language models (LLMs) are increasingly used within multi-agent systems where agents may share a task but hold different private information.
For example, physical robots may see different sensor data, and coding agents may inspect different files when the relevant repository is too large for any single context window. To succeed in such settings, agents must communicate to combine their private information to make better decisions than any single agent could alone.

In practice, the same constraints that motivate multi-agent design often make full information sharing infeasible.
Communication may be limited by bandwidth, memory, context length, or by the fact that different agents' observations are not mutually legible across modalities. And even if one agent could transmit its full observation, the receiving agent may not be able to process it into a correct answer \citep{davidson2025collaboration}. More generally, recent work has shown that LLMs often struggle in these collaborative settings, exhibiting failures in information sharing~\citep{cemri2026why}, belief updating~\citep{li2023theory, wilie2024belief}, and joint decision-making under partial information~\citep{davidson2025collaboration,eisenstein2026mtpingevalevaluatingmultiturncollaboration}. These failures reveal a broader ``collaboration gap" \citep{davidson2025collaboration}: current LLM agents do not reliably aggregate distributed observations into well-coordinated decisions. Narrowing this gap requires answering a fundamental question:

\begin{center}
    \emph{What should agents communicate in order to aggregate distributed private information?}
\end{center}

A natural low-bandwidth alternative to communicating their hidden information is for agents to exchange only their proposed actions. However, communicating actions can omit decision-relevant uncertainty. A promising direction comes from the agreement literature, which shows that Bayesian reasoners with different observations can exchange beliefs to converge towards predictions that aggregate their information \citep{frongillo2023agreement}.
More recent work weakens Bayesian rationality to calibration-style conditions, including \emph{conversation calibration}, and studies when agreement would lead to optimal actions given the agents' combined observations~\citep{collina2025tractable,collina2026collaborative,kearns2026networked}. This perspective suggests that reliable multi-agent LLM collaboration may require mechanisms that go beyond ad hoc message exchange: agents must be able to recognize their own uncertainty and interpret others' uncertainty.

\begin{figure}[t]
    \centering
    \includegraphics[width=\textwidth]{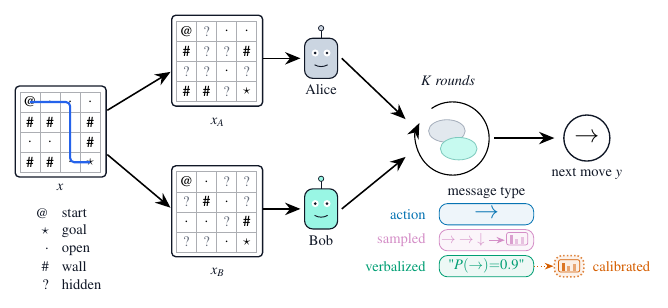}
    \caption{Multi-agent collaboration under partial information. Agents observe different views of the same maze and exchange messages over multiple rounds before producing a final action. We compare action-only messages, verbalized probabilities, sampled probabilities from multiple rollouts, and a post-hoc calibrated probability channel. \textit{Figure adapted from \cite{davidson2025collaboration}.}}
    \label{fig:communication-protocols}
\end{figure}

\subsection{Our contributions}
Our hypothesis is that, under partial information, agents collaborate more effectively when they communicate calibrated beliefs rather than only their preferred decisions. We investigate this hypothesis theoretically in a stylized theoretical model and empirically on the collaborative maze task of~\citet{davidson2025collaboration}. We study both transformers trained for the task and large-scale, pretrained LLM agents. Across these settings, we compare two low-bandwidth communication protocols:
\begin{enumerate}[leftmargin=15pt, topsep=0pt]
\item\textit{Action-only communication:} In each round, the speaker sends a single action that it currently believes is most likely correct. For transformers this is the action with highest softmax probability; for LLM agents, it is the action selected in the model's response.

\item\textit{Probability communication:} In each round, the speaker sends a distribution over actions. For transformers this is the softmax vector. For LLM agents, we consider \emph{verbalized probabilities}, where the model reports a probability vector in text, and \emph{sampled probabilities}, where the message is the empirical distribution over independent response rollouts.
\end{enumerate}

Our results are threefold. First, in a stylized chained-parity model of multi-round collaboration, we show a theoretical separation between communicating probabilities and communicating actions. Exact probability messages aggregate one new piece of information per round, ultimately leading to full information aggregation. Meanwhile, action-only messages can reveal only the first agent's information and then stall. Second, we test the same principle in controlled transformer experiments on collaborative maze solving (see Figure~\ref{fig:communication-protocols}). We see that probability communication improves over action communication when the learned messages preserve calibrated uncertainty. When probability messages are \textit{biased}, this advantage can disappear. To address this, we introduce a \textit{post-hoc calibration algorithm} that corrects biased reports and recovers much of the performance lost when raw probability reports are distorted. Third, we evaluate the same communication protocols using pretrained LLM agents on the maze task. We find that simply asking models to verbalize probabilities is not enough: raw verbalized probabilities are often no better than action messages. We show that across the tested models, applying post-hoc calibration as an \textit{intervention} consistently improves performance, indicating that probabilistic messages become substantially more useful in-context when corrected into forecasts that other agents can reliably update on.

Beyond the agreement literature discussed above, our work also draws on research on calibrated forecasting, uncertainty elicitation in language models, sampling-based uncertainty, multi-agent LLM debate and collaboration, and learned communication in multi-agent reinforcement learning. We defer an extended discussion of these connections to Appendix~\ref{app:additional-related-work}.

\section{Preliminaries}

Formally, we study a setting with two agents, Alice and Bob, who collaborate to find a correct action to take. Each data point consists of a feature $x\in\cX$ and its associated correct action $a^*\in \cA$, drawn from a joint distribution over $\cX\times \cA$. Neither agent observes the full feature: Alice observes $x_A \in \cX_A$ and Bob observes $x_B \in \cX_B$, with $x=(x_A,x_B)\in\cX_A\times\cX_B$. The correct action depends on the full feature $x$, so neither agent can reliably identify $a^*$ from their own observations alone. In the special case where $a^*$ is deterministically determined by $x$, the agents' combined information is sufficient to identify $a^\star$.

\textbf{Communication protocol.}
The agents communicate for a fixed number of rounds $K$ before selecting an action. Alice speaks on odd rounds and Bob on even rounds; we write $\sigma(k)\in\{A,B\}$ for the round-$k$ speaker. We focus on the bandwidth-limited regime, in which each round's message is much smaller than the speaker's private observation.

We consider two types of communication protocols. In the \emph{action-communication} protocol, the round-$k$ message is a single proposed action $\hat a_k\in\cA$. In the \emph{probability-communication} protocol, the round-$k$ message is a distribution over actions $\hat p_k\in\Delta(\cA)$. After round $K$, the agents execute the action chosen by the final speaker: the final action message $\hat a_K$ under action communication, and $\arg\max_a \hat p_K(a)$ under probability communication.

\textbf{Calibration.}
Calibration formalizes when a probability message can be interpreted as meaningful uncertainty. Intuitively, a probabilistic message is calibrated if its probabilities agree with long-run frequencies. For example, among all instances in which an agent assigns probability $0.7$ to an action $a$, that action should be correct approximately $70\%$ of the time.

\begin{definition}[Calibration] \label{def:calibration}
    Let $\hat p$ be a $\Delta(\cA)$-valued random variable, where the randomness is over the draw of the instance and any randomness in the agents' procedure. We say $\hat p$ is calibrated if, for every action $a\in\cA$, $\Pr(a^\star=a\mid \hat p)=\hat p(a)$ almost surely.
\end{definition}

Calibration is a marginal property: it says that an agent's stated uncertainty is correct in aggregate over all instances in which the agent sends the same distribution. With multiple agents, we may also ask if one agent's forecast remains calibrated after accounting for another communicated forecast. This is the notion of cross-calibration. Intuitively, Alice's forecast is cross-calibrated with Bob's forecast if, among all instances where Alice sends a particular distribution, subdividing those instances according to Bob's distribution does not change the correctness of Alice's probabilities.

\begin{definition}[Cross calibration] \label{def:cross_calibration}
    Let $\hat p$ and $\hat q$ be $\Delta(\cA)$-valued random variables. We say $\hat p$ is cross-calibrated with respect to $\hat q$ if, for every action $a\in\cA$,
    $\Pr(a^\star=a\mid \hat p,\hat q)=\hat p(a)$
    almost surely.
\end{definition}

Calibration requires an agent's probabilities to be correct in aggregate, while cross-calibration requires them to remain correct even after conditioning on another communicated forecast. This definition conditions only on the communicated probabilities, not on the other agent's private observation. 

Conversation calibration instantiates cross calibration in a communication protocol, requiring forecasts in every round to be cross-calibrated with respect to forecasts of the previous round.

\begin{definition}[Conversation calibration] \label{def:conversation_calibration}
    Following~\citet{collina2025tractable}, in a probability-communication protocol, the conversation is calibrated through round $K$ if the initial forecast $\hat p_1$ is calibrated and, for every round $2\leq k\leq K$, the round-$k$ forecast is cross-calibrated with respect to the previous round's forecast:
        $\Pr(a^\star=a\mid \hat p_k,\hat p_{k-1})
        =
        \hat p_k(a)
        \text{ for all } a\in\cA$ almost surely.
\end{definition}

\section{When do probabilities outperform actions?}\label{sec:prob-vs-action}

This section establishes when probability communication can be more useful than exchanging only actions. The gap appears when the best action is too coarse to reveal which evidence state produced it, which we demonstrate via a multi-round separation using a \textit{chained parities} task. A warm-up example for one-round, where two posterior states induce the same best action but different probabilities, is included in Appendix \ref{ap:warmup}.

The chained-parity task is intended as a stylized abstraction of general multi-round reasoning problems. In particular, many reasoning problems can be represented as a computation over a chain or graph of interdependent states: each state represents an intermediate reasoning step that incorporates new evidence towards solving the problem. This structure appears, for instance, in multi-hop or inductive reasoning, where intermediate facts are composed along a graph \citep{hotpotqa, musique, clutrr}; and in proof-style reasoning, where intermediate claims support later ones \citep{proofwriter, entailmenttree}. In the chained-parity task, each ``prefix parity" represents an intermediate state, and the next bit in a bitstring is the evidence needed to advance the chain (Figure \ref{fig:chained-parity-task}). We are interested in such problems in collaborative settings, where no single agent observes all information needed to perform the reasoning task, and each agent refines its prediction by using information supplied by the other. 

We will show that probability messages can preserve a sufficient summary of the accumulated state, allowing each round to incorporate one new piece of private information, whereas action messages can collapse intermediate states and thereby prevent later information from being aggregated.

\noindent\textbf{Chained-parity task.}
We now define the task. We consider a binary action space $\cA=\{0,1\}$. Our task is parameterized by the number of rounds $K\geq 1$; for each value of $K$, the joint information space consists of $K$ independent uniform bits $b_1,\ldots,b_K$ in $\{-1,+1\}$. Let $d_A:=\lceil K/2\rceil$ and $d_B:=\lfloor K/2\rfloor$. Alice and Bob observe the odd- and even-indexed bits as fixed-length vectors:
\begin{equation*}
    x_A := (b_1,b_3,\ldots,b_{2d_A-1})\in\{-1,+1\}^{d_A},
    \quad
    x_B := (b_2,b_4,\ldots,b_{2d_B})\in\{-1,+1\}^{d_B}.
\end{equation*}
\label{sec:chained-parities}
\ifarxiv
\begin{figure}[t]
    \centering
    \includegraphics[width=.35\linewidth]{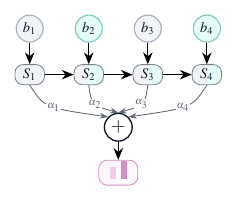}
    \caption{Chained-parity dependency structure.}
    \label{fig:chained-parity-task}
\end{figure}
\else
\begin{wrapfigure}{r}{0.3\textwidth}
    \centering
    \vspace{-1.5em}
    \includegraphics[width=\linewidth]{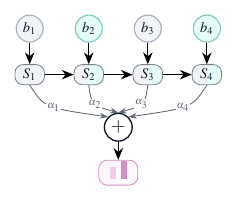}
    \caption{Chained-parity dependency structure.}
    \label{fig:chained-parity-task}
    \vspace{-2.0em}
\end{wrapfigure}
\fi
Define the \emph{prefix parity}, the $\pm 1$-valued product (equivalently, the XOR) of the first $j$ bits,
\begin{equation*}
    S_0:=1,
    \qquad
    S_j:=\prod_{\ell=1}^j b_\ell
    \quad \text{for } j=1,\ldots,K.
\end{equation*}
Conditional on $x=(x_A,x_B)$, the target action $a^\star$ is a Bernoulli random variable with parameter
\begin{equation*}
    p^\star := \frac{1}{2}+\sum_{j=1}^K \alpha_j S_j,
\end{equation*}
which we call the \emph{full-information posterior probability} of $a^\star=1$.  Throughout this section, $p^\star$ denotes this random variable, a deterministic function of the bit vector $b=(b_1,\ldots,b_K)$. The weights $\alpha_1,\ldots,\alpha_K$ are positive and satisfy $\sum_{j=1}^K\alpha_j<1/2$, which keeps $p^\star\in(0,1)$. Define $h^\star:=\mathbf 1\{p^\star\geq\tfrac12\}\in\{0,1\}$, the Bayes-optimal action under $0$--$1$ loss.  Because each term $\alpha_j S_j$ with $j\geq 2$ depends on \emph{both} agents' bits (Figure~\ref{fig:chained-parity-task}), no agent can evaluate $p^\star$ alone.

\noindent\textit{Communication protocol.} At round $k$, Alice speaks if $k$ is odd and Bob speaks if $k$ is even; write this speaker as $\sigma(k)\in\{A,B\}$. Thus the round-$k$ speaker observes $b_k$. When Alice and Bob communicate probabilities, define $p_0:=1/2$ and let the speaker's Bayes posterior probability that $a^\star=1$, given their own features and the previous probability message, be
\begin{equation*}
    p_k := \Pr\big(a^\star=1 \,\big|\, x_{\sigma(k)},\,p_{k-1}\big) \in [0,1].
\end{equation*}
The probability communicated is $\hat p_k=p_k$. When actions are communicated instead, let $\tilde p_k$ be the analogous Bayes posterior computed from the speaker's own features and the public action transcript so far; the speaker's  action that they communicate to the other player will then be their best response,
\begin{equation*}
    \hat a_k = \mathbf 1\{\tilde p_k\geq\tfrac{1}{2}\} \in \{0,1\}.
\end{equation*}
The distinction is that $p_k$ can preserve how strongly the speaker favors an action, while $\hat a_k$ keeps only the sign of that preference. Later, we also consider how information aggregation is affected when the communicated probability $\hat p_k$ is biased transformation of the Bayes posterior $p_k$. 

\paragraph{Information aggregation with probabilities.}

We now add the condition that makes the previous probability message a sufficient scalar summary of the parities revealed so far.
\begin{assumption}
\label{ass:coefficients}
Distinct bit prefixes $(S_1,\ldots,S_k)\in\{-1,+1\}^k$ yield distinct partial sums $\sum_{j\leq k}\alpha_j S_j$ for each $k\leq K$ (e.g.\ $\alpha_j=2^{-j-1}$).
\end{assumption}

The distinct-partial-sums condition ensures that a scalar probability message identifies the prefix parities already revealed. Therefore the previous probability message is not merely a confidence score; it is an invertible summary of the information accumulated so far. This lets the probability exchange advance one message at a time:
\begin{equation*}
    p_k=p_{k-1}+\alpha_k S_k
    =\frac12+\sum_{j\leq k}\alpha_j S_j.
\end{equation*}
By Assumption~\ref{ass:coefficients}, the previous scalar message $p_{k-1}$ identifies the prefix $(S_1,\ldots,S_{k-1})$; the round-$k$ speaker also observes $b_k$, and hence determines $S_k=S_{k-1}b_k$. All later parity terms contain an unseen bit and have conditional mean zero. Since $p_k$ is a Bayes posterior conditioned on the speaker's information and the previous probability message, the probabilities communicated satisfy conversation calibration in the sense of Definition~\ref{def:conversation_calibration}.

\begin{restatable}[Probability exchange aggregates information]{theorem}{chainedParityPred}
\label{thm:prediction-aggregates}
In the chained-parity model, under Assumption~\ref{ass:coefficients}, when probabilities are communicated, the round-$k$ message equals $p_k = \tfrac{1}{2}+\sum_{j\leq k}\alpha_j S_j$ for every $k\leq K$; in particular $p_K=p^\star$.  The round-$k$ excess squared loss $\mathbb E[(a^\star-p_k)^2]-\mathbb E[(a^\star-p^\star)^2]$ equals $\sum_{j>k}\alpha_j^2$ and decreases by $\alpha_k^2$ each round, so after $K$ rounds the agents attain the full-information Bayes risk $\mathbb E[(a^\star-p^\star)^2]$.
\end{restatable}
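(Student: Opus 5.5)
The plan is an induction on $k$ for the closed form of $p_k$, followed by an orthogonality computation for the excess loss.

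\emph{Base case.} Take $k=0$, where $p_0=\tfrac12$, matching the empty sum. The first round can also be checked directly: Alice sees $b_1=S_1$, and every later $S_j$ with $j\ge2$ contains some bit $b_2$ she has not seen.

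\emph{Inductive step.} Suppose $p_{k-1}=\tfrac12+\sum_{j<k}\alpha_jS_j$. By Assumption~\ref{ass:coefficients}, the map from prefixes $(S_1,\dots,S_{k-1})$ to partial sums is injective. Hence the $\sigma$-algebra generated by $p_{k-1}$ equals that generated by $(S_1,\dots,S_{k-1})$, which in turn equals that generated by $(b_1,\dots,b_{k-1})$, since $b_j=S_jS_{j-1}$.

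The round-$k$ speaker's information $(x_{\sigma(k)},p_{k-1})$ therefore generates the same $\sigma$-algebra as $b_1,\dots,b_k$ together with the speaker's own later bits $b_{k+2},b_{k+4},\dots$. This uses the fact that the speaker owns $b_k$, and that their earlier bits are already determined by the prefix.

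By the tower property, $p_k=\E[p^\star\mid\mathcal F_k]$, where $\mathcal F_k$ denotes this $\sigma$-algebra. We evaluate $p^\star$ term by term:
\begin{itemize}
\item For $j\le k$, $S_j$ is $\mathcal F_k$-measurable, so its term passes through unchanged.
\item For $j>k$, $S_j$ contains the factor $b_{k+1}$, which belongs to the other agent and is independent of $\mathcal F_k$. Thus $\E[S_j\mid\mathcal F_k]=0$.
\end{itemize}
This gives $p_k=\tfrac12+\sum_{j\le k}\alpha_jS_j$, and at $k=K$ we get $p_K=p^\star$.

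\emph{Excess loss.} Write $a^\star-p_k=(a^\star-p^\star)+(p^\star-p_k)$. Since $p_k$ is a function of $b$, we have $\E[a^\star-p^\star\mid b]=0$, so the cross term vanishes. Hence the excess loss equals $\E[(p^\star-p_k)^2]=\E[(\sum_{j>k}\alpha_jS_j)^2]$.

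For $i\ne j$, $S_iS_j$ is a product of a nonempty set of independent uniform signs, so $\E[S_iS_j]=0$. Also $S_j^2=1$. The expectation therefore equals $\sum_{j>k}\alpha_j^2$. Consecutive differences give a decrease of exactly $\alpha_k^2$ per round, and the excess vanishes at $k=K$.

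\emph{Main obstacle.} The only delicate point is the inductive step's claim that conditioning on the scalar $p_{k-1}$ is equivalent to conditioning on the whole prefix. The message is a random variable, not a revealed bit string, so the identification of $\sigma$-algebras must be stated carefully via injectivity from Assumption~\ref{ass:coefficients}. Once that is done, the rest is routine.
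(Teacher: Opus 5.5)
Your proposal is correct and follows essentially the same route as the paper. Both arguments induct on $k$, using Assumption~\ref{ass:coefficients} to recover the prefix from $p_{k-1}$ and killing every $S_j$ with $j>k$ through the unseen bit $b_{k+1}$; both then split $a^\star-p_k=(a^\star-p^\star)+(p^\star-p_k)$, drop the cross term by conditioning on $b$, and use orthogonality $\E[S_iS_j]=0$ for the excess loss. Your $\sigma$-algebra identification is a slightly more careful rendering of the paper's ``recover the prefix'' step. The one phrase to tighten is that $b_{k+1}$ must be independent of $\mathcal F_k$ jointly with the remaining bits of $S_j$, not just of $\mathcal F_k$; this holds here by mutual independence of the bits, and it is exactly what the paper's Lemma~\ref{lem:vanishing} records.
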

\textit{Proof sketch.}
The recurrence above gives $p_k=1/2+\sum_{j\leq k}\alpha_j S_j$ by induction, so $p_K=p^\star$. At any $k<K$,  $p_k$ differs from $p^\star$ by the unrevealed tail $\sum_{j>k}\alpha_j S_j$. Distinct parities are orthogonal, so the excess squared loss is $\mathbb E[(p^\star-p_k)^2]=\sum_{j>k}\alpha_j^2$, and the one-round improvement is $\alpha_k^2$.

\noindent\textbf{Information loss with actions.} By contrast, the action exchange can collapse all later evidence states to the same first action.
\begin{theorem}[Action exchange transmits only the first bit]
\label{thm:action-stalls}
In the chained-parity model, assume $K\geq 2$ and $\alpha_1>\alpha_2$. When actions are exchanged, every round-$k$ message equals $\mathbf 1\{b_1=+1\}$, so the induced decision is constant in $k$.  In particular, for $K=3$ and $\alpha=(0.20,0.15,0.10)$, this decision disagrees with the full-information Bayes action $h^\star$ on an event of probability $1/4$.
\end{theorem}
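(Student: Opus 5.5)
The argument is an induction on the round $k$. The inductive hypothesis is that the public action transcript through round $k-1$ equals $(\mathbf 1\{b_1=+1\},\ldots,\mathbf 1\{b_1=+1\})$. Consequently, the transcript is a deterministic function of $b_1$ alone.

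\textbf{Base case (round 1).} Alice's posterior given $x_A$ is computed by averaging $p^\star$ over Bob's bits. Every term $\alpha_jS_j$ with $j\geq 2$ contains at least one even-indexed bit that Alice does not see. Those bits are independent uniform $\pm1$ variables, so each such term has conditional mean zero. This gives $\tilde p_1=\tfrac12+\alpha_1 b_1$, and hence $\hat a_1=\mathbf 1\{b_1=+1\}$.

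\textbf{Inductive step (round $k\geq 2$).} By the hypothesis, conditioning on the transcript is the same as conditioning on $b_1$. Alice already knows $b_1$, so for her the transcript adds nothing. Bob learns $b_1$ from it. The round-$k$ speaker therefore conditions on $b_1$ together with their own bits.

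I then write $\tilde p_k=\tfrac12+\alpha_1 b_1+\sum_{j\geq 2}\alpha_j\,\E[S_j\mid \text{speaker info}]$. For Bob, the set of known bits is $\{b_1\}\cup\{\text{even bits}\}$. Since $b_3$ is unknown to him, $S_j$ has conditional mean zero for every $j\geq 3$. The term $S_2=b_1b_2$ is known exactly, so $\tilde p_k=\tfrac12+\alpha_1b_1+\alpha_2b_1b_2$. For Alice, $b_2$ is unknown, so every $S_j$ with $j\geq 2$ averages to zero and $\tilde p_k=\tfrac12+\alpha_1b_1$.

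In both cases $\alpha_1>\alpha_2$ forces $\operatorname{sign}(\tilde p_k-\tfrac12)=b_1$, and $\tilde p_k\neq\tfrac12$, so there is no tie. Thus $\hat a_k=\mathbf 1\{b_1=+1\}$, which closes the induction. The main subtlety is verifying that the transcript carries no information beyond $b_1$. The induction gives exactly this, because each message is a deterministic function of $b_1$, so conditioning on it is equivalent to conditioning on $\sigma(b_1)$.

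\textbf{The $K=3$ example.} Take $\alpha=(0.20,0.15,0.10)$. Then $p^\star-\tfrac12=0.2S_1+0.15S_2+0.1S_3$, where $(S_1,S_2,S_3)$ is uniform on $\{\pm1\}^3$ because $b\mapsto S$ is a bijection. The induced decision is $\mathbf 1\{S_1=+1\}$. It disagrees with $h^\star$ exactly when the sign of the sum differs from $S_1$, which happens precisely when $S_2=S_3=-S_1$, since $0.15+0.1>0.2$ while any other configuration keeps the sign of $S_1$. This occurs for 2 of the 8 sign patterns, giving probability $1/4$. Because the sum never equals zero, no tie-breaking convention affects the count.
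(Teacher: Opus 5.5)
Your proof is correct and follows the paper's argument: an induction showing that every action message equals $\mathbf 1\{b_1=+1\}$, so the transcript reveals exactly $b_1$; Alice's posterior is then $\tfrac12+\alpha_1 b_1$ and Bob's is $\tfrac12+b_1(\alpha_1+\alpha_2 b_2)$, whose sign is fixed by $\alpha_1>\alpha_2$. For the $K=3$ example, the paper only exhibits the event $\{b_2=-1,b_3=+1\}$ on which the decision disagrees with $h^\star$ (concluding probability at least $1/4$), whereas your enumeration of the eight sign patterns of $(S_1,S_2,S_3)$ shows this is exactly the disagreement event, which is a slightly sharper and cleaner finish.
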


\ifarxiv
\begin{wrapfigure}{r}{0.36\textwidth}
    \centering
    \vspace{-10pt}
    \includegraphics[width=0.36\textwidth]{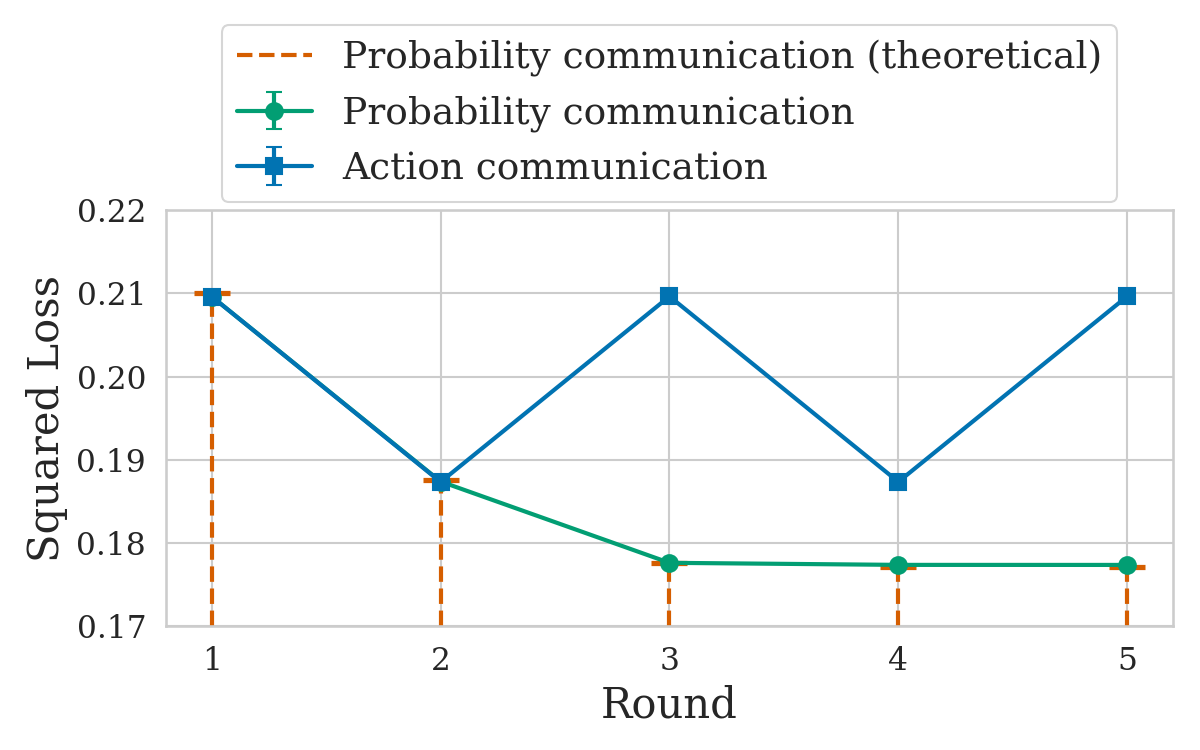}
    \caption{Squared loss of  action vs probability exchange over $5$ rounds on chained parities. Learned probability communication attains the theoretical optimum.}
    \label{fig:parities-sq-err}
    \vspace{-12pt}
\end{wrapfigure}
\else
\begin{wrapfigure}{r}{0.36\textwidth}
    \centering
    \vspace{-20pt}
    \includegraphics[width=0.36\textwidth]{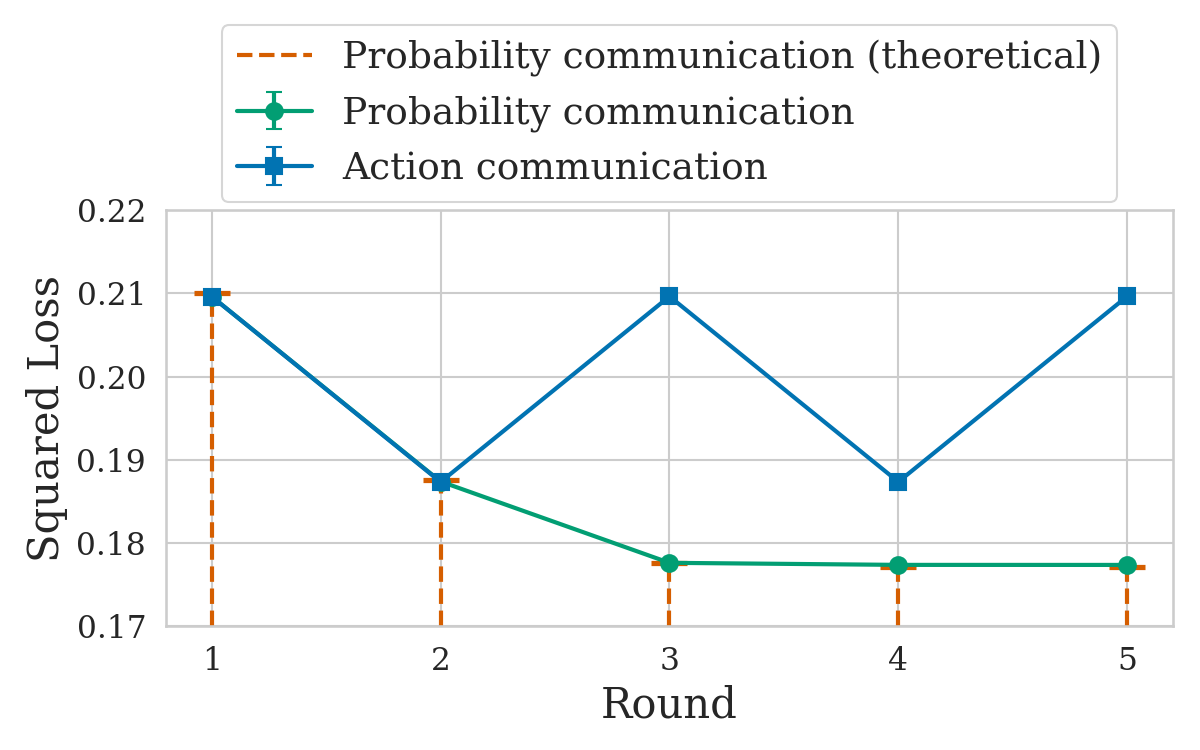}
    \caption{Squared loss of  action vs probability exchange over $K=5$ rounds on chained parities. Learned probability communication attains the theoretical optimum.}
    \label{fig:parities-sq-err}
    \vspace{-20pt}
\end{wrapfigure}
\fi
\textit{Proof sketch.} The action message keeps only the sign of the speaker's current posterior and discards the magnitude. Alice's first posterior is $1/2+\alpha_1 b_1$, so her action message is $\mathbf 1\{b_1=+1\}$. If Alice speaks later, she still lacks $b_2$, so her posterior again has sign $b_1$. If Bob speaks, he can use $b_2$, but later terms average out and his posterior is $1/2+\alpha_1 b_1+\alpha_2 b_1b_2=1/2+b_1(\alpha_1+\alpha_2 b_2)$. Since $\alpha_1>\alpha_2$, this posterior also has sign $b_1$. Thus every action message repeats the first bit and no later bit is transmitted. For the displayed $K=3$ coefficients, on the event $\{b_2=-1,b_3=+1\}$, the full posterior satisfies $p^\star-1/2=b_1(\alpha_1-\alpha_2-\alpha_3)<0$, so $h^\star$ is the opposite of the action-channel decision; this event has probability $1/4$.

\noindent\textbf{Empirical validation.} We validate this separation with trained transformers on the same chained-parity task (Appendix~\ref{app:exp_details} for details). Figure~\ref{fig:parities-sq-err} shows learned probability messages track the theoretical loss curve, while learned action messages stall as predicted.

\section{When do probability reports need calibration?}\label{sec:calibration}

Theorem~\ref{thm:prediction-aggregates} assumes the round-$k$ speaker transmits its Bayes posterior $p_k$ exactly. In practice, reported probabilities may be distorted.
One way to model this failure is to combine two deviations that are well-documented in LLMs: global miscalibration of confidence \citep{zhao2021calibrate,xiong2024can}, where stated probabilities are systematically too extreme or too mild, and \emph{sycophancy} \citep{perez2023discovering,sharma2024towards}, where an agent's report is pulled toward the previous speaker's report regardless of its own evidence. We analyze a single round of such distortion: the previous agent is calibrated and sends the ideal $p_{k-1}$, while the round-$k$ agent outputs a biased report,
\begin{equation}\label{eq:biased-report}
    \hat p_k = \tfrac{1}{2}+c\big(p_k-\tfrac{1}{2}\big)+\lambda\big(p_{k-1}-\tfrac{1}{2}\big),
\end{equation}
with $(c,\lambda)$ chosen so that $\hat p_k\in[0,1]$.  The parameter $c>0$ controls the confidence scale ($c=1$ is calibrated, $c>1$ overconfident, $c<1$ underconfident), and $\lambda\in\mathbb R$ controls the strength of sycophantic anchoring to the previous report. This illustrates the broader problem: raw probability reports need not be reliable objects for the next agent to update on.
To recover calibrated messages, we can learn the conditional label distribution from the communicated probabilities themselves.

\begin{theorem}[Post-hoc calibration recovers the prediction guarantee]
\label{thm:cross-cal-recovery}
In the chained parity model, suppose the previous message is the ideal posterior $p_{k-1}$. For any $c>0$ and $\lambda\in\mathbb R$ so that $\hat p_k\in[0,1]$, let $Z=(\hat p_k,p_{k-1})$. If the predictor class contains the conditional label distribution, then training a predictor $h$ to predict $a^\star$ from $Z$ using a strictly proper loss yields, at the population optimum, the one-step post-hoc forecast
\[
    h^\star(\hat p_k,p_{k-1}) := \Pr(a^\star=1\mid \hat p_k,p_{k-1})
\]
which recovers the posterior: $h^\star(\hat p_k,p_{k-1})=p_k$. Moreover, the learned forecast is conversation-calibrated. Applied inductively, with each previous calibrated message equal to the corresponding ideal posterior, this recovers the guarantee of Theorem~\ref{thm:prediction-aggregates} for any $(c,\lambda)$.
\end{theorem}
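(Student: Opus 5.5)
The plan is to show that $Z=(\hat p_k,p_{k-1})$ carries exactly the same information as $(p_k,p_{k-1})$, and then invoke the standard fact that the population minimizer of a strictly proper loss is the conditional probability.

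\textbf{Step 1: invert the distortion.} By \eqref{eq:biased-report}, the map $(p_k,p_{k-1})\mapsto(\hat p_k,p_{k-1})$ is affine. Since $c>0$, it is invertible:
\[
p_k=\tfrac12+\tfrac1c\big(\hat p_k-\tfrac12-\lambda(p_{k-1}-\tfrac12)\big).
\]
Hence $\sigma(Z)=\sigma(p_k,p_{k-1})$. This holds for any $\lambda\in\mathbb R$, because $p_{k-1}$ is itself observed in $Z$.

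\textbf{Step 2: identify the conditional law.} By Theorem~\ref{thm:prediction-aggregates}, $p_k=\tfrac12+\sum_{j\le k}\alpha_jS_j$ and $p_{k-1}=\tfrac12+\sum_{j\le k-1}\alpha_j S_j$. Under Assumption~\ref{ass:coefficients}, the pair $(p_k,p_{k-1})$ (indeed $p_k$ alone) determines the prefix $(S_1,\dots,S_k)$. We then compute
\[
\Pr(a^\star=1\mid S_1,\dots,S_k)=\E[p^\star\mid S_{1:k}]=\tfrac12+\sum_{j\le k}\alpha_jS_j+\sum_{j>k}\alpha_j\,\E[S_j\mid S_{1:k}].
\]
For $j>k$ we have $S_j=S_k\,b_{k+1}\cdots b_j$, and the bits $b_{k+1},\dots,b_j$ are independent of $S_{1:k}$ with mean zero. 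So every tail term vanishes, and the conditional probability equals $p_k$. Combining with Step 1 gives $h^\star(\hat p_k,p_{k-1})=p_k$.

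\textbf{Step 3: strictly proper loss and calibration.} Strict propriety means that, pointwise in $Z$, the expected loss $\E[\ell(h(Z),a^\star)\mid Z]$ is uniquely minimized at $h(Z)=\Pr(a^\star=1\mid Z)$. Since the class contains this function, the population optimum equals $h^\star$ almost surely.

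For conversation calibration, we need $\Pr(a^\star=1\mid h^\star,p_{k-1})=h^\star$. Since $h^\star=p_k$, this is $\Pr(a^\star=1\mid p_k,p_{k-1})=p_k$, which is exactly the computation in Step 2. (Calibration of $p_1$ is the $k=1$ case, where the conditioning on $p_0$ is trivial.)

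\textbf{Step 4: induction.} If every earlier message has been replaced by its corrected version, which equals the ideal posterior, then at round $k$ the hypothesis "previous message is $p_{k-1}$" holds. So the corrected round-$k$ message is $p_k$. Induction over $k$ then reproduces all conclusions of Theorem~\ref{thm:prediction-aggregates}, including the excess-loss formula $\sum_{j>k}\alpha_j^2$.

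\textbf{Main obstacle.} The only delicate point is the information-equivalence in Steps 1–2. We must be sure that conditioning on $Z$ is the same as conditioning on the prefix parities, rather than merely on the speaker's private view. This rests on $c\neq 0$ together with Assumption~\ref{ass:coefficients}, which makes $p_k$ an injective function of the prefix. Independence of the later bits then removes the tail.
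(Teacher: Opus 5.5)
Your proposal is correct and follows essentially the same route as the paper. You invert the affine distortion using $c>0$ and the observed $p_{k-1}$, remove the unrevealed tail by independence of the later bits (the paper's Lemma~\ref{lem:vanishing}), and use strict propriety to identify the population optimum with $\Pr(a^\star=1\mid Z)=p_k$; the only cosmetic difference is that you check cross-calibration directly from $\sigma(h^\star,p_{k-1})=\sigma(Z)$, whereas the paper derives it from a general tower-property observation valid for any $Z$ containing the previous forecast.
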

\textit{Proof sketch.} A strictly proper loss is minimized at the true conditional label distribution, so the population optimum satisfies $h^\star(Z)=\Pr(a^\star=1\mid Z)$. Since $c>0$, observing $(\hat p_k,p_{k-1})$ in~\eqref{eq:biased-report} determines $p_k$; the unrevealed tail still has conditional mean zero given these messages. Therefore $\Pr(a^\star=1\mid \hat p_k,p_{k-1})=p_k$. Finally, because $Z$ includes the previous forecast, the tower property gives $\Pr(a^\star=1\mid h^\star(Z),p_{k-1})=h^\star(Z),$
so the learned forecast is cross-calibrated with respect to $p_{k-1}$. Full details are in Appendix~\ref{app:calibration}.

The affine form in~\eqref{eq:biased-report} is only a convenient example. The same conclusion holds for any distortion $\hat p_k=F(p_k,p_{k-1})$ such that, after fixing $p_{k-1}$, the map $p_k\mapsto F(p_k,p_{k-1})$ is invertible on the possible round-$k$ posteriors.
The gap from calibration is already visible in the first two rounds: if a receiver treats a biased report as a valid posterior and projects it to the nearest possible message, the first bit can be decoded correctly while the second bit may be flipped by confidence or anchoring bias. Appendix~\ref{app:calibration} formalizes this.

\textbf{A post-hoc calibration algorithm.}
The above observation suggests a natural training procedure for collaborative agents that communicate probabilistic predictions. In each round, the active agent produces a raw forecast from its own observation and the previous message. A post-hoc map then recalibrates this forecast using labelled transcripts and the previous message as inputs. This yields an iterative procedure in which agents alternately improve their predictive accuracy while aligning their communicated probabilities through conversation calibration. We summarize this procedure in Algorithm~\ref{alg:cross_calibrated_conversation}.


\begin{algorithm}[H]
\caption{post-hoc calibration for probability communication}
\label{alg:cross_calibrated_conversation}
\begin{algorithmic}[1]
\Require Data $\{(x_i^A,x_i^B,a_i^\star)\}_{i=1}^n$, rounds $K$, base protocols $\tilde \pi_k$ producing raw forecasts, strictly proper loss $\ell$
\State Initialize $\hat p_{0,i}=\emptyset$ for all $i=1,\ldots,n$
\For{$k=1,\ldots,K$}
    \State Let $s=\sigma(k)$ be the round-$k$ speaker
    \State For each example $i=1,...,n$, run the base protocol to obtain raw forecasts $\tilde p_{k,i}
        =
        \tilde \pi_k(x_i^s,\hat p_{k-1,i}).$
    \State Fit a post-hoc calibration map using only the raw forecast and previous calibrated forecast:
    \[
        g_k
        \in
        \arg\min_{g\in\mathcal G_k}
        \sum_{i=1}^n
        \ell\!\left(g(\tilde p_{k,i},\hat p_{k-1,i}),a_i^\star\right).
    \]
    \State For each example $i=1,...,n$, set the calibrated round-$k$ message $\hat p_{k,i}=g_k(\tilde p_{k,i},\hat p_{k-1,i}).$
\EndFor
\State \Return Calibrators $g_1,\ldots,g_K$
\end{algorithmic}
\end{algorithm}

\section{Maze experiments with learned and pretrained agents}\label{sec:maze}

We next move beyond chained parities to test these communication principles on a more natural decision problem: collaborative maze-solving. The task, proposed by \citet{davidson2025collaboration} to test collaborative capabilities of language models, preserves the same core structure: agents have partial information and must aggregate information over multiple rounds before choosing an action. However, the underlying object of uncertainty is now much more complex: agents must form and update beliefs about the correct next move by reasoning over partially observed paths. This setting therefore lets us test whether the lessons from the chained parities example extend to richer collaborative problems.

\begin{figure}[t]
    \centering
    \begin{minipage}[t]{0.41\textwidth}
        \centering
        \includegraphics[width=\linewidth,trim={0cm 0cm 0cm 0cm},clip]{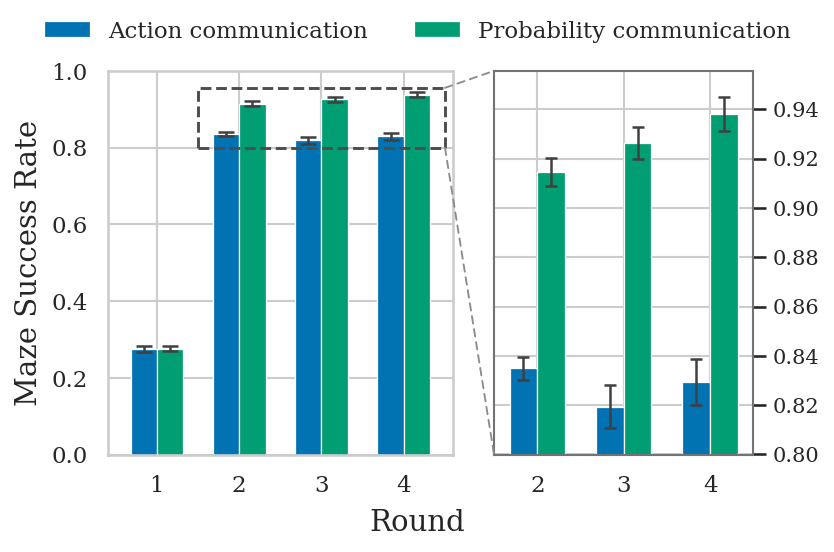}
        \caption{\textbf{(R1)}
        Action performance stagnates; communicating probabilities increases performance gradually.
        }
        \label{fig:maze-success}
    \end{minipage}
    \hfill
    \begin{minipage}[t]{0.55\textwidth}
        \centering
        \includegraphics[width=\linewidth,trim={0cm 0cm 0cm 0cm},clip]{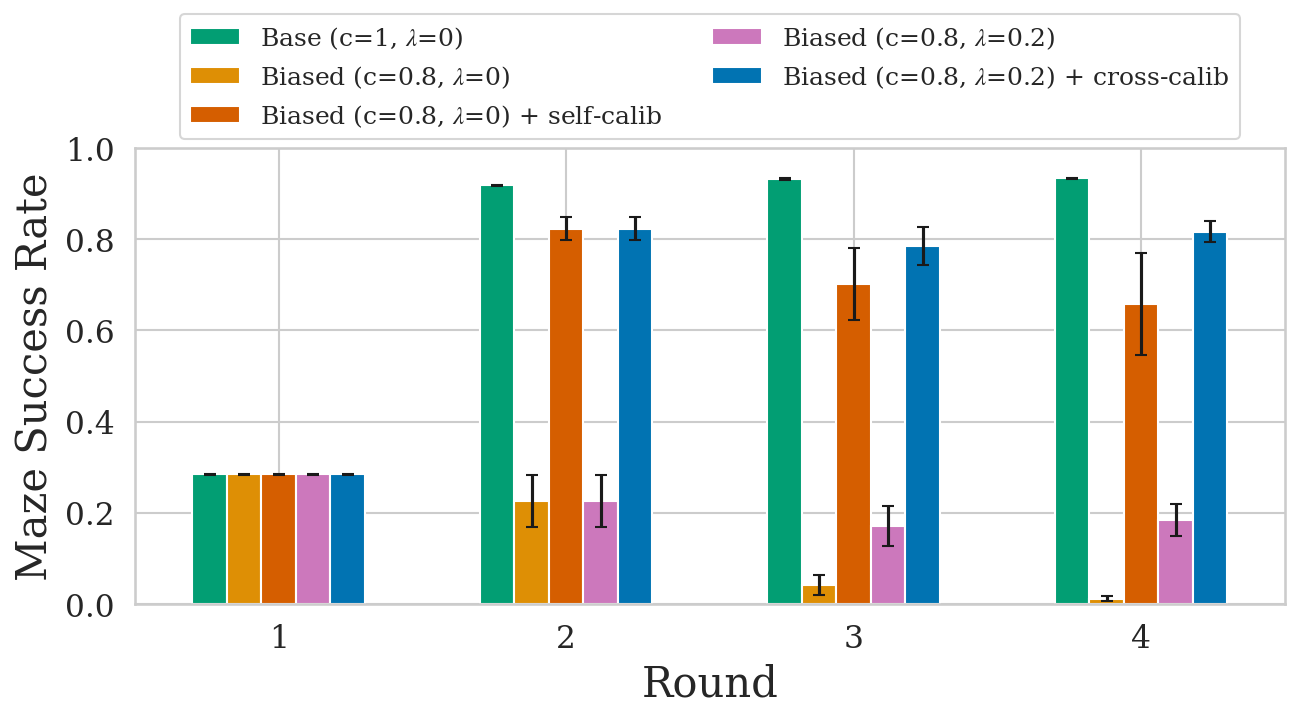}
        \caption{ \textbf{(R2)}
        Global miscalibration and sycophantic anchoring cause performance to drop over rounds, while post-hoc recalibration partially recovers performance under self-miscalibration and cross-calibration recovers nearly all performance when both biases are present.}
        \label{fig:posthoc-maze-success}
    \end{minipage}
\end{figure}

\noindent\textbf{Maze task.} The maze-solving task consists of a dataset of $N\times N$ mazes, each with a fixed start state, fixed end state, and cells denoting either a path or a wall, see Figure \ref{fig:communication-protocols}. 
Mazes are sampled according to a wall-density parameter, which controls the expected fraction of cells that are walls. In our collaborative setup, each agent observes only a partial map of the maze with half of the cells hidden. Together, the two agents' maps recover the full maze. Our experimental setup follows the primary setting studied by \citet{davidson2025collaboration}. We use $N=6$, a wall density of 30\%. \rebuttal{The start state is in the top left corner and the end state in the bottom right corner, which creates a directional bias where some actions are more likely than others.} Unlike \citet{davidson2025collaboration}, we restrict attention to mazes with a single correct solution for ease of evaluation. An example maze (and the corresponding two views) are presented in Figure~\ref{fig:maze-example}.
\subsection{Controlled transformer experiments}
\label{sec:maze-transformers}
We first study the problem in a controlled setup, where, as in Section~\ref{sec:chained-parities}, we train a sequence of transformers that communicate across rounds. 

\noindent\textbf{Transformer setup.} We train small transformers on a supervised next-move prediction task. Each example is represented as a partial-view maze string with the @ symbol denoting the current position, along with any previously communicated message. The target label is the next move along the solution path. Full experimental details are in Appendix~\ref{app:exp_details}. We report maze success rate, defined as the fraction of mazes for which the model predicts every move in the solution path correctly.

\noindent\textbf{(R1) Probability messages improve controlled maze-solving.}  Figure~\ref{fig:maze-success} shows that trained models under probability communication indeed outperform trained models under action communication. Moreover, it shows that over four rounds of communication, maze success rate improves consistently when agents communicate probabilities; this is not the case for action communication. In Appendix \ref{app:hard-examples} we give examples of mazes where communicating probabilities over multiple rounds converge to the correct answer.

\noindent\textbf{(R2) Calibration repairs biased transformer messages.}
Next, we study the two biases that conversation calibration is designed to mitigate when agents communicate probabilities: global miscalibration and sycophantic anchoring. Rather than communicating the transformer probabilities directly, we first distort them according to Equation~\ref{eq:biased-report}, and then optionally apply a post-hoc debiasing protocol. The debiasing model is a single-layer network trained with cross-entropy loss against the true label. We compare self-calibration, which uses only the current transformer's probabilities, with conversation calibration, which uses both the current and previous round's probabilities.

The unbiased probability baseline improves monotonically over four rounds; see Figure~\ref{fig:posthoc-maze-success}. In contrast, global miscalibration alone and global miscalibration combined with anchoring cause performance to collapse and deteriorate over rounds. Self-calibration recovers a substantial fraction of the lost success under global miscalibration, but its imperfect corrections still accumulate through the conversation. When both biases are present, conversation calibration recovers nearly all of the baseline performance. Thus, cross-calibrating against the previous round's probabilities can recover approximately unbiased messages and substantially improve downstream maze success.

\subsection{Pretrained LLM experiments}
\label{sec:llm}

We now test if the same principle holds for large-scale, pretrained language models on the maze task.

\noindent\textbf{LLM setup.} We use the maze task described in Section \ref{sec:maze} and evaluate four models of varying strengths: GPT-4.1-mini, GPT-5, GPT-5.4-nano, and GPT-5.4-mini. We allow models to generate reasoning (up to 2048 tokens). We consider three communication protocols: actions, verbalized probabilities, or sampled probabilities. In the \textit{action} exchange protocol, each agent communicates the action chosen in its response. In the \textit{verbalized probability} exchange protocol, each agent communicates the probability vector reported in its response. In the \textit{sampled probability} exchange protocol, we sample $M$ independent responses from the model, and the message communicated is the empirical distribution over the answers. For both the verbalized and sampled probability exchange protocol, we measure the accuracy of the answer assigned the highest probability value. Full prompts are in Appendix~\ref{app:llm_exp_details}. \rebuttal{All results are over 30 fixed mazes with 10 rollouts per maze. For maze success rates, we report means and $90\%$ bootstrapped confidence intervals over all rollouts, computed as a pooled variance across mazes. We compute conversation calibration losses separately for each of the 10 rollouts, again pooling moves from all 30 mazes, and report the mean and  $90\%$ bootstrapped confidence intervals across rollouts.}

\begin{figure}[!t]
    \centering

    \begin{subfigure}[t]{\textwidth}
        \centering

        \begin{subfigure}[t]{0.6\textwidth}
            \centering
            \vspace{0pt}
            \includegraphics[width=\linewidth]{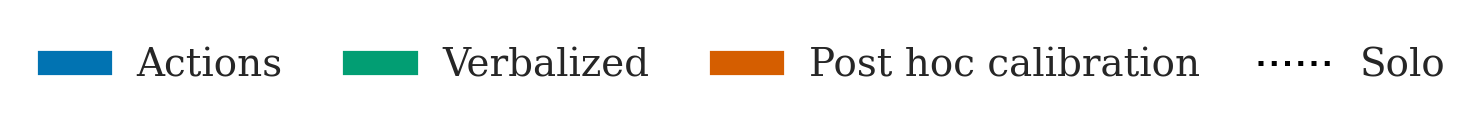}
        \end{subfigure}

        \par\vspace{-0.8em}

        \begin{subfigure}[t]{0.48\textwidth}
            \centering
            \vspace{0pt}
            \includegraphics[width=\linewidth]{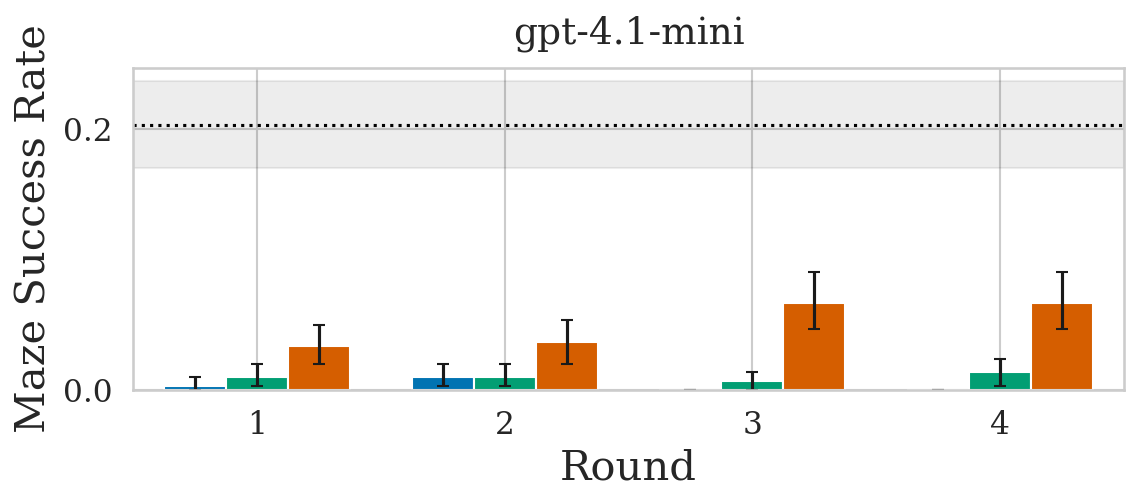}
        \end{subfigure}%
        \hspace{0.02\textwidth}%
        \begin{subfigure}[t]{0.45\textwidth}
            \centering
            \vspace{0pt}
            \includegraphics[width=\linewidth, trim={1cm 0 0 0}, clip]{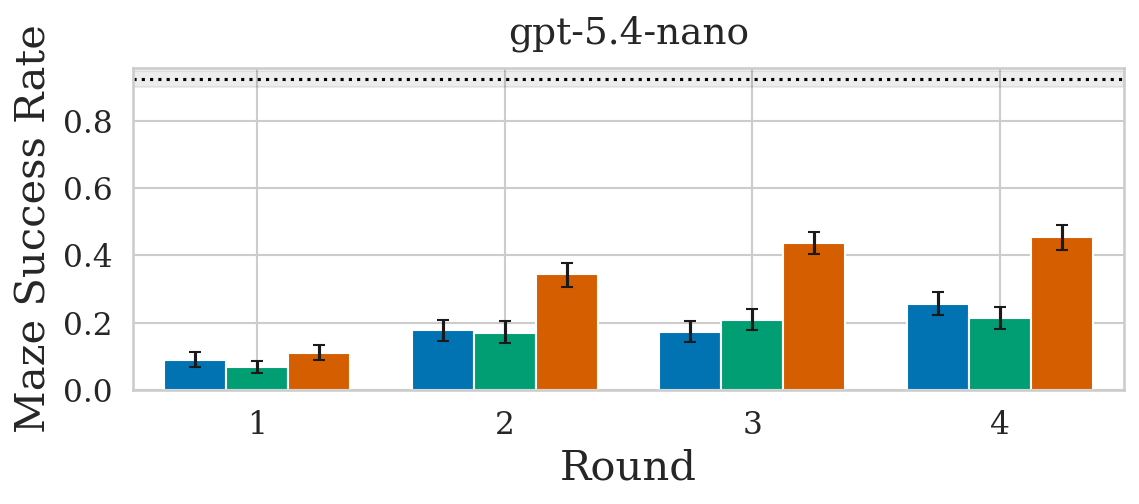}
        \end{subfigure}

        \ifarxiv
        \par\vspace{-1.5em}
        \else
        \par\vspace{-2.3em}
        \fi

        \begin{subfigure}[t]{0.48\textwidth}
            \centering
            \vspace{0pt}
            \includegraphics[width=\linewidth]{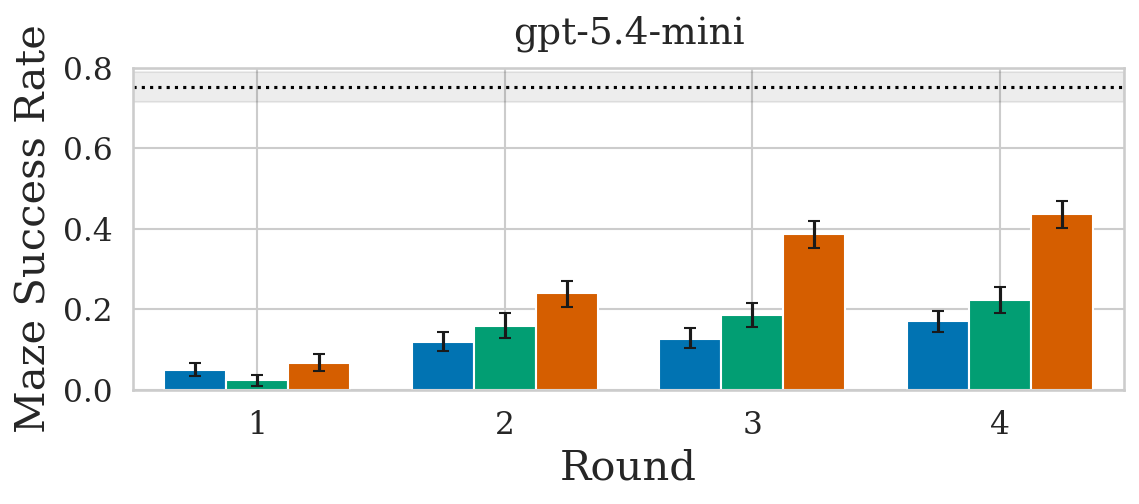}
        \end{subfigure}%
        \hspace{0.02\textwidth}%
        \begin{subfigure}[t]{0.45\textwidth}
            \centering
            \vspace{0pt}
            \includegraphics[width=\linewidth, trim={1cm 0 0 0}, clip]{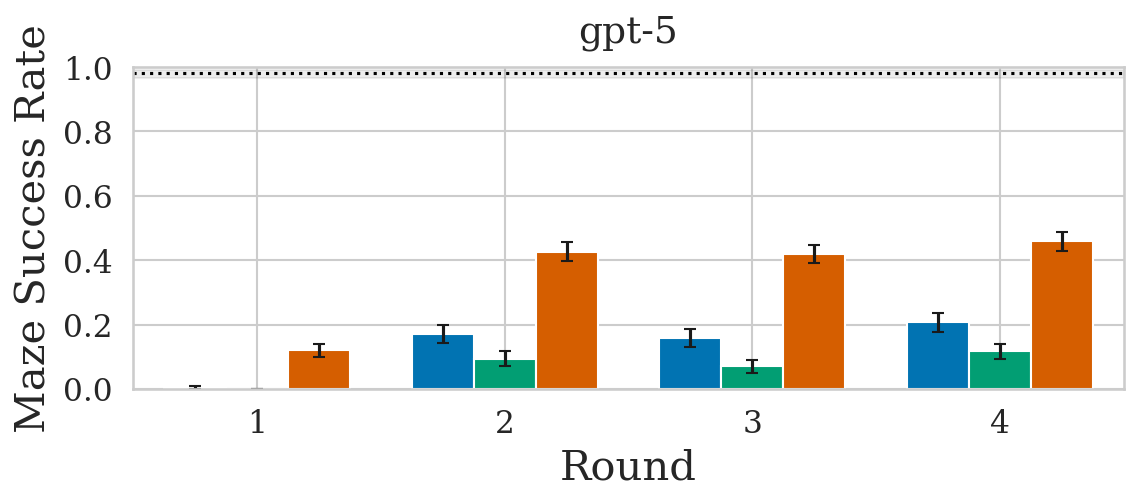}
        \end{subfigure}

        \caption{Maze success rate.}
        \label{fig:compare-comm-panel}
    \end{subfigure}


    \begin{subfigure}[t]{\textwidth}
        \centering

        \begin{subfigure}[t]{0.38\textwidth}
            \centering
            \vspace{0pt}
            \includegraphics[width=\linewidth]{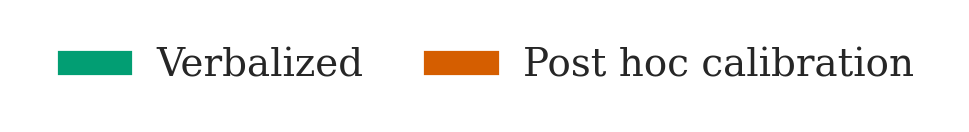}
        \end{subfigure}

        \par\vspace{-0.8em}

        \begin{subfigure}[t]{0.38\textwidth}
            \centering
            \vspace{0pt}
            \includegraphics[width=\linewidth, trim={0cm 1cm 0 0}, clip]{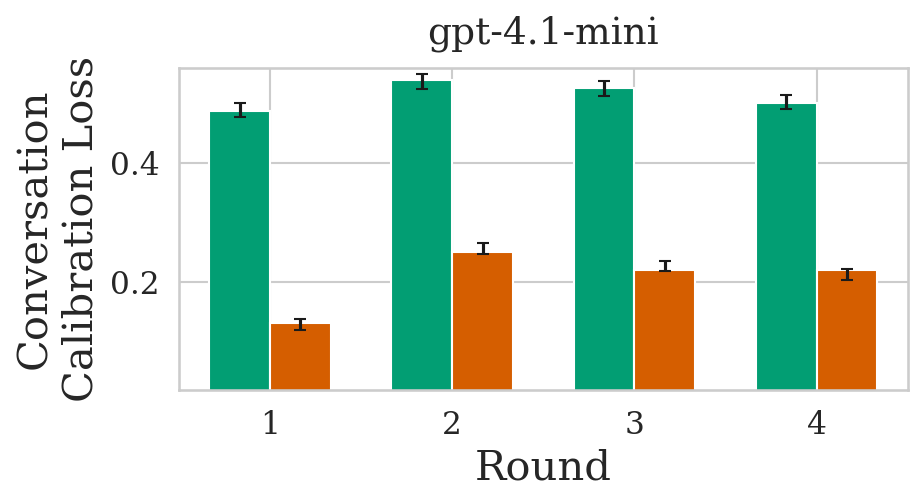}
        \end{subfigure}%
        \hspace{0.1\textwidth}%
        \begin{subfigure}[t]{0.34\textwidth}
            \centering
            \vspace{0pt}
            \includegraphics[width=\linewidth, trim={1.8cm 0.1cm 0 0}, clip]{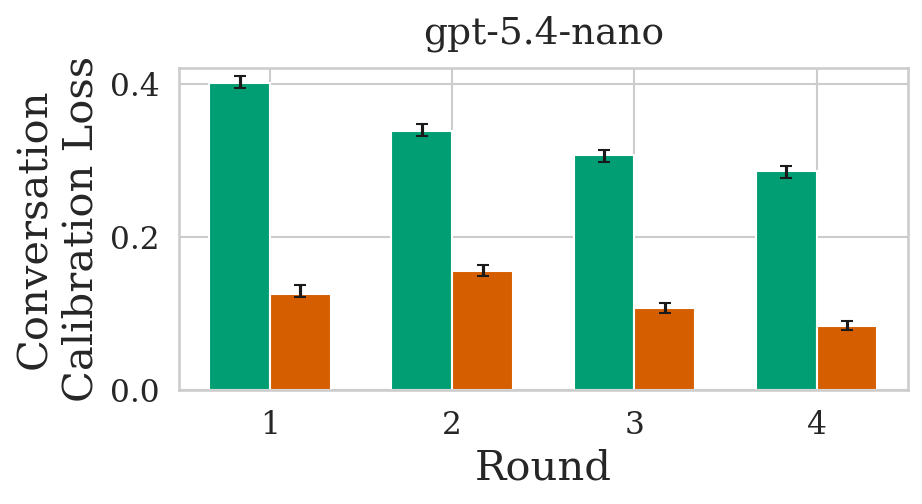}
        \end{subfigure}

        \par\vspace{-1.4em}

        \begin{subfigure}[t]{0.38\textwidth}
            \centering
            \vspace{0pt}
            \includegraphics[width=\linewidth, trim={0cm 0 0 0}, clip]{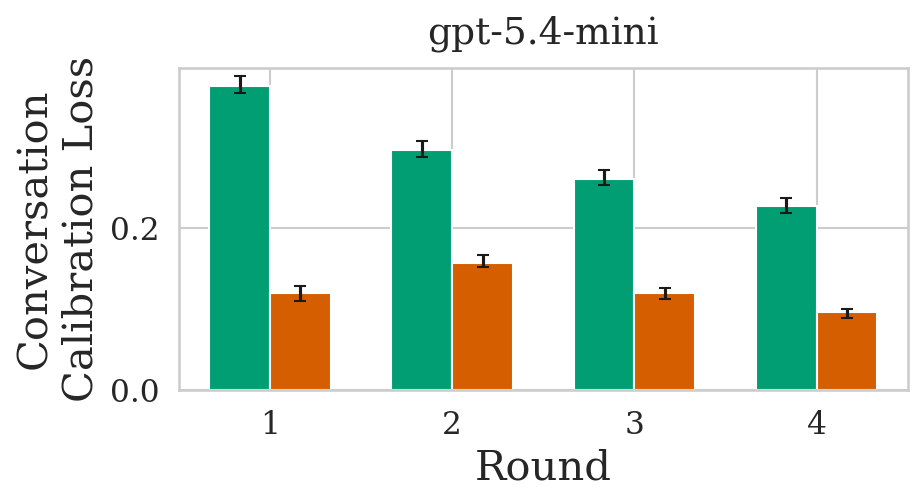}
        \end{subfigure}%
        \hspace{0.1\textwidth}%
        \begin{subfigure}[t]{0.34\textwidth}
            \centering
            \vspace{0pt}
            \includegraphics[width=\linewidth, trim={1.8cm 0.1cm 0 0}, clip]{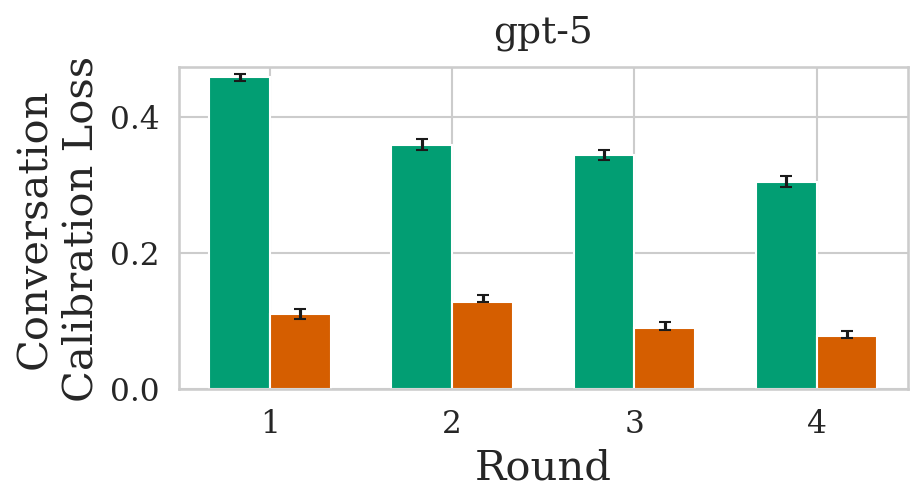}
        \end{subfigure}

        \caption{Conversation calibration loss.}
        \label{fig:cal-panel}
    \end{subfigure}

    \par\vspace{-0.5em}

    \caption{\textbf{(R3)} We train one-hidden-layer MLP calibrators according to Algorithm \ref{alg:cross_calibrated_conversation} on a set of 100 mazes and report outcomes on a validation set of 30 mazes with 10 rollouts each. We report the success rate (top) as well as the calibration loss (bottom). For success rate, we also include the full-information solo baseline. Bars that do not appear indicate zero success. Post-hoc calibration of verbalized probabilities increases success and lowers conversation calibration loss. Details of conversation calibration loss computation is in Appendix \ref{app:llm_exp_details}.}
    \label{fig:compare-comm-cal}
\end{figure}

\noindent\textbf{(R3) Calibration makes verbalized LLM probabilities useful.} In our first experiment, we highlight the value of conversation calibration for verbalized probabilities. Recent work has shown that the probabilities verbalized by LLMs often fail to faithfully reflect the models' underlying uncertainty~\citep{xiong2024can, kapoor2024large}. To illustrate the consequences of this miscalibration in a collaborative setting, we directly compare action-based and probability-based communication using prompted LLMs. We then apply post-hoc conversation calibration to the verbalized probabilities to test whether the lost information can be recovered through simple debiasing.

In Figure~\ref{fig:compare-comm-panel}, we report the success rates of solving the mazes across multiple models and communication modes. We find that verbalized probability exchange performs similarly to, or worse than, action exchange for GPT-5.4-nano, GPT-5.4-mini, and GPT-5, and yields only a small improvement for GPT-4.1-mini. Single verbal reports of uncertainty therefore do not appear to faithfully represent the model's uncertainty in a form that improves collaboration. Applying Algorithm~\ref{alg:cross_calibrated_conversation} to verbalized probabilities consistently improves performance across all models. In particular, mean maze success rates at least double after four rounds of communication relative to the base verbalized probability protocol. These gains suggest that communicated uncertainty can support collaboration when the exchanged probabilities are first adjusted to account for model-specific miscalibration.

To illustrate that this improved performance is in fact correlated with (conversation) calibration, we show the corresponding calibration losses of the verbalized probabilities before and after post-processing in Figure~\ref{fig:cal-panel}. In the first round, we report self calibration loss and in consecutive rounds we report conversation calibration loss. For all four models, post-hoc calibration leads to a significant decrease in calibration loss, supporting our hypothesis that conversation calibration is in fact a driving factor in improving communication in our setting.

\rebuttal{
We note that final performance is far below what can be achieved by training a next-move predictor on the maze data, as shown in the previous section. This gap can be attributed to three factors. First, the LLMs were not explicitly trained on mazes like ours, and so their prior knowledge this task is limited. Second, the calibration procedure does not help the model re-interpret information: it is trained only on the communicated probabilities and therefore does not introduce new knowledge specific to each maze that is absent from what is communicated. 
If the verbalized probabilities carry little information, calibration cannot alleviate that. 
Third, the LLM must interpret probabilities in-context, and it may not reason reliably about uncertainty information. Whereas prior work has highlighted the benefits of calibration in a single-agent setting~\citep{zhao2021calibrate}, our results point to the need for improved uncertainty calibration in multi-agent contexts.
}

\begin{figure}[t]
    \centering

    \begin{subfigure}{0.69\textwidth}
        \centering
        \includegraphics[width=\linewidth]{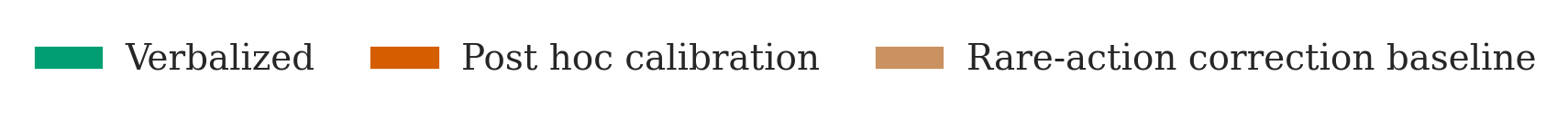}
    \end{subfigure}

    \begin{subfigure}{0.48\textwidth}
        \centering
        \includegraphics[width=\linewidth]{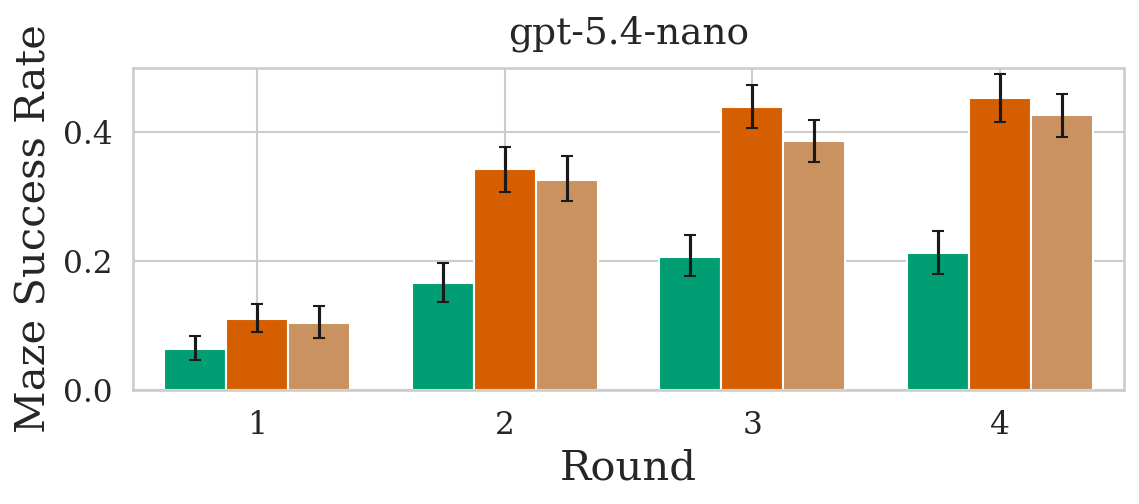}
        \label{fig:grid1}
    \end{subfigure}
    \hspace{0.01\textwidth}%
    \begin{subfigure}{0.45\textwidth}
        \centering
        \includegraphics[width=\linewidth, trim={1cm 0 0 0 }, clip]{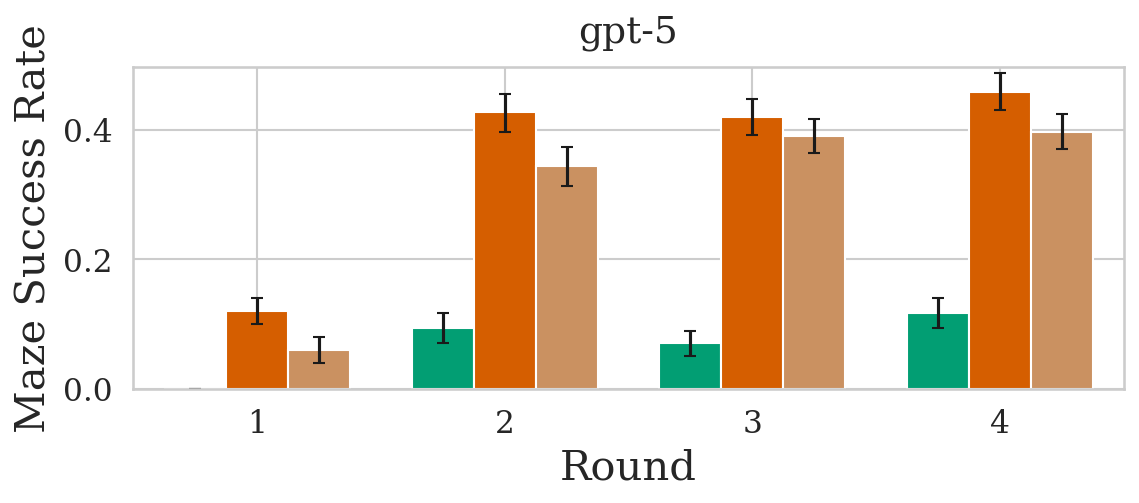}
        \label{fig:grid2}
    \end{subfigure}

    \caption{\rebuttal{\textbf{(R4)} Post-hoc calibration can learn corrections beyond the marginal action frequencies. On GPT-5.4-nano, the calibrator performs similarly to a naive baseline that replaces verbalized predictions favoring rare actions with the marginal action distribution. On GPT-5, the calibrator outperforms this baseline after multiple rounds of communication.}}
    \label{fig:naive-baseline}
\end{figure}

\rebuttal{\noindent\textbf{(R4) What does the post-hoc calibrator learn?} Next, to better understand where the calibrator's gains come from, we investigate what the post-hoc calibration procedure learns, and in particular whether it learns anything beyond the marginal action frequencies. Because our mazes start from the top left, the correct action is down or right 97\% of the time (with roughly equal frequencies on down and right), while left or up only 3\% of the time. Since the calibrator is trained on the distribution of correct actions, it could exploit this imbalance by simply learning a correction that suppresses predictions on the rare actions. We test this hypothesis with a naive rare-action correction baseline. Whenever the model’s verbalized probability vector assigns the highest probability to up or left, we replace it with the vector assigning probability 0.5 to each of down and right; otherwise, we leave the prediction unchanged. Note that this baseline uses only the marginal action frequencies and does not depend on the models' reported probabilities beyond the argmax.}

\rebuttal{Figure \ref{fig:naive-baseline} shows the results. 
For GPT-5.4-nano, the naive baseline performs similarly to the calibrator, indicating that much of the calibrator's gains can be explained by correcting the mode bias. 
For GPT-5, the calibrator slightly outperforms the naive baseline. 
This suggests that the calibration procedure learns corrections that cannot simply be reduced to suppressing rare actions and that LLMs can sometimes leverage nuanced probabilities to improve downstream performance.
Further work on analyzing and improving how LLMs output and interpret calibrated probabilities offers the opportunity to unlock the full benefits of conversation calibration.
}


\noindent\textbf{(R5) Sampled probabilities only partially elicit model uncertainty.}
Having established that verbalized probabilities are unreliable, we next compare against a stronger elicitation baseline: empirical probabilities from multiple sampled rollouts. This protocol estimates the model's action distribution directly from repeated responses, but is more expensive than a single verbalized probability report. We compare sampled against verbalized-probability exchange as well as post-hoc calibration.

\begin{figure}[t]
    \centering

    \begin{subfigure}{0.5\textwidth}
        \centering
        \includegraphics[width=\linewidth]{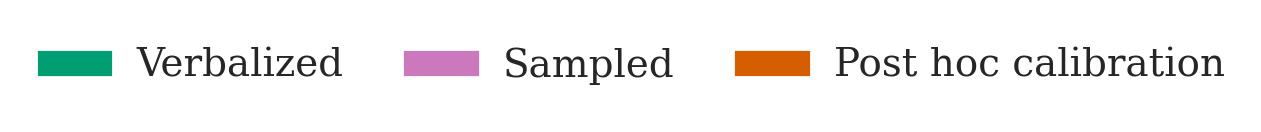}
    \end{subfigure}

    \begin{subfigure}{0.48\textwidth}
        \centering
        \includegraphics[width=\linewidth]{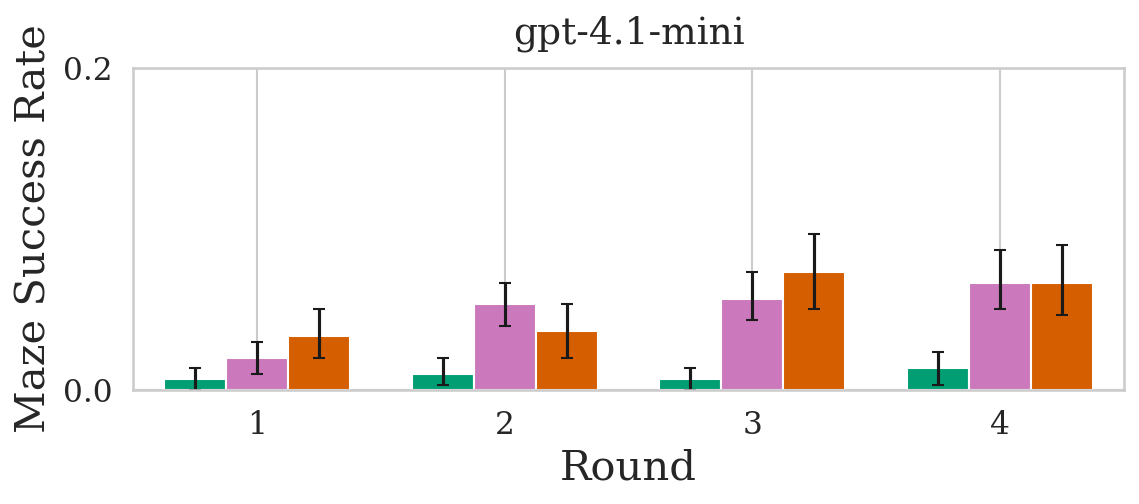}
        \label{fig:grid1}
    \end{subfigure}
    \hspace{0.01\textwidth}%
    \begin{subfigure}{0.45\textwidth}
        \centering
        \includegraphics[width=\linewidth, trim={1cm 0 0 0 }, clip]{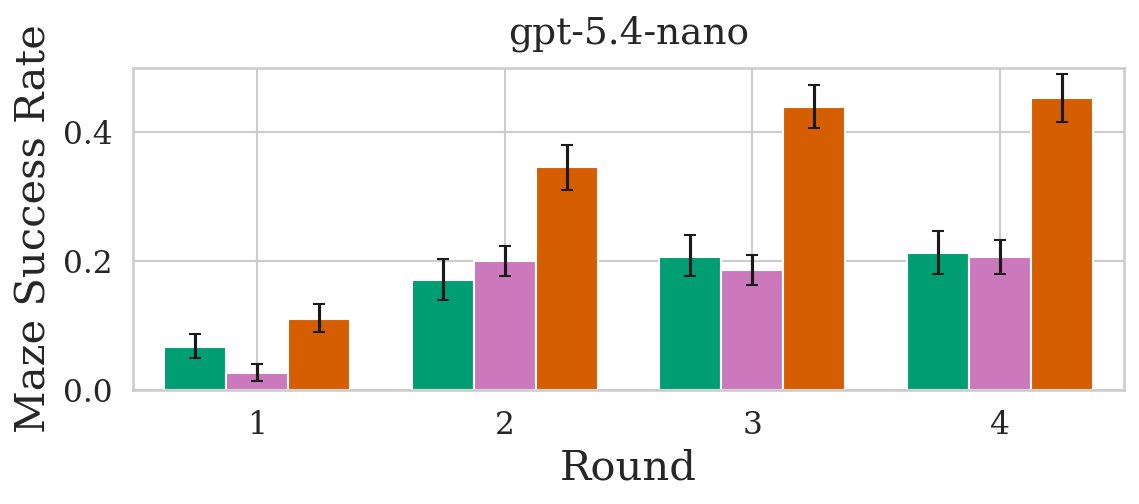}
        \label{fig:grid2}
    \end{subfigure}

    \caption{\textbf{(R5)} Our post-hoc calibration procedure matches or outperforms sampled probabilities constructed from $M=10$ sampled responses.}
    \label{fig:sampled}
\end{figure}

\rebuttal{Figure~\ref{fig:sampled} shows that on GPT-4.1-mini, sampled probability exchange improves over verbalized probability exchange and is within variance of our post-hoc calibration approach. On GPT-5.4-nano, sampled probability exchange performs similarly to verbalized probability exchange. Here, post-hoc calibration consistently achieves higher mean success rates. These results suggest that this simple post-hoc procedure can compete with a sampling-based uncertainty estimate at a fraction of the cost, since it only requires one rollout.}

\begin{figure}[t]
    \centering

    \begin{subfigure}{0.8\textwidth}
        \centering
        \includegraphics[width=\linewidth]{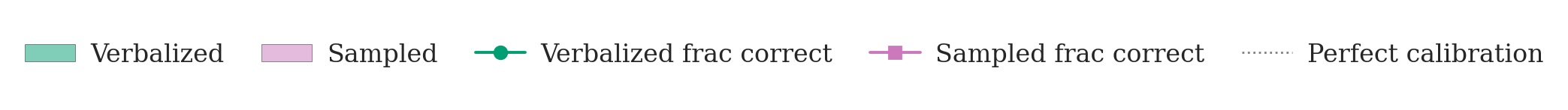}
    \end{subfigure}

    \begin{subfigure}{0.45\textwidth}
        \centering
        \includegraphics[width=\linewidth]{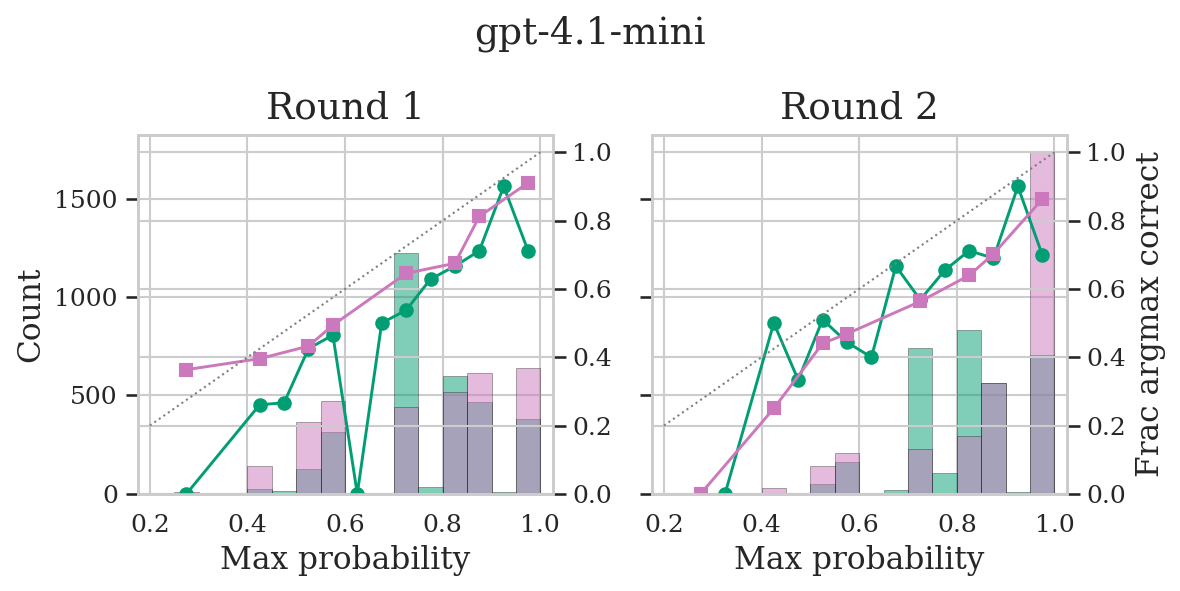}
        \label{fig:grid1}
    \end{subfigure}
    \hfill
    \begin{subfigure}{0.45\textwidth}
        \centering
        \includegraphics[width=\linewidth]{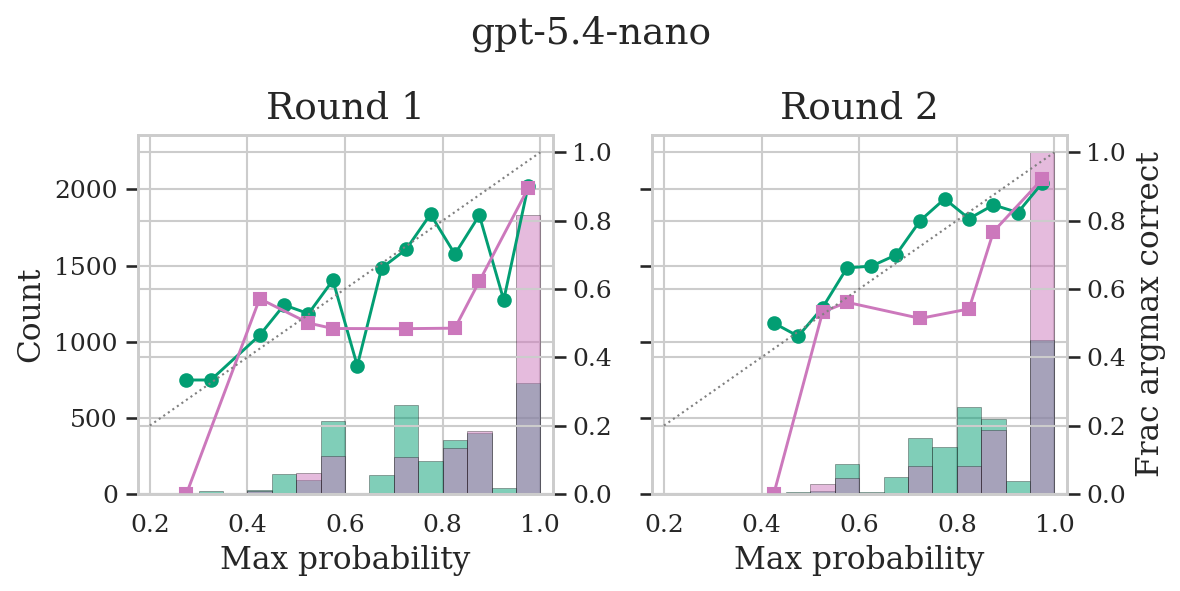}
        \label{fig:grid2}
    \end{subfigure}

    \caption{\textbf{(R6)} Histogram of maximum assigned probabilities and corresponding argmax-action accuracy under verbalized and sampled probability exchange. For GPT-4.1-mini, sampled probabilities are better calibrated than verbalized probabilities and lead to stronger downstream performance. For GPT-5.4-nano, sampled probabilities are more overconfident, while verbalized probabilities are comparatively better calibrated.}
    \label{fig:histogram}
\end{figure}

\noindent\textbf{(R6) Calibration explains when sampled probabilities help.} GPT-4.1-mini behaves as expected: it performs poorly with verbalized probabilities but improves with sampled probabilities. For GPT-5.4-nano, however, sampled probabilities are on par with verbalized probabilities. This might be surprising at first glance, since we would expect the sampled probability measure to perform better according to our hypothesis. To understand why this might happen, we dig deeper into the distribution of both the verbalized and sampled probabilities in Figure~\ref{fig:histogram}. \rebuttal{We plot a histogram of maximum probability values under the verbalized and sampled probability exchange protocols, together with the accuracy of the corresponding argmax action.
For GPT-4.1-mini, the verbalized and sampled top-choice confidences are similarly miscalibrated across most bins in both rounds $1$ and $2$. However, in round $2$, sampled probabilities more often assign a maximum probability of $1$, and these predictions are reasonably well-calibrated. By contrast, verbalized probabilities continue to be overconfident. As a result, sampled probabilities achieve higher accuracy in round $2$.
For GPT-5.4-nano, we observe a similar pattern in the $1.0$ probability bin, but sampled probabilities do not outperform verbalized probabilities overall. This can be attributed to the fact that, outside the $1.0$ probability bin, the sampled probabilities are highly overconfident.}
This contrast further highlights the close relationship between calibration and downstream performance. The behavior of both models is consistent with broader trends in LLMs. As model capability improves, LLMs appear to become better at expressing uncertainty through verbalized probabilities~\citep{xiong2024can}, which may help explain GPT-5.4-nano’s improved verbal calibration. One possible explanation is that GPT-5.4-nano’s sampled rollouts reflect stronger commitment due to more alignment training such as RLHF. This is consistent with prior work linking RLHF-style alignment to sharpened output distributions and increased overconfidence~\citep{leng2025taming}.

\begin{figure}[t]
    \centering

    \begin{subfigure}{0.4\textwidth}
        \centering
        \includegraphics[width=\linewidth]{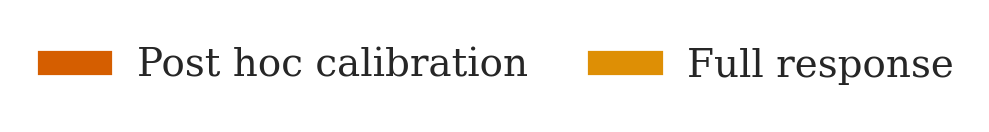}
    \end{subfigure}

    \begin{subfigure}{0.38\textwidth}
        \centering
        \includegraphics[width=\linewidth]{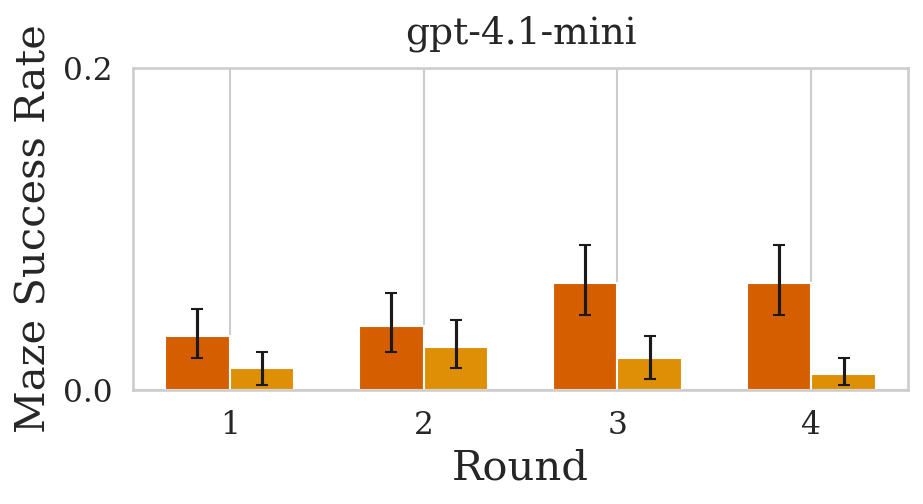}
    \end{subfigure}
    \hspace{0.01\textwidth}%
    \begin{subfigure}{0.35\textwidth}
        \centering
        \includegraphics[width=\linewidth, trim={1cm 0 0 0 }, clip]{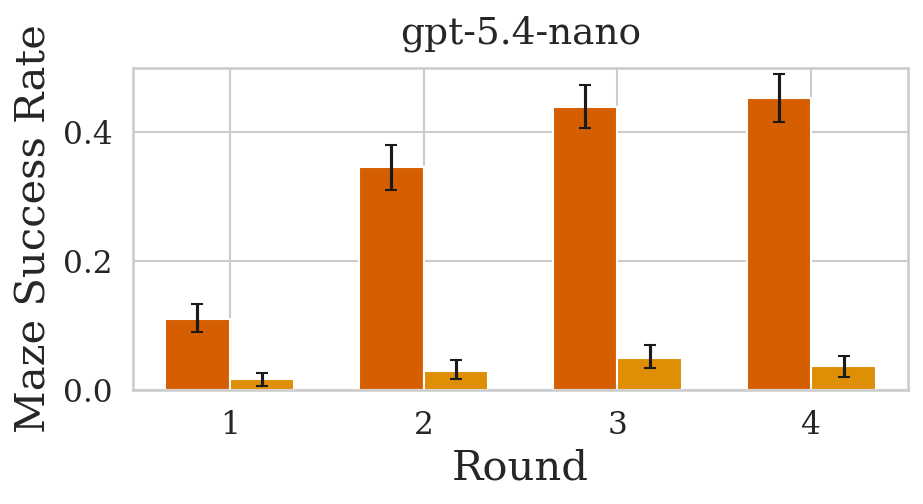}
    \end{subfigure}

    \caption{\textbf{(R7)} Communicating more information does not always help. Solely communicating probabilities produced by our post-hoc calibration procedure leads to higher success than communicating the full response.}
    \label{fig:full-response}
\end{figure}

\noindent\textbf{(R7) Communicating more information does not necessarily improve collaboration.} Finally, using GPT-4.1-mini and GPT-5.4-nano, we compare against an unconstrained communication baseline, in which each agent receives the previous agent's full response, including any generated reasoning. This is the setup studied by \citet{davidson2025collaboration}. For GPT-4.1-mini, we use the same prompt as in our other experiments, which asks the model for its uncertainty; therefore the communicated full response includes both the model's reasoning \textit{and} verbalized probabilities. For GPT-5.4-nano, we slightly modify the prompt to also explicitly have the model tell us what to communicate to the other agent, as otherwise it's reasoning is hidden inside the API. We provide the modified prompt in Appendix~\ref{app:exp_details}. The results are found in Figure~\ref{fig:full-response}.

For both models, we see that unrestricted communication performs poorly as its success rate across all rounds is close to zero. 
This failure is unlikely to reflect a basic inability to parse or reason about any maze structure the communicated message may contain, since GPT-5.4-nano solves more than $85\%$ of the mazes when given the full maze directly (Figure~\ref{fig:compare-comm-cal}).
Rather, this experiment demonstrates that simply providing the next agent with \textit{more} information does not necessarily improve collaboration, and probability exchange can be more effective between agents than unconstrained natural-language communication, especially when long or noisy contexts make full-response communication difficult to use.

\ifarxiv
\subsection*{Acknowledgements}
The authors gratefully acknowledge support from a Google PhD Fellowship, the Simons Collaboration on Algorithmic Fairness, an Amazon Research Award, and the NSF EnCore Tripods Institute. SG was additionally supported by NSF CAREER \#2543725. MH was partially supported by DARPA grant \#HR00112420305, the U.S. National Science Foundation under Cooperative Agreement No. 2433450. 
Any opinions, findings and conclusions or recommendations expressed in this material are those of the author(s) and do not necessarily reflect the views of the U.S. army or the U.S. National Science Foundation.
MH is also a Novo Nordisk Fellow supported by the MIT-Novo Nordisk
Artificial Intelligence Postdoctoral Fellows Program. 
\fi

\bibliographystyle{plainnat}
\bibliography{NeurIPS26/updated_updated_plots}

\begin{thebibliography}{46}
\providecommand{\natexlab}[1]{#1}
\providecommand{\url}[1]{\texttt{#1}}
\expandafter\ifx\csname urlstyle\endcsname\relax
  \providecommand{\doi}[1]{doi: #1}\else
  \providecommand{\doi}{doi: \begingroup \urlstyle{rm}\Url}\fi

\bibitem[Aaronson(2005)]{aaronson2004complexity}
Scott Aaronson.
\newblock The complexity of agreement.
\newblock In \emph{Proceedings of the thirty-seventh annual ACM symposium on Theory of computing}, pages 634--643, 2005.

\bibitem[Aumann(1976)]{aumann1976}
Robert~J. Aumann.
\newblock {Agreeing to Disagree}.
\newblock \emph{The Annals of Statistics}, 4\penalty0 (6):\penalty0 1236 -- 1239, 1976.
\newblock \doi{10.1214/aos/1176343654}.
\newblock URL \url{https://doi.org/10.1214/aos/1176343654}.

\bibitem[Aumann and Hart(2006)]{aumann2006agreeing}
Robert~J Aumann and Sergiu Hart.
\newblock Agreeing on decisions.
\newblock \emph{Unpublished manuscript, The Einstein Institute of Mathematics, Jerusalem, Israel, http://math. huji. ac. il/\~{} hart/papers/agree. pdf (Last accessed: 17/04/2015)}, 2006.

\bibitem[Cemri et~al.(2025)Cemri, Pan, Yang, Agrawal, Chopra, Tiwari, Keutzer, Parameswaran, Klein, Ramchandran, Zaharia, Gonzalez, and Stoica]{cemri2026why}
Mert Cemri, Melissa~Z Pan, Shuyi Yang, Lakshya~A Agrawal, Bhavya Chopra, Rishabh Tiwari, Kurt Keutzer, Aditya Parameswaran, Dan Klein, Kannan Ramchandran, Matei Zaharia, Joseph~E. Gonzalez, and Ion Stoica.
\newblock Why do multi-agent {LLM} systems fail?
\newblock In \emph{The Thirty-ninth Annual Conference on Neural Information Processing Systems Datasets and Benchmarks Track}, 2025.

\bibitem[Chan et~al.(2024)Chan, Chen, Su, Yu, Xue, Zhang, Fu, and Liu]{chan2024chateval}
Chi-Min Chan, Weize Chen, Yusheng Su, Jianxuan Yu, Wei Xue, Shanghang Zhang, Jie Fu, and Zhiyuan Liu.
\newblock {ChatEval}: Towards better {LLM}-based evaluators through multi-agent debate.
\newblock In \emph{The Twelfth International Conference on Learning Representations}, 2024.
\newblock URL \url{https://openreview.net/forum?id=FQepisCUWu}.

\bibitem[Chen et~al.(2024)Chen, Saha, and Bansal]{chen-etal-2024-reconcile}
Justin Chen, Swarnadeep Saha, and Mohit Bansal.
\newblock {R}e{C}oncile: Round-table conference improves reasoning via consensus among diverse {LLM}s.
\newblock In Lun-Wei Ku, Andre Martins, and Vivek Srikumar, editors, \emph{Proceedings of the 62nd Annual Meeting of the Association for Computational Linguistics}, pages 7066--7085, Bangkok, Thailand, August 2024. Association for Computational Linguistics.
\newblock \doi{10.18653/v1/2024.acl-long.381}.
\newblock URL \url{https://aclanthology.org/2024.acl-long.381/}.

\bibitem[Collina et~al.(2025)Collina, Goel, Gupta, and Roth]{collina2025tractable}
Natalie Collina, Surbhi Goel, Varun Gupta, and Aaron Roth.
\newblock Tractable agreement protocols.
\newblock In \emph{Proceedings of the 57th Annual ACM Symposium on Theory of Computing}, pages 1532--1543, 2025.

\bibitem[Collina et~al.(2026)Collina, Globus-Harris, Goel, Gupta, Roth, and Shi]{collina2026collaborative}
Natalie Collina, Ira Globus-Harris, Surbhi Goel, Varun Gupta, Aaron Roth, and Mirah Shi.
\newblock Collaborative prediction: Tractable information aggregation via agreement.
\newblock In \emph{Proceedings of the 2026 Annual ACM-SIAM Symposium on Discrete Algorithms (SODA)}, pages 4712--4798. SIAM, 2026.

\bibitem[Dalvi et~al.(2021)Dalvi, Jansen, Tafjord, Xie, Smith, Pipatanangkura, and Clark]{entailmenttree}
Bhavana Dalvi, Peter Jansen, Oyvind Tafjord, Zhengnan Xie, Hannah Smith, Leighanna Pipatanangkura, and Peter Clark.
\newblock Explaining answers with entailment trees.
\newblock In Marie-Francine Moens, Xuanjing Huang, Lucia Specia, and Scott Wen-tau Yih, editors, \emph{Proceedings of the 2021 Conference on Empirical Methods in Natural Language Processing}, pages 7358--7370, Online and Punta Cana, Dominican Republic, November 2021. Association for Computational Linguistics.
\newblock \doi{10.18653/v1/2021.emnlp-main.585}.
\newblock URL \url{https://aclanthology.org/2021.emnlp-main.585/}.

\bibitem[Das et~al.(2019)Das, Gervet, Romoff, Batra, Parikh, Rabbat, and Pineau]{pmlr-v97-das19a}
Abhishek Das, Th{\'e}ophile Gervet, Joshua Romoff, Dhruv Batra, Devi Parikh, Mike Rabbat, and Joelle Pineau.
\newblock {T}ar{MAC}: Targeted multi-agent communication.
\newblock In Kamalika Chaudhuri and Ruslan Salakhutdinov, editors, \emph{Proceedings of the 36th International Conference on Machine Learning}, volume~97 of \emph{Proceedings of Machine Learning Research}, pages 1538--1546. PMLR, 09--15 Jun 2019.
\newblock URL \url{https://proceedings.mlr.press/v97/das19a.html}.

\bibitem[Davidson et~al.(2025)Davidson, Fourney, Amershi, West, Horvitz, and Kamar]{davidson2025collaboration}
Tim~R Davidson, Adam Fourney, Saleema Amershi, Robert West, Eric Horvitz, and Ece Kamar.
\newblock The collaboration gap.
\newblock \emph{arXiv preprint arXiv:2511.02687}, 2025.

\bibitem[Dawid(1982)]{dawid1982well}
A.~P. Dawid.
\newblock The well-calibrated bayesian.
\newblock \emph{Journal of the American Statistical Association}, 77\penalty0 (379):\penalty0 605--610, 1982.
\newblock \doi{10.1080/01621459.1982.10477856}.

\bibitem[Desai and Durrett(2020)]{desai-durrett-2020-calibration}
Shrey Desai and Greg Durrett.
\newblock Calibration of pre-trained transformers.
\newblock In Bonnie Webber, Trevor Cohn, Yulan He, and Yang Liu, editors, \emph{Proceedings of the 2020 Conference on Empirical Methods in Natural Language Processing}, pages 295--302, Online, November 2020. Association for Computational Linguistics.
\newblock \doi{10.18653/v1/2020.emnlp-main.21}.
\newblock URL \url{https://aclanthology.org/2020.emnlp-main.21/}.

\bibitem[Du et~al.(2024)Du, Li, Torralba, Tenenbaum, and Mordatch]{pmlr-v235-du24e}
Yilun Du, Shuang Li, Antonio Torralba, Joshua~B. Tenenbaum, and Igor Mordatch.
\newblock Improving factuality and reasoning in language models through multiagent debate.
\newblock In Ruslan Salakhutdinov, Zico Kolter, Katherine Heller, Adrian Weller, Nuria Oliver, Jonathan Scarlett, and Felix Berkenkamp, editors, \emph{Proceedings of the 41st International Conference on Machine Learning}, volume 235 of \emph{Proceedings of Machine Learning Research}, pages 11733--11763. PMLR, 21--27 Jul 2024.
\newblock URL \url{https://proceedings.mlr.press/v235/du24e.html}.

\bibitem[Eisenstein et~al.(2026)Eisenstein, Huot, Fisch, Berant, and Lapata]{eisenstein2026mtpingevalevaluatingmultiturncollaboration}
Jacob Eisenstein, Fantine Huot, Adam Fisch, Jonathan Berant, and Mirella Lapata.
\newblock Mt-pingeval: Evaluating multi-turn collaboration with private information games, 2026.
\newblock URL \url{https://arxiv.org/abs/2602.24188}.

\bibitem[Farquhar et~al.(2024)Farquhar, Kossen, Kuhn, and Gal]{farquhar2024detecting}
Sebastian Farquhar, Jannik Kossen, Lorenz Kuhn, and Yarin Gal.
\newblock Detecting hallucinations in large language models using semantic entropy.
\newblock \emph{Nature}, 630\penalty0 (8017):\penalty0 625--630, 2024.
\newblock \doi{10.1038/s41586-024-07421-0}.
\newblock URL \url{https://www.nature.com/articles/s41586-024-07421-0}.

\bibitem[Foerster et~al.(2016)Foerster, Assael, de~Freitas, and Whiteson]{foerster2016learning}
Jakob~N. Foerster, Ioannis~Alexandros Assael, Nando de~Freitas, and Shimon Whiteson.
\newblock Learning to communicate with deep multi-agent reinforcement learning.
\newblock In \emph{Advances in Neural Information Processing Systems}, volume~29, 2016.
\newblock URL \url{https://proceedings.neurips.cc/paper/2016/hash/c7635bfd99248a2cdef8249ef7bfbef4-Abstract.html}.

\bibitem[Foster and Vohra(1998)]{foster1998asymptotic}
Dean~P. Foster and Rakesh~V. Vohra.
\newblock Asymptotic calibration.
\newblock \emph{Biometrika}, 85\penalty0 (2):\penalty0 379--390, 1998.
\newblock \doi{10.1093/biomet/85.2.379}.

\bibitem[Frongillo et~al.(2023)Frongillo, Neyman, and Waggoner]{frongillo2023agreement}
Rafael Frongillo, Eric Neyman, and Bo~Waggoner.
\newblock Agreement implies accuracy for substitutable signals.
\newblock In \emph{Proceedings of the 24th ACM Conference on Economics and Computation}, pages 702--733, 2023.

\bibitem[Geanakoplos and Polemarchakis(1982)]{geanakoplos1982we}
John~D Geanakoplos and Heraklis~M Polemarchakis.
\newblock We can't disagree forever.
\newblock \emph{Journal of Economic theory}, 28\penalty0 (1):\penalty0 192--200, 1982.

\bibitem[Guo et~al.(2017)Guo, Pleiss, Sun, and Weinberger]{pmlr-v70-guo17a}
Chuan Guo, Geoff Pleiss, Yu~Sun, and Kilian~Q. Weinberger.
\newblock On calibration of modern neural networks.
\newblock In Doina Precup and Yee~Whye Teh, editors, \emph{Proceedings of the 34th International Conference on Machine Learning}, volume~70 of \emph{Proceedings of Machine Learning Research}, pages 1321--1330. PMLR, 06--11 Aug 2017.
\newblock URL \url{https://proceedings.mlr.press/v70/guo17a.html}.

\bibitem[Guo et~al.(2024)Guo, Chen, Wang, Chang, Pei, Chawla, Wiest, and Zhang]{ijcai2024p890}
Taicheng Guo, Xiuying Chen, Yaqi Wang, Ruidi Chang, Shichao Pei, Nitesh~V. Chawla, Olaf Wiest, and Xiangliang Zhang.
\newblock Large language model based multi-agents: A survey of progress and challenges.
\newblock In Kate Larson, editor, \emph{Proceedings of the Thirty-Third International Joint Conference on Artificial Intelligence, {IJCAI-24}}, pages 8048--8057. International Joint Conferences on Artificial Intelligence Organization, August 2024.
\newblock \doi{10.24963/ijcai.2024/890}.
\newblock URL \url{https://doi.org/10.24963/ijcai.2024/890}.
\newblock Survey Track.

\bibitem[Jiang et~al.(2021)Jiang, Araki, Ding, and Neubig]{jiang-etal-2021-know}
Zhengbao Jiang, Jun Araki, Haibo Ding, and Graham Neubig.
\newblock How can we know when language models know? on the calibration of language models for question answering.
\newblock \emph{Transactions of the Association for Computational Linguistics}, 9:\penalty0 962--977, 2021.
\newblock \doi{10.1162/tacl_a_00407}.
\newblock URL \url{https://aclanthology.org/2021.tacl-1.57/}.

\bibitem[Kadavath et~al.(2022)Kadavath, Conerly, Askell, Henighan, Drain, Perez, Schiefer, Hatfield-Dodds, DasSarma, Tran-Johnson, Johnston, El-Showk, Jones, Elhage, Hume, Chen, Bai, Bowman, Fort, Ganguli, Hernandez, Jacobson, Kernion, Kravec, Lovitt, Ndousse, Olsson, Ringer, Amodei, Brown, Clark, Joseph, Mann, McCandlish, Olah, and Kaplan]{kadavath2022language}
Saurav Kadavath, Tom Conerly, Amanda Askell, Tom Henighan, Dawn Drain, Ethan Perez, Nicholas Schiefer, Zac Hatfield-Dodds, Nova DasSarma, Eli Tran-Johnson, Scott Johnston, Sheer El-Showk, Andy Jones, Nelson Elhage, Tristan Hume, Anna Chen, Yuntao Bai, Sam Bowman, Stanislav Fort, Deep Ganguli, Danny Hernandez, Josh Jacobson, Jackson Kernion, Shauna Kravec, Liane Lovitt, Kamal Ndousse, Catherine Olsson, Sam Ringer, Dario Amodei, Tom Brown, Jack Clark, Nicholas Joseph, Ben Mann, Sam McCandlish, Chris Olah, and Jared Kaplan.
\newblock Language models (mostly) know what they know, 2022.
\newblock URL \url{https://arxiv.org/abs/2207.05221}.

\bibitem[Kapoor et~al.(2024)Kapoor, Gruver, Roberts, Collins, Pal, Bhatt, Weller, Dooley, Goldblum, and Wilson]{kapoor2024large}
Sanyam Kapoor, Nate Gruver, Manley Roberts, Katherine~M. Collins, Arka Pal, Umang Bhatt, Adrian Weller, Samuel Dooley, Micah Goldblum, and Andrew~Gordon Wilson.
\newblock Large language models must be taught to know what they don{\textquoteright}t know.
\newblock In \emph{The Thirty-eighth Annual Conference on Neural Information Processing Systems}, 2024.

\bibitem[Karpathy(2022)]{karpathy2022nanogpt}
Andrej Karpathy.
\newblock \text{NanoGPT}.
\newblock \url{https://github.com/karpathy/nanoGPT}, 2022.

\bibitem[Kearns et~al.(2026)Kearns, Roth, and Ryu]{kearns2026networked}
Michael Kearns, Aaron Roth, and Emily Ryu.
\newblock Networked information aggregation via machine learning.
\newblock In \emph{Proceedings of the 2026 Annual ACM-SIAM Symposium on Discrete Algorithms (SODA)}, pages 4799--4845. SIAM, 2026.

\bibitem[Kuhn et~al.(2023)Kuhn, Gal, and Farquhar]{kuhn2023semantic}
Lorenz Kuhn, Yarin Gal, and Sebastian Farquhar.
\newblock Semantic uncertainty: Linguistic invariances for uncertainty estimation in natural language generation.
\newblock In \emph{The Eleventh International Conference on Learning Representations}, 2023.
\newblock URL \url{https://openreview.net/forum?id=VD-AYtP0dve}.

\bibitem[Leng et~al.(2025)Leng, Huang, Zhu, and Huang]{leng2025taming}
Jixuan Leng, Chengsong Huang, Banghua Zhu, and Jiaxin Huang.
\newblock Taming overconfidence in {LLM}s: Reward calibration in {RLHF}.
\newblock In \emph{The Thirteenth International Conference on Learning Representations}, 2025.
\newblock URL \url{https://openreview.net/forum?id=l0tg0jzsdL}.

\bibitem[Li et~al.(2023)Li, Chong, Stepputtis, Campbell, Hughes, Lewis, and Sycara]{li2023theory}
Huao Li, Yu~Quan Chong, Simon Stepputtis, Joseph Campbell, Dana Hughes, Charles~Michael Lewis, and Katia~P. Sycara.
\newblock Theory of mind for multi-agent collaboration via large language models.
\newblock In \emph{The 2023 Conference on Empirical Methods in Natural Language Processing}, 2023.

\bibitem[Lin et~al.(2022)Lin, Hilton, and Evans]{lin2022teaching}
Stephanie Lin, Jacob Hilton, and Owain Evans.
\newblock Teaching models to express their uncertainty in words.
\newblock \emph{Transactions on Machine Learning Research}, 2022.
\newblock URL \url{https://openreview.net/forum?id=8s8K2UZGTZ}.

\bibitem[Manakul et~al.(2023)Manakul, Liusie, and Gales]{manakul-etal-2023-selfcheckgpt}
Potsawee Manakul, Adian Liusie, and Mark J.~F. Gales.
\newblock {S}elf{C}heck{GPT}: Zero-resource black-box hallucination detection for generative large language models.
\newblock In Houda Bouamor, Juan Pino, and Kalika Bali, editors, \emph{Proceedings of the 2023 Conference on Empirical Methods in Natural Language Processing}, pages 9004--9017, Singapore, December 2023. Association for Computational Linguistics.
\newblock \doi{10.18653/v1/2023.emnlp-main.557}.
\newblock URL \url{https://aclanthology.org/2023.emnlp-main.557/}.

\bibitem[Mielke et~al.(2022)Mielke, Szlam, Dinan, and Boureau]{mielke-etal-2022-reducing}
Sabrina~J. Mielke, Arthur Szlam, Emily Dinan, and Y-Lan Boureau.
\newblock Reducing conversational agents' overconfidence through linguistic calibration.
\newblock \emph{Transactions of the Association for Computational Linguistics}, 10:\penalty0 857--872, 2022.
\newblock \doi{10.1162/tacl_a_00494}.
\newblock URL \url{https://aclanthology.org/2022.tacl-1.50/}.

\bibitem[Perez et~al.(2023)Perez, Ringer, Lukosiute, Nguyen, Chen, Heiner, Pettit, Olsson, Kundu, Kadavath, Jones, Chen, Mann, Israel, Seethor, McKinnon, Olah, Yan, Amodei, Amodei, Drain, Li, Tran-Johnson, Khundadze, Kernion, Landis, Kerr, Mueller, Hyun, Landau, Ndousse, Goldberg, Lovitt, Lucas, Sellitto, Zhang, Kingsland, Elhage, Joseph, Mercado, DasSarma, Rausch, Larson, McCandlish, Johnston, Kravec, El~Showk, Lanham, Telleen-Lawton, Brown, Henighan, Hume, Bai, Hatfield-Dodds, Clark, Bowman, Askell, Grosse, Hernandez, Ganguli, Hubinger, Schiefer, and Kaplan]{perez2023discovering}
Ethan Perez, Sam Ringer, Kamile Lukosiute, Karina Nguyen, Edwin Chen, Scott Heiner, Craig Pettit, Catherine Olsson, Sandipan Kundu, Saurav Kadavath, Andy Jones, Anna Chen, Benjamin Mann, Brian Israel, Bryan Seethor, Cameron McKinnon, Christopher Olah, Da~Yan, Daniela Amodei, Dario Amodei, Dawn Drain, Dustin Li, Eli Tran-Johnson, Guro Khundadze, Jackson Kernion, James Landis, Jamie Kerr, Jared Mueller, Jeeyoon Hyun, Joshua Landau, Kamal Ndousse, Landon Goldberg, Liane Lovitt, Martin Lucas, Michael Sellitto, Miranda Zhang, Neerav Kingsland, Nelson Elhage, Nicholas Joseph, Noemi Mercado, Nova DasSarma, Oliver Rausch, Robin Larson, Sam McCandlish, Scott Johnston, Shauna Kravec, Sheer El~Showk, Tamera Lanham, Timothy Telleen-Lawton, Tom Brown, Tom Henighan, Tristan Hume, Yuntao Bai, Zac Hatfield-Dodds, Jack Clark, Samuel~R. Bowman, Amanda Askell, Roger Grosse, Danny Hernandez, Deep Ganguli, Evan Hubinger, Nicholas Schiefer, and Jared Kaplan.
\newblock Discovering language model behaviors with model-written evaluations.
\newblock In Anna Rogers, Jordan Boyd-Graber, and Naoaki Okazaki, editors, \emph{Findings of the Association for Computational Linguistics: ACL 2023}, pages 13387--13434, Toronto, Canada, July 2023. Association for Computational Linguistics.
\newblock \doi{10.18653/v1/2023.findings-acl.847}.
\newblock URL \url{https://aclanthology.org/2023.findings-acl.847/}.

\bibitem[Sharma et~al.(2024)Sharma, Tong, Korbak, Duvenaud, Askell, Bowman, DURMUS, Hatfield-Dodds, Johnston, Kravec, Maxwell, McCandlish, Ndousse, Rausch, Schiefer, Yan, Zhang, and Perez]{sharma2024towards}
Mrinank Sharma, Meg Tong, Tomasz Korbak, David Duvenaud, Amanda Askell, Samuel~R. Bowman, Esin DURMUS, Zac Hatfield-Dodds, Scott~R Johnston, Shauna~M Kravec, Timothy Maxwell, Sam McCandlish, Kamal Ndousse, Oliver Rausch, Nicholas Schiefer, Da~Yan, Miranda Zhang, and Ethan Perez.
\newblock Towards understanding sycophancy in language models.
\newblock In \emph{The Twelfth International Conference on Learning Representations}, 2024.

\bibitem[Sinha et~al.(2019)Sinha, Sodhani, Dong, Pineau, and Hamilton]{clutrr}
Koustuv Sinha, Shagun Sodhani, Jin Dong, Joelle Pineau, and William~L. Hamilton.
\newblock {CLUTRR}: A diagnostic benchmark for inductive reasoning from text.
\newblock In \emph{Proceedings of the 2019 Conference on Empirical Methods in Natural Language Processing and the 9th International Joint Conference on Natural Language Processing (EMNLP-IJCNLP)}, 2019.

\bibitem[Sukhbaatar et~al.(2016)Sukhbaatar, szlam, and Fergus]{sukhbaatar2016learning}
Sainbayar Sukhbaatar, arthur szlam, and Rob Fergus.
\newblock Learning multiagent communication with backpropagation.
\newblock 29, 2016.

\bibitem[Tafjord et~al.(2021)Tafjord, Dalvi, and Clark]{proofwriter}
Oyvind Tafjord, Bhavana Dalvi, and Peter Clark.
\newblock {P}roof{W}riter: Generating implications, proofs, and abductive statements over natural language.
\newblock In \emph{Findings of the Association for Computational Linguistics: ACL-IJCNLP 2021}, 2021.

\bibitem[Tian et~al.(2023)Tian, Mitchell, Zhou, Sharma, Rafailov, Yao, Finn, and Manning]{tian-etal-2023-just}
Katherine Tian, Eric Mitchell, Allan Zhou, Archit Sharma, Rafael Rafailov, Huaxiu Yao, Chelsea Finn, and Christopher~D Manning.
\newblock Just ask for calibration: Strategies for eliciting calibrated confidence scores from language models fine-tuned with human feedback.
\newblock In \emph{Proceedings of the 2023 Conference on Empirical Methods in Natural Language Processing}, 2023.

\bibitem[Trivedi et~al.(2022)Trivedi, Balasubramanian, Khot, and Sabharwal]{musique}
Harsh Trivedi, Niranjan Balasubramanian, Tushar Khot, and Ashish Sabharwal.
\newblock Musique: Multihop questions via single-hop question composition.
\newblock \emph{Transactions of the Association for Computational Linguistics}, 10, 2022.

\bibitem[Wang et~al.(2023)Wang, Wei, Schuurmans, Le, Chi, Narang, Chowdhery, and Zhou]{wang2023selfconsistency}
Xuezhi Wang, Jason Wei, Dale Schuurmans, Quoc~V Le, Ed~H. Chi, Sharan Narang, Aakanksha Chowdhery, and Denny Zhou.
\newblock Self-consistency improves chain of thought reasoning in language models.
\newblock In \emph{The Eleventh International Conference on Learning Representations}, 2023.

\bibitem[Wei et~al.(2022)Wei, Wang, Schuurmans, Bosma, ichter, Xia, Chi, Le, and Zhou]{wei2022chain}
Jason Wei, Xuezhi Wang, Dale Schuurmans, Maarten Bosma, brian ichter, Fei Xia, Ed~Chi, Quoc~V Le, and Denny Zhou.
\newblock Chain-of-thought prompting elicits reasoning in large language models.
\newblock In \emph{Advances in Neural Information Processing Systems}, volume~35, 2022.

\bibitem[Wilie et~al.(2024)Wilie, Cahyawijaya, Ishii, He, and Fung]{wilie2024belief}
Bryan Wilie, Samuel Cahyawijaya, Etsuko Ishii, Junxian He, and Pascale Fung.
\newblock Belief revision: The adaptability of large language models reasoning.
\newblock In \emph{Proceedings of the 2024 Conference on Empirical Methods in Natural Language Processing}, 2024.

\bibitem[Xiong et~al.(2024)Xiong, Hu, Lu, Li, Fu, He, and Hooi]{xiong2024can}
Miao Xiong, Zhiyuan Hu, Xinyang Lu, Yifei Li, Jie Fu, Junxian He, and Bryan Hooi.
\newblock Can {LLM}s express their uncertainty? an empirical evaluation of confidence elicitation in {LLM}s.
\newblock In \emph{The Twelfth International Conference on Learning Representations}, 2024.

\bibitem[Yang et~al.(2018)Yang, Qi, Zhang, Bengio, Cohen, Salakhutdinov, and Manning]{hotpotqa}
Zhilin Yang, Peng Qi, Saizheng Zhang, Yoshua Bengio, William Cohen, Ruslan Salakhutdinov, and Christopher~D. Manning.
\newblock {H}otpot{QA}: A dataset for diverse, explainable multi-hop question answering.
\newblock In \emph{Proceedings of the 2018 Conference on Empirical Methods in Natural Language Processing}, 2018.

\bibitem[Zhao et~al.(2021)Zhao, Wallace, Feng, Klein, and Singh]{zhao2021calibrate}
Tony Zhao, Eric Wallace, Shi Feng, Dan Klein, and Sameer Singh.
\newblock Calibrate before use: Improving few-shot performance of language models.
\newblock In \emph{Proceedings of the 38th International Conference on Machine Learning}, 2021.

\end{thebibliography}


\appendix

\section{Additional Related Work}
\label{app:additional-related-work}

\paragraph{Calibration, scoring rules, and agreement-based aggregation.}
Our use of probabilistic messages is closely related to the classical literature on calibrated forecasting. \citet{dawid1982well} formalized the idea that a forecaster's stated probabilities should match empirical frequencies, while \citet{foster1998asymptotic} showed that calibrated forecasting can be achieved constructively in repeated prediction. The theoretical backbone of our work also connects to agreement and information aggregation. Classical agreement results show that Bayesian agents with common priors cannot sustain common-knowledge disagreement~\citep{aumann1976,aumann2006agreeing}, and that repeated exchange of beliefs can converge to agreement~\citep{geanakoplos1982we,aaronson2004complexity}. Recent work replaces ideal Bayesian assumptions with tractable calibration-style conditions~\citep{collina2025tractable,collina2026collaborative,kearns2026networked}. \citet{frongillo2023agreement} show that agreement need not imply accuracy without additional signal-structure assumptions. Our contribution is complementary: we study how these ideas behave in LLM-agent collaboration, and empirically compare action messages, raw probabilistic messages, sampled probabilities, and post-hoc conversation-calibrated probabilities.

\paragraph{Uncertainty and calibration in language models.}
A large body of work shows that neural-network and language-model probabilities are not automatically calibrated. \citet{pmlr-v70-guo17a} showed that modern neural networks can be substantially miscalibrated and that simple post-hoc temperature scaling can improve calibration. \citet{desai-durrett-2020-calibration} studied calibration in pretrained transformers such as BERT and RoBERTa, while \citet{jiang-etal-2021-know} evaluated when generative language models' confidence estimates align with correctness. \citet{zhao2021calibrate} showed that few-shot language-model predictions can be biased by prompt artifacts and proposed contextual calibration to mitigate these effects.

Recent work studies how to elicit uncertainty from large language models more directly. \cite{kadavath2022language} evaluate whether language models can predict their own correctness, finding that self-evaluation can be informative but is format- and scale-dependent. \cite{lin2022teaching} train models to express uncertainty in words, and~\cite{mielke-etal-2022-reducing} study whether conversational agents' linguistic expressions of confidence are calibrated. \cite{tian-etal-2023-just} compare confidence-elicitation strategies for RLHF-trained models, while \cite{xiong2024can} provide a broad empirical evaluation of verbalized, token-probability, and sampling-based confidence estimates. \citet{kapoor2024large} show that verbalized uncertainty in LLMs remains unreliable under prompting alone. These papers motivate our distinction between verbalized probabilities and probabilities that are calibrated for use by another agent in a conversation. In addition, work on sycophancy~\citep{perez2023discovering,sharma2024towards} motivates our biased-report model, where an agent's stated probability may be pulled toward a previous speaker's report rather than reflecting only its private evidence.

\paragraph{Sampling-based uncertainty and reasoning.}
Sampling multiple model outputs is another common way to estimate uncertainty. Chain-of-thought prompting~\citep{wei2022chain} and self-consistency decoding~\citep{wang2023selfconsistency} use multiple reasoning traces to improve single-agent reasoning by marginalizing over sampled answers. SelfCheckGPT~\citep{manakul-etal-2023-selfcheckgpt} uses consistency across sampled generations as a black-box signal for hallucination detection. \cite{kuhn2023semantic} and \cite{farquhar2024detecting} argue that uncertainty in language generation should often be measured over semantic equivalence classes rather than surface forms. Our sampled-probability protocol uses a related sampling signal, but for a different purpose: the empirical distribution over actions is communicated to another agent with complementary private information, rather than being used only to choose a single model's final answer.

\paragraph{Multi-agent LLM collaboration.}
Our maze experiments build on the collaborative maze-solving benchmark of \cite{davidson2025collaboration}, which demonstrates a ``collaboration gap'' between single-agent and multi-agent performance under partial observability. \citet{eisenstein2026mtpingevalevaluatingmultiturncollaboration} study stylistically similar tasks to evaluate multi-agent LLM collaboration in partially observable settings. Other work studies multi-agent LLM interaction as a general reasoning or evaluation scaffold. \cite{pmlr-v235-du24e} use multi-agent debate to improve factuality and reasoning, \cite{chen-etal-2024-reconcile} propose a round-table consensus procedure among diverse LLMs, and \cite{chan2024chateval} use multi-agent debate for LLM-based text evaluation. \cite{ijcai2024p890} survey the broader literature on LLM-based multi-agent systems. Unlike these works, our agents have different private observations, and our main question is not whether discussion helps in aggregate, but which low-bandwidth message format preserves decision-relevant information.

\paragraph{Learned communication in multi-agent reinforcement learning.}
There is also a large literature on learning communication protocols in cooperative multi-agent reinforcement learning. \cite{foerster2016learning} introduce RIAL and DIAL, showing that agents can learn discrete or differentiable communication protocols in partially observable cooperative tasks. \cite{sukhbaatar2016learning} propose CommNet, where agents learn continuous communication alongside their policies. \cite{pmlr-v97-das19a} introduce TarMAC, which learns targeted communication by deciding both what to communicate and whom to address. Our work differs from this line by focusing on an interpretable communication channel---beliefs over actions---and by identifying conversation calibration as the condition under which such messages can be reliably used by another agent.

\section{Warm-Up Example for Section \ref{sec:prob-vs-action}}
\label{ap:warmup}

We include a two-bit example showing that best-action communication can discard information that is preserved by probability communication. Let the state $y=(y_1,y_2)$ be drawn from $\{00,01,10,11\}$ with
\[
    \Pr(y=00)=\frac14-\epsilon,
    \qquad
    \Pr(y=01)=\Pr(y=10)=\Pr(y=11)=\frac14+\frac{\epsilon}{3},
\]
where $0<\epsilon<1/4$. Alice observes $y_1$, Bob observes $y_2$, and the correct action $a^\star=f^*(y)$ is
\[
\begin{array}{c|cccc}
y & 00 & 01 & 10 & 11 \\
\hline
f^*(y) & a_2 & a_1 & a_1 & a_1 .
\end{array}
\]
Thus $a_1$ is correct except in the single state $00$.

Suppose Alice speaks first and may send only her Bayes-optimal action. If $y_1=1$, she knows the state is either $10$ or $11$, so $a_1$ is certainly correct: $\Pr(a^\star=a_1\mid y_1=1)=1$. If $y_1=0$, Alice compares the two states $00$ and $01$, and still strictly prefers $a_1$:
\[
    \Pr(a^\star=a_1\mid y_1=0)
    =
    \Pr(y=01\mid y_1=0)
    =
    \left(\frac14+\frac{\epsilon}{3}\right)/\left(\frac12-\frac{2\epsilon}{3}\right)
    >
    \frac12.
\]
Therefore, she sends the same action, $a_1$, no matter which bit she observes: The action message carries no information about $y_1$. Bob faces the same situation symmetrically, so an action-only conversation cannot identify the exceptional state $00$; the agents play $a_1$, which is correct with prior probability $3/4+\epsilon$.

Under probability communication, Alice instead sends her posterior probability that $a_1$ is correct. This message equals $1$ when $y_1=1$ and equals $(\frac14+\frac{\epsilon}{3})/(\frac12-\frac{2\epsilon}{3})$ when $y_1=0$. These two values are distinct, so the probability message reveals her bit. Since Bob already observes $y_2$, he can identify the full state and report the correct action with probability one.

The lesson is that the separation is not between actions and probabilities as syntactic objects, but between messages that preserve distinct posterior states and messages that collapse all evidence states inducing the same best action. 


\section{Proofs for Section~\ref{sec:chained-parities}}
\label{app:chained-parity-proofs}

We start with a recurring observation used throughout. For shorthand, let
\begin{equation}
\label{notation:parity}
    u_k := \sum_{j\leq k}\alpha_j S_j,
    \qquad
    r_k := \sum_{j>k}\alpha_j S_j,
\end{equation}
with $u_0:=0$ and $r_0:=p^\star-1/2$, so that $p^\star=1/2+u_k+r_k$ for every $k$.

\begin{lemma}[Conditional vanishing]
\label{lem:vanishing}
Let $\mathcal G$ be a $\sigma$-algebra and suppose $b_i$ is independent of $\sigma(\mathcal G,\{b_\ell:\ell\neq i\})$ for some $i\in[K]$.  For any random variable $W$ that is a function of $\{b_\ell:\ell\neq i\}$, $\mathbb E[b_i W\mid\mathcal G]=\mathbb E[b_i]\,\mathbb E[W\mid\mathcal G]=0$.  In particular, if $J\subseteq[K]$ is nonempty and some $i\in J$ is independent of $\sigma(\mathcal G,\{b_\ell:\ell\neq i\})$, then $\mathbb E[\prod_{\ell\in J}b_\ell\mid\mathcal G]=0$.
\end{lemma}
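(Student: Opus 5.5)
The plan is to prove the first identity by factoring out the independent bit, and then to deduce the product statement by choosing $W$ appropriately. The only tool needed is the standard fact that if $X$ is independent of a $\sigma$-algebra $\mathcal H$ and $Y$ is $\mathcal H$-measurable and integrable, then $\mathbb E[XY\mid\mathcal G]=\mathbb E[X]\,\mathbb E[Y\mid\mathcal G]$ for any $\mathcal G\subseteq\mathcal H$. Everything here is bounded, because the bits take values in $\{-1,+1\}$ and $W$ is a function of finitely many such bits, so integrability is automatic.

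\textbf{First identity.} Set $\mathcal H:=\sigma(\mathcal G,\{b_\ell:\ell\neq i\})$, so that $\mathcal G\subseteq\mathcal H$. Since $W$ is a function of $\{b_\ell:\ell\neq i\}$, it is $\mathcal H$-measurable. I would apply the tower property and then pull out what is known:
\[
\mathbb E[b_iW\mid\mathcal G]=\mathbb E\big[\mathbb E[b_iW\mid\mathcal H]\,\big|\,\mathcal G\big]=\mathbb E\big[W\,\mathbb E[b_i\mid\mathcal H]\,\big|\,\mathcal G\big].
\]
By the independence hypothesis, $\mathbb E[b_i\mid\mathcal H]=\mathbb E[b_i]$ almost surely. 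This constant factors out and gives $\mathbb E[b_i]\,\mathbb E[W\mid\mathcal G]$. Because $b_i$ is uniform on $\{-1,+1\}$, we have $\mathbb E[b_i]=0$, which proves the first claim.

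\textbf{Product statement.} Take the index $i\in J$ granted by the hypothesis and write $\prod_{\ell\in J}b_\ell=b_i\cdot W$ with $W:=\prod_{\ell\in J\setminus\{i\}}b_\ell$. If $J=\{i\}$, then $W=1$. In either case $W$ is a function of $\{b_\ell:\ell\neq i\}$, so the first part applies directly and yields $0$.

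\textbf{Main obstacle.} There is no real difficulty; the only care needed is in the measure-theoretic step. We need independence of $b_i$ from the joint $\sigma$-algebra $\mathcal H$, not merely from $\mathcal G$, in order to justify $\mathbb E[b_i\mid\mathcal H]=\mathbb E[b_i]$. The hypothesis provides exactly this.
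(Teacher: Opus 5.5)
Your proof is correct and follows the paper's argument essentially verbatim: condition on $\sigma(\mathcal G,\{b_\ell:\ell\neq i\})$ via the tower property, pull out the measurable factor $W$, and use independence to replace $\mathbb E[b_i\mid\sigma(\mathcal G,\{b_\ell:\ell\neq i\})]$ by $\mathbb E[b_i]=0$, then take $W=\prod_{\ell\in J\setminus\{i\}}b_\ell$ for the product claim. You also treat the case $J=\{i\}$ and integrability explicitly, which the paper leaves implicit.
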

\begin{proof}
Since $W$ is measurable with respect to $\sigma(\{b_\ell:\ell\neq i\})$, it is also measurable with respect to $\sigma(\mathcal G,\{b_\ell:\ell\neq i\})$. By the assumed independence,
\[
    \mathbb E[b_i\mid \sigma(\mathcal G,\{b_\ell:\ell\neq i\})]
    =
    \mathbb E[b_i]
    =
    0.
\]
Therefore
\[
    \mathbb E[b_iW\mid \mathcal G]
    =
    \mathbb E\!\left[
        W\,\mathbb E[b_i\mid \sigma(\mathcal G,\{b_\ell:\ell\neq i\})]
        \,\middle|\, \mathcal G
    \right]
    =
    0.
\]
For the product claim, write $\prod_{\ell\in J}b_\ell=b_iW$ with $W=\prod_{\ell\in J\setminus\{i\}}b_\ell$.
\end{proof}

\subsection{Proof of Theorem~\ref{thm:prediction-aggregates}}
\chainedParityPred*
\begin{proof}
Following the shorthand notation introduced above in Equation \ref{notation:parity}, we first prove that $p_k=1/2+\sum_{j\le k} \alpha_j S_j = 1/2 + u_k$ by induction on $k$.  For $k=1$, the previous message is empty; Alice knows $b_1$, and every later term $S_j$ with $j\geq 2$ contains Bob's unseen bit $b_2$, so Lemma~\ref{lem:vanishing} gives $\mathbb E[S_j\mid x_A]=0$ and therefore $p_1=1/2+\alpha_1 b_1=1/2+u_1$.

Inductively assume $p_{k-1}=1/2+u_{k-1}$.  By Assumption~\ref{ass:coefficients}, the round-$k$ speaker recovers the prefix $(S_1,\ldots,S_{k-1})$ from the scalar $p_{k-1}$; combined with its own bit $b_k$, the speaker also determines $S_k=S_{k-1}b_k$.  Every later $S_j$ with $j\geq k+1$ contains the next bit $b_{k+1}$, which has opposite parity from the speaker, so Lemma~\ref{lem:vanishing} forces its conditional mean to vanish. Hence
\begin{equation*}
    p_k=\mathbb E[p^\star\mid x_{\sigma(k)},p_{k-1}]
    =\frac{1}{2}+\textstyle\sum_{j\leq k}\alpha_j S_j
    =\frac{1}{2}+u_k.
\end{equation*}



We now turn to the claim about the excess risk induced by stopping conversation at round $k$ instead of achieving full information by stopping at round $K$. First, we note that we can rewrite the expected squared error of $p_k$ compared to the true action $a^*$ in terms of the difference between the risk induced by $p^*$, the weight of the Bernoulli coin that the true action is drawn from and $a^*$, and the difference between $p^*$ and $p_k$:
\begin{align*}
\E[(a^*-p_k)^2] &= \E[(a^*-p^* + p^* - p_k)^2] \\
&= \E[(a^*-p^*)^2] +2\E[(a^*-p^*)(p^*-p_k)] + \E[(p^*-p_k)^2]. \quad\quad\quad (*)
\end{align*}

Note that the cross term is equal to zero by the law of total expectation, conditioning on the value of the secret bits $b = (b_1, \ldots, b_K)$ held by Alice and Bob:
\begin{align*}
\E[(a^*-p^*)(p^*-p_k)] &= \E[(p^*-p_k)\E[a^*-p^* \vert b]] \\
&= \E[(p^* - p_k) \cdot 0] \\
&= 0.
\end{align*}
Thus, setting the cross term in Equation (*) to zero and rearranging terms, we find that
\[
\E[(a^*-p_k)^2] - \E[(a^*-p^*)^2] = \E[(p^*-p_k)^2].
\]
Note that 
\[
p^*-p_k = \sum_{j=1}^K \alpha_j S_j - \sum_{j=1}^k \alpha_j S_j = \sum_{j>k} \alpha_j S_j.
\]
Expanding, we find that
\begin{align*}
\E[(a^*-p_k)^2] - \E[(a^*-p^*)^2] &= \E\left[\left(\sum_{j>k} \alpha_j S_j\right)^2\right] \\
&= \E\left[\sum_{j>k} \alpha_j^2 S_j^2\right] + \E\left[\sum_{k < j < j' \le K} \alpha_j \alpha_{j'} S_j S_{j'}\right] \\
&= \sum_{j>k}\alpha_j^2 +\sum_{k < j < j' \le K} \alpha_j \alpha_{j'} \E[(b_1 \cdots b_j)(b_1 \cdots b_j \cdot b_{j+1} \cdots b_{j'}] \\
&= \sum_{j>k}\alpha_j^2 +\sum_{k < j < j' \le K} \alpha_j \alpha_{j'} \E[b_{j+1}\cdots b_{j'}] \\
&= \sum_{j>k} a_j^2 \quad\quad\quad\quad (**),
\end{align*}
since each of the bits $b_{j+1}, \ldots, b_{j'}$ are independent fair Bernoulli draws over $\{-1,1\}$ and hence their expected product is zero. 

For the final claim, about the improvement in risk in each round, we apply the above result, Equation (**), at both round $k-1$ and $k$:
\begin{align*}
\E[(a^*-p_{k-1})^2] - \E[(a^*-p_{k})^2] &= \big(\E[(a^*-p_{k-1})^2] - \E[(a^*-p^*)^2]\big) \\
&\quad\quad\quad - \big(\E[(a^*-p_{k})^2 - \E[(a^*-p^*)^2]\big) \\
&= \sum_{j>k-1} \alpha_j^2 - \sum_{j>k} \alpha_j^2 \\
&= \alpha_{k}^2.
\end{align*}
And, as a corollary to Equation (**) when $k=K$ and hence $\sum_{j>k} \alpha_j^2=0$, $\E[(a^*-p_K)^2] = \E[(a^*-p^*)^2]$.
\end{proof}

\begin{remark}[Full history vs.\ previous-message sufficiency]
\label{rem:full-history-bonus}
The distinct-partial-sums condition in Assumption~\ref{ass:coefficients} is used only to make the previous scalar message a sufficient summary of the public history.  Dropping it gives a weaker protocol whose rounds still reduce risk by $\mathbb E[(p_k-p_{k-1})^2]$, but without a closed-form per-round improvement.  Alternatively, sending the full message history in place of just $p_{k-1}$ restores the closed-form improvement $\alpha_k^2$, because the increment $p_k-p_{k-1}=\alpha_k S_k$ always reveals the next parity.
\end{remark}

\begin{remark}[Loss-adaptive stopping vs.\ exact recovery of $h^\star$]
\label{rem:stopping}
To achieve squared-loss excess at most $\varepsilon$ it suffices to run until $\sum_{j>k}\alpha_j^2\leq\varepsilon$, which can be much smaller than $K$. Exact recovery of $h^\star$ may instead require all $K$ rounds: if $\sum_{j\leq k}\alpha_j<\sum_{j>k}\alpha_j$ for every $k<K$, the unrevealed tail can flip the sign of $u_k+r_k$, so no $k<K$ prediction message determines $h^\star$.
\end{remark}

\subsection{Proof of Theorem~\ref{thm:action-stalls}}

\begin{proof}
For the parity algebra, let $\tilde p_k$ be the round-$k$ speaker's posterior under the action-channel transcript, and let $A_k:=2\,\mathbf 1\{\tilde p_k\geq 1/2\}-1=\sgn(\tilde p_k-1/2)\in\{-1,+1\}$ be the $\pm 1$ recoding of the round-$k$ action message. The claim in Theorem~\ref{thm:action-stalls} that every message equals $\mathbf 1\{b_1=+1\}$ is equivalent to $A_k=b_1$ for every $k$.

We show $A_k=b_1$ by induction.  At round $1$, Alice has no previous message and knows $b_1$; every $S_j$ with $j\geq 2$ contains Bob's unseen $b_2$ and vanishes in conditional expectation by Lemma~\ref{lem:vanishing}, so her posterior is $1/2+\alpha_1 b_1$ and $A_1=\sgn(\alpha_1 b_1)=b_1$.

Suppose all previous action messages equal $b_1$, so the public transcript reveals no information beyond the first bit. If Alice speaks at round $k$, she still lacks $b_2$, so the same conditional-expectation calculation gives $\tilde p_k=1/2+\alpha_1 b_1$ and $A_k=b_1$. If Bob speaks, he knows $b_2$; every $S_j$ with $j\geq 3$ contains the unseen bit $b_3$ and averages out, so his posterior is
\begin{equation*}
    \tilde p_k = \frac{1}{2}+\alpha_1 b_1+\alpha_2 b_1 b_2
    =
    \frac{1}{2}+b_1\big(\alpha_1+\alpha_2 b_2\big).
\end{equation*}
Since $\alpha_1>\alpha_2$, the factor $\alpha_1+\alpha_2 b_2$ is positive for both values of $b_2$, so $A_k=b_1$. Hence $A_k=b_1$ at every round and the induced decision does not change after round $1$.

For the constant-gap instance, take $K=3$, $\alpha_1=0.20$, $\alpha_2=0.15$, $\alpha_3=0.10$, which satisfy $\alpha_1>\alpha_2$ and $\alpha_1+\alpha_2+\alpha_3<1/2$. Theorem~\ref{thm:prediction-aggregates} gives $p_3=p^\star$. Meanwhile, the action message at every round equals $\mathbf 1\{b_1=+1\}$ (equivalently, $A_k=b_1$). On $E:=\{b_2=-1,b_3=+1\}$, which has probability $1/4$, the prefix parities are $S_1=b_1$ and $S_2=S_3=-b_1$, so $p^\star-1/2=b_1(\alpha_1-\alpha_2-\alpha_3)<0$; hence $h^\star=\mathbf 1\{p^\star\geq 1/2\}=\mathbf 1\{b_1=-1\}$, which disagrees with the action message on $E$. Thus $\Pr[\hat a^k\neq h^\star]\geq 1/4$ at every round.
\end{proof}

\section{Proofs for Section \ref{sec:calibration}}
\label{app:calibration}
Throughout this appendix section, we work in the chained-parity model of Section~\ref{sec:chained-parities} under Assumption~\ref{ass:coefficients}. We begin with the following observation:

\begin{observation}[Learning cross-calibrated forecasts] \label{obs:learning_cc}
    Let $Z$ be any information available at prediction time, such as a pair of communicated forecasts $Z=(\hat p_A^k,\hat p_B^\ell)$. Suppose the predictor class can represent the true conditional label probabilities. If the training loss is one whose best population prediction is the true conditional probability, such as cross-entropy or squared loss for binary probabilities, then training a predictor $h$ to predict $a^\star$ from $Z$ gives
    \[
        h^\star(Z)(a)=\Pr(a^\star=a\mid Z).
    \]
    Therefore the learned forecast $\tilde p=h^\star(Z)$ is calibrated even after conditioning on $Z$:
    $
        \Pr(a^\star=a\mid \tilde p,Z)=\tilde p(a)$
    almost surely. Moreover, if $Z=(W,\hat q)$ includes a previous communicated forecast $\hat q$, then also $\Pr(a^\star=a\mid \tilde p,\hat q)=\tilde p(a)$
    almost surely. Thus loss minimization on features that include the previous forecast learns a forecast that is cross-calibrated with respect to that forecast.
\end{observation}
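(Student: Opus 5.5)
The statement to prove is Observation~\ref{obs:learning_cc}. The plan has three steps, in order. First, identify the population minimizer via strict propriety. Second, derive calibration conditional on $Z$ from the fact that $\tilde p$ is $Z$-measurable. Third, coarsen the conditioning from $(\tilde p,Z)$ to $(\tilde p,\hat q)$ with the tower property.

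\emph{Step 1: the minimizer.} Write the population risk as $\mathbb E[\ell(h(Z),a^\star)] = \mathbb E\big[\,\mathbb E[\ell(h(Z),a^\star)\mid Z]\,\big]$. For each fixed value $z$, the inner conditional risk is the expected score of the report $h(z)$ against the distribution $\Pr(a^\star=\cdot\mid Z=z)$. Because the loss is strictly proper (cross-entropy, or squared/Brier loss on the probability vector), this expected score is uniquely minimized at $h(z)=\Pr(a^\star=\cdot\mid Z=z)$. By the realizability assumption, the class contains the function $z\mapsto\Pr(a^\star=\cdot\mid Z=z)$, so that function attains the pointwise minimum everywhere. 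Any other population minimizer must agree with it for $P_Z$-almost every $z$. This gives $h^\star(Z)(a)=\Pr(a^\star=a\mid Z)$ almost surely.

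\emph{Step 2: calibration given $Z$.} The forecast $\tilde p=h^\star(Z)$ is a measurable function of $Z$, so $\sigma(\tilde p,Z)=\sigma(Z)$. Hence
\[
\Pr(a^\star=a\mid\tilde p,Z)=\Pr(a^\star=a\mid Z)=\tilde p(a)
\]
almost surely.

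\emph{Step 3: cross-calibration with respect to $\hat q$.} Suppose $Z=(W,\hat q)$. Then $\sigma(\tilde p,\hat q)\subseteq\sigma(Z)$, because both $\tilde p$ and $\hat q$ are functions of $Z$. The tower property gives
\[
\Pr(a^\star=a\mid \tilde p,\hat q)=\mathbb E\big[\Pr(a^\star=a\mid Z)\,\big|\,\tilde p,\hat q\big]=\mathbb E[\tilde p(a)\mid\tilde p,\hat q]=\tilde p(a).
\]
The last equality holds because $\tilde p(a)$ is $\sigma(\tilde p,\hat q)$-measurable. The same argument with $\sigma(\tilde p)$ in place of $\sigma(\tilde p,\hat q)$ also yields plain calibration in the sense of Definition~\ref{def:calibration}.

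\emph{Main obstacle.} The delicate step is Step 1, specifically the passage from ``minimizes population risk'' to ``equals the conditional probability almost surely.'' Two points need care. One is the pointwise characterization of strict propriety for the two named losses; I would check it directly: for cross-entropy this is Gibbs' inequality, and for squared loss it is the bias–variance decomposition. The other is that realizability makes the pointwise minimizer a member of the class, so that a minimizer exists and is essentially unique. Steps 2 and 3 are routine measure-theoretic consequences once Step 1 is established.
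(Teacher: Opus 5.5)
Your proposal is correct and follows the paper's proof essentially step for step: the paper likewise identifies $h^\star(Z)$ with $\Pr(a^\star=a\mid Z)$ from the assumption on the loss, uses that $\tilde p$ is a function of $Z$ to get calibration given $Z$, and then averages over $W$ (the tower property) to obtain cross-calibration with respect to $\hat q$. Your Step 1 spells out, via pointwise conditional-risk minimization and almost-sure uniqueness, what the paper simply takes as its hypothesis on the loss.
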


\begin{proof}
By the assumption on the loss, the best population predictor outputs the true conditional label probabilities, so $h^\star(Z)(a)=\Pr(a^\star=a\mid Z)$. Let $\tilde p=h^\star(Z)$. Since $\tilde p$ is a function of $Z$,
\[
    \Pr(a^\star=a\mid \tilde p,Z)
    =
    \Pr(a^\star=a\mid Z)
    =
    \tilde p(a).
\]
If $Z=(W,\hat q)$, then averaging the last display over the remaining information $W$, while keeping $(\tilde p,\hat q)$ fixed, gives
\[
    \Pr(a^\star=a\mid \tilde p,\hat q)
    =
    \mathbb E[\Pr(a^\star=a\mid Z)\mid \tilde p,\hat q]
    =
    \mathbb E[\tilde p(a)\mid \tilde p,\hat q]
    =
    \tilde p(a).
\]
\end{proof}

We then expand the calibration claim of Theorem~\ref{thm:cross-cal-recovery} into self- and cross-calibration (Definitions~\ref{def:calibration} and~\ref{def:cross_calibration}), and quantify the excess squared loss from using raw reports $\hat p_k$ in place of $p_k$.

We first consider the direct-use benchmark in which the receiver treats the raw report $\hat p_k$ itself as the round-$k$ prediction. This benchmark isolates the squared-loss cost of reporting a biased scalar before applying any post-hoc calibration or projection rule.

\begin{proposition}[Self-calibration removes global miscalibration]
\label{prop:self-calibration}
If $\lambda=0$ and $c>0$ in~\eqref{eq:biased-report}, then self-calibration already recovers the ideal scalar: $\Pr(a^\star=1\mid \hat p_k)=p_k$. Using raw $\hat p_k$ in place of $p_k$ incurs excess squared loss
\begin{equation*}
    \mathbb E[(\hat p_k-p_k)^2]
    =(c-1)^2\,\mathbb E\left[\big(p_k-\tfrac{1}{2}\big)^2\right]
    =(c-1)^2\sum_{j\leq k}\alpha_j^2.
\end{equation*}
\end{proposition}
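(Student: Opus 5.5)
With $\lambda=0$ the biased report is the affine image $\hat p_k=\tfrac12+c(p_k-\tfrac12)$. The plan is to exploit that, for $c>0$, this map is a bijection of $\mathbb R$. Then $\hat p_k$ and $p_k$ generate the same $\sigma$-algebra, and the calibration claim reduces to self-calibration of the Bayes posterior $p_k$. The excess-loss claim reduces to the orthogonality computation already carried out in the proof of Theorem~\ref{thm:prediction-aggregates}.

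\emph{Step 1 (calibration).} Since $p_k=\tfrac12+(\hat p_k-\tfrac12)/c$, we have $\sigma(\hat p_k)=\sigma(p_k)$, and hence $\Pr(a^\star=1\mid \hat p_k)=\Pr(a^\star=1\mid p_k)$. By Theorem~\ref{thm:prediction-aggregates}, $p_k=\tfrac12+u_k$ is a function of $(b_1,\ldots,b_k)$. We also have $\Pr(a^\star=1\mid b)=p^\star=\tfrac12+u_k+r_k$. The tower property then gives
\[
\Pr(a^\star=1\mid p_k)=\tfrac12+u_k+\mathbb E[r_k\mid p_k].
\]
Each $S_j$ with $j>k$ contains $b_{k+1}$, which is independent of $(b_1,\ldots,b_k)$ and hence of $p_k$. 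Lemma~\ref{lem:vanishing}, applied with $\mathcal G=\sigma(p_k)$ and $i=k+1$, therefore gives $\mathbb E[r_k\mid p_k]=0$, so the conditional probability equals $p_k$.

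If $k=K$, then $r_K=0$ and there is nothing to check. We never need Assumption~\ref{ass:coefficients} directly here, only the formula for $p_k$ from Theorem~\ref{thm:prediction-aggregates}.

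\emph{Step 2 (excess loss).} Following the decomposition $(*)$ in the proof of Theorem~\ref{thm:prediction-aggregates}, write $a^\star-\hat p_k=(a^\star-p_k)+(p_k-\hat p_k)$. The cross term $\mathbb E[(a^\star-p_k)(p_k-\hat p_k)]$ vanishes, because $p_k-\hat p_k$ is a function of $p_k$ and $\mathbb E[a^\star-p_k\mid p_k]=0$ by Step 1. This shows that the excess loss is $\mathbb E[(\hat p_k-p_k)^2]$.

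\emph{Step 3 (evaluating the excess).} Since $\hat p_k-p_k=(c-1)(p_k-\tfrac12)=(c-1)u_k$, it remains to compute $\mathbb E[u_k^2]$. Expanding $u_k^2$, the diagonal terms contribute $\alpha_j^2S_j^2=\alpha_j^2$. For $j<j'$, the off-diagonal terms satisfy $\mathbb E[S_jS_{j'}]=\mathbb E[b_{j+1}\cdots b_{j'}]=0$, exactly as in $(**)$. This gives $\sum_{j\le k}\alpha_j^2$.

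The only step needing care is Step 1: one must correctly identify that conditioning on $\hat p_k$ is the same as conditioning on $p_k$, and that the unrevealed tail is mean-zero given $p_k$. The rest is bookkeeping reused from the proof of Theorem~\ref{thm:prediction-aggregates}.
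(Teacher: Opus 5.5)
Your proposal is correct and follows the paper's proof essentially line for line: you invert the affine map (using $c>0$), note that $\hat p_k$ is a function of $b_1,\ldots,b_k$ so Lemma~\ref{lem:vanishing} kills the tail $r_k$, and compute $\mathbb E[u_k^2]=\sum_{j\le k}\alpha_j^2$ by orthogonality of the prefix parities. Your Step~2 shows that the cross term $\mathbb E[(a^\star-p_k)(p_k-\hat p_k)]$ vanishes, so the excess loss really equals $\mathbb E[(\hat p_k-p_k)^2]$; this is a small but welcome addition that the paper leaves implicit.
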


\begin{proof}
When $\lambda=0$, $\hat p_k=1/2+c(p_k-1/2)$ is invertible: $p_k=1/2+(\hat p_k-1/2)/c$. Since $p^\star=p_k+r_k$ and $\hat p_k$ is a function of $b_1,\ldots,b_k$, Lemma~\ref{lem:vanishing} gives $\mathbb E[r_k\mid \hat p_k]=0$, so $\Pr(a^\star=1\mid \hat p_k)=\mathbb E[p^\star\mid \hat p_k]=p_k$. The raw-report loss is $\mathbb E[(\hat p_k-p_k)^2]=(c-1)^2\mathbb E[(p_k-1/2)^2]=(c-1)^2\sum_{j\leq k}\alpha_j^2$.
\end{proof}

\begin{proposition}[Cross-calibration removes previous-message bias]
\label{prop:cross-calibration}
For any $c>0$ and $\lambda\in\mathbb R$ in~\eqref{eq:biased-report}, the conditional label distribution given the raw report and the previous agent's report recovers the ideal scalar:
\[
    \Pr(a^\star=1\mid \hat p_k,p_{k-1})=p_k.
\]
Thus a post-hoc forecast learned from $(\hat p_k,p_{k-1})$ is cross-calibrated with respect to $p_{k-1}$ and outputs $p_k$. Using raw $\hat p_k$ in place of $p_k$ incurs excess squared loss
\begin{equation*}
    \mathbb E[(\hat p_k-p_k)^2]
    =(c-1)^2\sum_{j\leq k}\alpha_j^2
    +\big(2(c-1)\lambda+\lambda^2\big)\sum_{j<k}\alpha_j^2.
\end{equation*}
\end{proposition}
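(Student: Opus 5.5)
The argument has three parts: recover $p_k$ from the pair of messages, get calibration from Observation~\ref{obs:learning_cc}, and compute the excess loss via orthogonality of parities. Throughout I use the shorthand \eqref{notation:parity}. By Theorem~\ref{thm:prediction-aggregates}, $p_{k-1}=1/2+u_{k-1}$ and $p_k=1/2+u_k$, so $p_k-1/2=u_{k-1}+\alpha_k S_k$.

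\textbf{Recovering $p_k$.} Substituting into \eqref{eq:biased-report} gives
\[
\hat p_k-\tfrac12=c\,u_k+\lambda\,u_{k-1}.
\]
Since $c>0$, the pair $(\hat p_k,p_{k-1})$ determines $u_k$ via
\[
u_k=\bigl(\hat p_k-\tfrac12-\lambda(p_{k-1}-\tfrac12)\bigr)/c ,
\]
and hence determines $p_k$. Conversely, $(\hat p_k,p_{k-1})$ is a function of $(p_k,p_{k-1})$, which is a function of $(b_1,\ldots,b_k)$.

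Now write $p^\star=p_k+r_k$. Each $S_j$ with $j>k$ contains the bit $b_{k+1}$, which is independent of $b_1,\ldots,b_k$. Lemma~\ref{lem:vanishing}, applied with $\mathcal G=\sigma(\hat p_k,p_{k-1})$ and $i=k+1$, then gives $\mathbb E[r_k\mid \hat p_k,p_{k-1}]=0$. Therefore
\[
\Pr(a^\star=1\mid \hat p_k,p_{k-1})=\mathbb E[p^\star\mid \hat p_k,p_{k-1}]=p_k .
\]

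\textbf{Cross-calibration.} Apply Observation~\ref{obs:learning_cc} with $Z=(\hat p_k,p_{k-1})$. The population-optimal post-hoc forecast equals $\Pr(a^\star=1\mid Z)=p_k$, and because $Z$ contains $p_{k-1}$, this forecast is cross-calibrated with respect to $p_{k-1}$. A direct check also works: $\mathbb E[p^\star\mid p_k,p_{k-1}]=p_k$ by the same vanishing argument.

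\textbf{Excess loss.} Write
\[
\hat p_k-p_k=(c-1)u_k+\lambda u_{k-1}.
\]
Squaring and taking expectations needs three second moments:
\begin{itemize}
\item $\mathbb E[u_k^2]=\sum_{j\le k}\alpha_j^2$;
\item $\mathbb E[u_{k-1}^2]=\sum_{j<k}\alpha_j^2$;
\item $\mathbb E[u_ku_{k-1}]=\sum_{j<k}\alpha_j^2$, because $u_k=u_{k-1}+\alpha_kS_k$ and $\mathbb E[S_kS_j]=\mathbb E[b_{j+1}\cdots b_k]=0$ for $j<k$.
\end{itemize}
Combining these,
\[
\mathbb E[(\hat p_k-p_k)^2]=(c-1)^2\sum_{j\le k}\alpha_j^2+\bigl(2(c-1)\lambda+\lambda^2\bigr)\sum_{j<k}\alpha_j^2,
\]
which is the claimed formula. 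This cross-term computation is the only step needing care. Everything else follows from invertibility of the report in $p_k$ given $p_{k-1}$ and from conditional vanishing, exactly as in Proposition~\ref{prop:self-calibration}.
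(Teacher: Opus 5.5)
Your proposal is correct and follows the paper's proof essentially step for step. Like the paper, you invert the affine report given $p_{k-1}$ to recover $p_k$, remove the tail $r_k$ with Lemma~\ref{lem:vanishing}, and expand $\hat p_k-p_k=(c-1)u_k+\lambda u_{k-1}$ using the three second moments $\mathbb E[u_k^2]$, $\mathbb E[u_{k-1}^2]$, $\mathbb E[u_ku_{k-1}]$; your explicit check that $(\hat p_k,p_{k-1})$ is a function of $(b_1,\ldots,b_k)$, which the lemma needs with $i=k+1$, is a small detail the paper leaves implicit.
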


\begin{proof}
For any $\lambda$, observing $(\hat p_k,p_{k-1})$ determines
\begin{equation*}
    p_k=\tfrac{1}{2}+\tfrac{1}{c}\left[\big(\hat p_k-\tfrac{1}{2}\big)-\lambda\big(p_{k-1}-\tfrac{1}{2}\big)\right],
\end{equation*}
so $p^\star=p_k+r_k$ with $\mathbb E[r_k\mid \hat p_k,p_{k-1}]=0$ by Lemma~\ref{lem:vanishing}. Hence $\Pr(a^\star=1\mid \hat p_k,p_{k-1})=\mathbb E[p^\star\mid \hat p_k,p_{k-1}]=p_k$. If the raw report is used directly then $\hat p_k-p_k=(c-1)(p_k-1/2)+\lambda(p_{k-1}-1/2)$. With $p_k-1/2=u_k$ and $p_{k-1}-1/2=u_{k-1}=u_k-\alpha_k S_k$, cross-term cancellation via Lemma~\ref{lem:vanishing} gives $\mathbb E[u_k^2]=\sum_{j\leq k}\alpha_j^2$ and $\mathbb E[u_{k-1}^2]=\mathbb E[u_k u_{k-1}]=\sum_{j<k}\alpha_j^2$, yielding the stated identity.

\end{proof}

Proposition~\ref{prop:cross-calibration} is exactly the chained-parity instance of Observation~\ref{obs:learning_cc}: learning $\Pr(a^\star=1\mid \hat p_k,p_{k-1})$ from labelled transcripts produces a cross-calibrated forecast that equals $p_k$, and hence recovers Theorem~\ref{thm:prediction-aggregates} regardless of the raw report's confidence distortion ($c$) or anchoring to the previous message ($\lambda$). This proves Theorem~\ref{thm:cross-cal-recovery}.

\noindent\textbf{Round-by-round behavior under projection.}
Propositions~\ref{prop:self-calibration} and~\ref{prop:cross-calibration} say that, given enough labelled data, calibration recovers the ideal scalar $p_k$ from a biased report. We now analyze a different receiver model: a receiver that does not know the bias model and therefore treats $\hat p_k$ as if it were the true posterior. When $\hat p_k$ is not a possible message under any valid speaker posterior, the receiver rounds $\hat p_k$ to the closest valid message before updating. Formally, if $S$ is the set of valid speaker messages, the receiver's Bayesian update is computed with respect to
\begin{equation}
\label{eq:projection-rule}
    \bar p_k \;\in\; \arg\min_{s\in S}\,|\hat p_k - s|.
\end{equation}
The set $S$ should be read as the \emph{receiver-consistent} valid set: the posterior values that could have been sent by the speaker among worlds consistent with the receiver's own private observation and the public transcript so far, which avoids assigning positive probability to a message the receiver already knows to be impossible.

Under this rule, bias is harmless until it pushes $\hat p_k$ across the midpoint between two adjacent valid messages, at which point the receiver decodes the wrong latent bit and pays a discrete cost. We work with the original biased-report model of~\eqref{eq:biased-report}, with the same $(c,\lambda)$ across rounds, and trace what happens round by round.

\noindent\textbf{Round $1$.}
Alice observes $b_1$ and her ideal posterior is $p_1=\tfrac12+\alpha_1 b_1$. From Bob's view the valid messages are $\{\tfrac12-\alpha_1,\tfrac12+\alpha_1\}$, symmetric around $p_0=\tfrac12$ at distance $\alpha_1$. Since $p_0=\tfrac12$, the anchoring term in~\eqref{eq:biased-report} vanishes and $\hat p_1-\tfrac12=c\alpha_1 b_1$ has the same sign as $b_1$ for every $c>0$. Nearest-valid projection therefore recovers $b_1$, and the round-$1$ message conveys the same information it would under a calibrated speaker.

\noindent\textbf{Round $2$.}
Conditional on correct round-$1$ decoding, Bob now sees $p_1=\tfrac12+\alpha_1 b_1$ as a public scalar and observes his own bit $b_2$. His ideal posterior is $p_2=p_1+\alpha_2 b_1 b_2$, and from Alice's view the valid round-$2$ messages are $\{p_1-\alpha_2,p_1+\alpha_2\}$, symmetric around $p_1$ at distance $\alpha_2$. Nearest-valid projection of $\hat p_2$ recovers $p_2$ iff $\hat p_2$ lies on the same side of $p_1$ as $p_2$ (with equality corresponding to a tie). Unlike at round $1$, the biased report now picks up an anchoring contribution from $p_{k-1}=p_1$, and this is the source of failure.

\begin{proposition}[Decoding failure at round 2]
\label{prop:projected-round-2}
Under the biased reports of~\eqref{eq:biased-report} with $c>0$, conditional on correct round-$1$ decoding, nearest-valid projection at round $2$ decodes incorrectly on the event $\{b_2=-\sgn(c+\lambda-1)\}$, of probability $1/2$, whenever
\begin{equation*}
    |c+\lambda-1|\,\alpha_1 \;>\; c\,\alpha_2.
\end{equation*}
A wrong decode flips $\alpha_2 S_2$ in the receiver's projected posterior. Conditional on the wrong-decode event, this contributes squared-loss excess $4\alpha_2^2$ relative to the ideal round-$2$ posterior; averaged over $b_2$ conditional on correct round-$1$ decoding, the contribution is $2\alpha_2^2$.
\end{proposition}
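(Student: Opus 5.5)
The argument is a direct computation of $\hat p_2 - p_1$, after which we compare its sign with that of $p_2 - p_1$. Work conditional on correct round-$1$ decoding, so Alice knows $p_1=\tfrac12+\alpha_1 b_1$. By the discussion preceding the proposition, the receiver-consistent valid set at round $2$ is $\{p_1-\alpha_2,p_1+\alpha_2\}$. The nearest-valid rule~\eqref{eq:projection-rule} therefore decodes $\bar p_2=p_1+\alpha_2\sgn(\hat p_2-p_1)$, and the decode is correct iff $\sgn(\hat p_2-p_1)=\sgn(p_2-p_1)=b_1b_2$.

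Substitute $p_2-\tfrac12=\alpha_1 b_1+\alpha_2 b_1 b_2$ and $p_1-\tfrac12=\alpha_1 b_1$ into~\eqref{eq:biased-report}. This gives
\[
\hat p_2-p_1 = c(\alpha_1 b_1+\alpha_2 b_1b_2)+\lambda\alpha_1 b_1-\alpha_1 b_1 = b_1\big[(c+\lambda-1)\alpha_1 + c\,\alpha_2 b_2\big].
\]
Correct decoding then requires the bracket to have the sign of $b_2$. The anchoring and miscalibration term $(c+\lambda-1)\alpha_1$ is independent of $b_2$, while the informative term $c\alpha_2 b_2$ carries the sign of $b_2$.

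Now suppose $|c+\lambda-1|\alpha_1>c\alpha_2$. The bracket then has sign $\sgn(c+\lambda-1)$ for both values of $b_2$. Decoding is correct exactly when $b_2=\sgn(c+\lambda-1)$ and wrong on $\{b_2=-\sgn(c+\lambda-1)\}$. Since $b_2$ is a uniform bit independent of $b_1$, and hence of the conditioning event, this wrong-decode event has probability $1/2$. Note that the hypothesis forces $c+\lambda-1\neq 0$, so the sign is well defined.

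For the loss, on the wrong-decode event the receiver's projected posterior is $\bar p_2=p_1-\alpha_2 S_2$ instead of $p_2=p_1+\alpha_2S_2$. Hence $(\bar p_2-p_2)^2=4\alpha_2^2$ pointwise. Following the decomposition $(*)$ in the proof of Theorem~\ref{thm:prediction-aggregates}, the excess squared loss of $\bar p_2$ over $p_2$ is $\mathbb E[(\bar p_2-p_2)^2]$ plus a cross term $2\mathbb E[(a^\star-p_2)(p_2-\bar p_2)]$. This cross term vanishes because $\mathbb E[a^\star-p_2\mid b_1,b_2]=\mathbb E[r_2\mid b_1,b_2]=0$ by Lemma~\ref{lem:vanishing}, and $\bar p_2$ is a function of $(b_1,b_2)$. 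The conditional contribution on the wrong-decode event is therefore $4\alpha_2^2$, and averaging over $b_2$ (zero on the correct event) gives $2\alpha_2^2$.

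The only delicate step is this cross-term bookkeeping, which ensures that "contribution relative to the ideal round-$2$ posterior" is exactly $\mathbb E[(\bar p_2-p_2)^2]$. Everything else is the sign computation above.
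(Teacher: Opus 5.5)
Your proposal is correct and follows essentially the same route as the paper. It uses the same identity $\hat p_2-p_1=b_1\big[(c+\lambda-1)\alpha_1+c\,\alpha_2 b_2\big]$, the same sign comparison against $b_1b_2$, and the same flip-by-$2\alpha_2S_2$ loss accounting. Your explicit check that the cross term $2\mathbb E[(a^\star-p_2)(p_2-\bar p_2)]$ vanishes, because $\bar p_2$ and the wrong-decode event are functions of $(b_1,b_2)$, just spells out what the paper cites as the squared-loss identity from Theorem~\ref{thm:prediction-aggregates}.
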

\begin{proof}
Conditional on correct round-$1$ decoding, $p_1=\tfrac12+\alpha_1 b_1$ and $p_2=p_1+\alpha_2 b_1 b_2$, so~\eqref{eq:biased-report} gives
\begin{equation*}
    \hat p_2-p_1 = c\alpha_2 b_1 b_2 + (c+\lambda-1)\alpha_1 b_1.
\end{equation*}
Correct decoding requires this to have the same sign as $b_1 b_2$; multiplying by $b_1 b_2$ gives the equivalent condition $c\alpha_2 + (c+\lambda-1)\alpha_1 b_2>0$, which fails on the unfavorable sign $b_2=-\sgn(c+\lambda-1)$ exactly when $|c+\lambda-1|\alpha_1> c\alpha_2$ (with equality corresponding to a tie in nearest-valid projection).  On the wrong-decode event, projection flips $\alpha_2 S_2$ to $-\alpha_2 S_2$, so the projected posterior differs from the ideal round-$2$ posterior by $2\alpha_2 S_2$. By the squared-loss identity used in Theorem~\ref{thm:prediction-aggregates}, the conditional excess is $(2\alpha_2)^2=4\alpha_2^2$. Since the wrong-decode event has probability $1/2$ conditional on correct round-$1$ decoding, its expected contribution is $2\alpha_2^2$.
\end{proof}

This parallels Theorem~\ref{thm:action-stalls}: action communication transmits $b_1$ correctly and then stalls; affine-biased probability communication also transmits $b_1$ correctly under projection, but begins to lose $b_2$ once the imbalance condition $|c+\lambda-1|\alpha_1>c\alpha_2$ holds. The round-$2$ projection failure disappears when $c+\lambda=1$, while true calibration of the affine report corresponds to the stronger condition $c=1$ and $\lambda=0$. The two parts of the appendix are thus complementary: Propositions~\ref{prop:self-calibration}--\ref{prop:cross-calibration} say that post-hoc calibration suffices to recover the ideal scalar $p_k$, while Proposition~\ref{prop:projected-round-2} identifies a regime in which an uncalibrated projection-based receiver flips the second latent bit.

\section{Experimental details} \label{app:exp_details}

Here, we provide additional information on reproducibility of our experiments.

\subsection{Task design}

\textbf{Chained Parities}~~~For the parity task, we construct our dataset as prescribed by the theory in Section~\ref{sec:chained-parities}. We generate bitstrings of length 30 and choose a total of $30$ corresponding coefficients. We set the first four values to $[0.2, 0.15, 0.1, 0.02]$ and assign the remaining $26$ values equally so that $\sum_j \alpha_j = 0.49$. The bits of each datapoint are split disjointly between the two agents in alternating order, and the binary label is sampled from $P(Y{=}1 \mid b) = \tfrac{1}{2} + \sum_j \alpha_j s_j$, where $s_j$ is the $j$-th prefix product of the hidden signs. Then, we generate a total of two million datapoints which we split into a $90\%$ train and $10\%$ test split. All results are reported on the test split.

\textbf{Mazes}~~~For the maze task, we construct mazes similar to those introduced by~\citep{davidson2025collaboration}. We consider mazes of size $6\times6$. For evaluation purposes, we only consider mazes with a single correct path. The wall density is $30\%$. Each agent sees exactly half of the maze cells, assigned uniformly at random, and the agents receive disjoint masks so that each maze is always solvable from the joint information. An example maze is shown below. Again, we generate two million datapoints in total, split the data into a $90\%$ train and $10\%$ test split, and report our results on the test set. The data contains one example per step along the unique solution: the agent's masked grid is rendered with the current position marked, and the correct next move is used as the label. There are four answer tokens that can represent the true label: up, down, left, and right.

\begin{figure}[htbp]
    \centering

    \begin{subfigure}{0.31\textwidth}
        \centering
        $x$:
        \begin{verbatim}
            @##.#.
            .##...
            ...#..
            .#.##.
            #..###
            .#...*
        \end{verbatim}
        \label{fig:maze1}
    \end{subfigure}
    \hfill
    \begin{subfigure}{0.31\textwidth}
        \centering
        $x_A$:
        \begin{verbatim}
            @#??#?
            ?#??.?
            ?.??.?
            .?.##?
            #..##?
            .#.??*
        \end{verbatim}
        \label{fig:maze2}
    \end{subfigure}
    \hfill
    \begin{subfigure}{0.31\textwidth}
    \centering
    $x_B$:
    \begin{verbatim}
            @?#.?.
            .?#.?.
            .?.#?.
            ?#???.
            ?????#
            ???..*
    \end{verbatim}
    \label{fig:maze3}
    \end{subfigure}

    \caption{Example of full maze and agents' private observations.}
    \label{fig:maze-example}
\end{figure}

\subsection{Transformer experiments}

For both tasks we use a small, character-level GPT based on the NanoGPT
implementation~\citep{karpathy2022nanogpt}: $4$ transformer blocks, $6$
attention heads, embedding dimension $240$, and a causal mask. Each agent
maintains its own per-task vocabulary derived from the raw character set of
its inputs, so all sequences are tokenized at the character level. The
context length is set to $\texttt{prefix\_size}+1$, where the prefix is the
serialized example up to (but not including) the answer; any annotations
appended by a collaborator (see below) extend the prefix accordingly. We
treat each task as next-token prediction with the loss measured only at the
answer position(s).

\textbf{Per-agent inputs.}~~~For chained parity, agent $0$ ("Bob") sees the
odd-indexed bits and agent $1$ ("Alice") sees the even-indexed bits; the
unobserved positions are replaced with a mask token, and the label bit is
appended after an \texttt{=} delimiter. For mazes, each agent's input is
the flattened $6\times 6$ grid restricted to its assigned half of the
cells, with the current position marked by \texttt{@}, followed by
\texttt{=} and the next move from $\{d,r,u,l\}$. A single maze with path
length $\ell$ contributes $\ell$ training examples, each predicting the
next move from the current position.

\textbf{Conversation loop.}~~~We instantiate our protocol as iterative
training across rounds: in round $r$, agent $r \bmod K$ (with $K$ the
number of agents) trains a fresh transformer from scratch on its currently
annotated dataset. After a fixed number of optimizer steps, the trained
agent runs over its inputs and emits a message for each
example, which is then appended to the prefix of the next agent's inputs in round $r{+}1$. Because each round trains a freshly initialized model,
any improvement in joint performance must come through the appended
messages rather than transferred weights.

\textbf{Optimization.}~~~Each agent is trained from scratch with AdamW
($\beta_1=0.9$, $\beta_2=0.99$, weight decay $0.1$, gradient clip $1.0$)
in bfloat16 mixed precision on a single GPU, with batch size $1024$ and no
gradient accumulation. The learning rate uses linear warmup over $100$
steps to a peak of $10^{-4}$, followed by cosine decay to $10^{-5}$ over
the remainder of training. We run $2000$ iterations per round on chained
parity ($5$ rounds) and $50{,}000$ iterations per round on mazes
($4$ rounds); in both cases we keep the checkpoint with the best
validation loss within each round.

\subsection{LLM experiments} \label{app:llm_exp_details}

The LLM experiments follow the same per-round procedure as our
transformer experiments, with each agent now replaced by a
pretrained LLM queried through the API rather than a model trained from
scratch. Unless otherwise noted we use temperature $1.0$, run $4$ rounds of conversation
per maze, and evaluate on $N=30$ mazes, with 10 rollouts each, drawn from the held-out test split.
At each round, the active agent receives a prompt containing (i) a
legend-annotated rendering of its own masked $6\times 6$ maze, (ii) the
sequence of moves taken so far (from which the current position can be
inferred), and (iii) the other agent's most recent response from the
previous round. We consider three communication settings, which differ
only in what part of the partner's response is appended to the next
prompt: the full free-form message, a single move token in
$\{d,r,u,l\}$, or a probability vector over the four
moves. Probabilities can either be elicited
directly from the model in the response, post-calibrated from the verbalized answer, or obtained from rollouts. The agent is
to produce a single move
after the delimiter \texttt{\textasciitilde}, and -- in the verbalized
setting -- a probability vector after the delimiter \texttt{\%}. The
joint trajectory advances by one step after each conversation of $K$ rounds; a single incorrect move
terminates the maze, and we report the resulting per-maze success rate.

\subsubsection{Prompts}

The prompt we use to query the LLM at each step is:

\begin{lstlisting}[basicstyle=\ttfamily\footnotesize,breaklines=true,frame=single,columns=fullflexible]
Task: You are going to play the collaborative maze game together with another agent. The maze consists of a grid with walls and a goal. Your task is to jointly determine the next move on the path. You and the other agent will take that action together. You will each get your own map of the same maze, with some coordinates hidden. Because of the hidden coordinates, you will need to communicate with the other agent to share information about the maze and coordinate your next move. Both agents together have enough information to solve the maze, so you do not need to explore.
Rules:
- You can only move to adjacent cells [down(d), right(r), up(u), left(l)].
- You can not move diagonally or through walls.
- Once you make an incorrect move, you lose.
Here is **your** map of the maze with a legend of the symbols:
@?#.?.
.?#.?.
.?.#?.
?#???.
?????#
???..*
Legend:
@ - Current Position
* - Goal Position
. - Path
# - Wall
? - Hidden Cell

You must answer with one of d, r, u, l. Put your final answer after the delimiter ~. For example, if your final answer is d, your response should contain ~d.

Finally, give the probabilities you think each move d, r, u, l is correct after the delimiter %. For example, if you think the probability of d, r, u, l is 0.8, 0.1, 0.05, 0.05, your response should contain %[0.8, 0.1, 0.05, 0.05]. Make sure it is a valid probability vector, i.e. the sum of the probabilities is 1.

Your moves are timed for speed so hurry up!
\end{lstlisting}

For our experiment communicating full responses (Figure \ref{fig:full-response}), we slightly modify the above prompt for GPT-5.4-nano to ensure its response includes content beyond the elicited probabilities (otherwise, while it generates reasoning in the background, its reasoning is hidden). The modified prompt is:

\begin{lstlisting}[basicstyle=\ttfamily\footnotesize,breaklines=true,frame=single,columns=fullflexible]
Task: You are going to play the collaborative maze game together with another agent. The maze consists of a grid with walls and a goal. Your task is to jointly determine the next move on the path. You and the other agent will take that action together. You will each get your own map of the same maze, with some coordinates hidden. Because of the hidden coordinates, you will need to communicate with the other agent to share information about the maze and coordinate your next move. Both agents together have enough information to solve the maze, so you do not need to explore.
Rules:
- You can only move to adjacent cells [down(d), right(r), up(u), left(l)].
- You can not move diagonally or through walls.
- Once you make an incorrect move, you lose.
Here is **your** map of the maze with a legend of the symbols:
@?#.?.
.?#.?.
.?.#?.
?#???.
?????#
???..*
Legend:
@ - Current Position
* - Goal Position
. - Path
# - Wall
? - Hidden Cell

First, provide what you would like to communicate to the other agent. Then, you must answer with one of d, r, u, l. Put your final answer after the delimiter ~. For example, if your final answer is d, your response should contain ~d.

Finally, give the probabilities you think each move d, r, u, l is correct after the delimiter %. For example, if you think the probability of d, r, u, l is 0.8, 0.1, 0.05, 0.05, your response should contain %[0.8, 0.1, 0.05, 0.05]. Make sure it is a valid probability vector, i.e. the sum of the probabilities is 1.

Your moves are timed for speed so hurry up!
\end{lstlisting}

Finally, the prompt we use for the full-information solo baseline (Figure \ref{fig:compare-comm-panel}) is: 

\begin{lstlisting}[basicstyle=\ttfamily\footnotesize,breaklines=true,frame=single,columns=fullflexible]
Task: You are going to play a maze game. The maze consists of a grid with walls and a goal. Your task is to navigate through the maze, avoiding walls, to reach the goal.
Rules:
- You can only move to adjacent cells [down(d), right(r), up(u), left(l)].
- You can not move diagonally or through walls.
- Once you make an incorrect move, you lose.
Here is **your** map of the maze with a legend of the symbols:
@...#.
.#.##.
.#...#
.###..
#...#.
..#.#*
Legend:
@ - Current Position
* - Goal Position
. - Path
# - Wall
? - Hidden Cell

You must answer with one of d, r, u, l. Put your final answer after the delimiter ~. For example, if your final answer is d, your response should contain ~d.

Finally, give the probabilities you think each move d, r, u, l is correct after the delimiter %. For example, if you think the probability of d, r, u, l is 0.8, 0.1, 0.05, 0.05, your response should contain %[0.8, 0.1, 0.05, 0.05]. Make sure it is a valid probability vector, i.e. the sum of the probabilities is 1.

Your moves are timed for speed so hurry up!
\end{lstlisting}

\subsubsection{Conversation Calibration Loss}

In Figure~\ref{fig:cal-panel}, we compute conversation calibration loss as
follows. At round $k$ the model is evaluated on $N$ examples (one per move
per maze), indexed $i=1,\dots,N$. Given $D$ possible actions, for each example: let
$\tilde p_{k,i}\in[0,1]^{D}$ be the model's predicted action distribution at
round $k$, $\hat p_{k-1,i}\in[0,1]^{D}$ be the collaborator's communicated
distribution at the previous round $k-1$, and $y_i\in\{0,1\}^{D}$ be the
one-hot encoding of the true action.

Discretize $[0,1]$ into $B$ equal-width bins, $I_b=[\tfrac{b-1}{B},\tfrac{b}{B})$
for $b=1,\dots,B$ (with $I_B$ closed at $1$). For a vector $p\in[0,1]^D$ and a
bin multi-index $\mathbf b=(b_1,\dots,b_D)\in\{1,\dots,B\}^D$, write the cell
indicator as a product over the $D$ coordinates,
\[
  \mathbf 1\{p\in I_{\mathbf b}\}
  \;=\;\prod_{d=1}^{D}\mathbf 1\!\big\{p^{(d)}\in I_{b_d}\big\}.
\]
A \emph{cross bin} is a pair $(\mathbf b,\mathbf b')$ that jointly bins the
model's own distribution and the collaborator's distribution. We use hard binning --- that is, the weight on cross bins $(\mathbf b,\mathbf b')$ for example $i$ is simply the indicator of if it lies in the bins:
\[
  w_{k,i}(\mathbf b,\mathbf b')
  \;=\;\mathbf 1\{\tilde p_{k,i}\in I_{\mathbf b}\}\;
        \mathbf 1\{\hat p_{k-1,i}\in I_{\mathbf b'}\}\;\in\{0,1\}.
\]
Let $N_{\mathbf b,\mathbf b'}=\sum_{i=1}^{N} w_{k,i}(\mathbf b,\mathbf b')$ be the
bin count, and let
\[
  \bar p_{\mathbf b,\mathbf b'}
   =\frac{1}{N_{\mathbf b,\mathbf b'}}\sum_{i=1}^{N}
     w_{k,i}(\mathbf b,\mathbf b')\,\tilde p_{k,i},
  \qquad
  \bar y_{\mathbf b,\mathbf b'}
   =\frac{1}{N_{\mathbf b,\mathbf b'}}\sum_{i=1}^{N}
     w_{k,i}(\mathbf b,\mathbf b')\,y_i
\]
be the bin-averaged prediction and empirical outcome frequency. The
conversation calibration loss at round $k$ is the average weighted $\ell_2$ gap
between the two:
\[
  \text{ConvCal}_k
  \;=\;\frac{1}{N}\sum_{\mathbf b,\mathbf b'}
        N_{\mathbf b,\mathbf b'}\,
        \big\|\bar p_{\mathbf b,\mathbf b'}-\bar y_{\mathbf b,\mathbf b'}\big\|_2
  \;=\;\frac{1}{N}\sum_{\mathbf b,\mathbf b'}
        \Big\|\sum_{i=1}^{N} w_{k,i}(\mathbf b,\mathbf b')
              \big(\tilde p_{k,i}-y_i\big)\Big\|_2 ,
\]
The second equality follows since
$\bar p_{\mathbf b,\mathbf b'}-\bar y_{\mathbf b,\mathbf b'}
=\tfrac{1}{N_{\mathbf b,\mathbf b'}}\sum_i w_{k,i}(\mathbf b,\mathbf b')
(\tilde p_{k,i}-y_i)$, so the factor $N_{\mathbf b,\mathbf b'}$ cancels. At
round $k=1$ there is no collaborator, so we bin on $\tilde p_{1,i}$ alone
(i.e. drop $\mathbf b'$ and the $\hat p$ indicator), recovering the standard
(self) calibration loss. In our results, we report conversation calibration losses using $B=10$.


\section{Examples of Multi-Round Communication in Mazes}\label{app:hard-examples}

Here we give examples of ``hard'' moves in mazes --- those that require multiple rounds of communication --- and how our trained transformer models converge to the correct move when iteratively exchanging probabilities. 

\begin{figure}[htbp]
    \centering

    \begin{subfigure}[t]{\textwidth}
        \centering

        \vspace{0.5em}

        \begin{subfigure}[t]{0.31\textwidth}
            \centering
            $x$:
            \begin{verbatim}
            @...#.
            .#.##.
            .#...#
            .###..
            #...#.
            ..#.#*
            \end{verbatim}
        \end{subfigure}
        \hspace{-80pt}
        \begin{subfigure}[t]{0.31\textwidth}
            \centering
            $x_A$:
            \begin{verbatim}
            @...??
            ?#????
            .?.?.?
            ?#?#??
            ?..??.
            ..#?#*
            \end{verbatim}
        \end{subfigure}
        \hspace{-80pt}
        \begin{subfigure}[t]{0.31\textwidth}
            \centering
            $x_B$:
            \begin{verbatim}
            @???#.
            .?.##.
            ?#?.?#
            .?#?..
            #??.#?
            ???.?*
            \end{verbatim}
        \end{subfigure}

        \vspace{0.5em}

        \includegraphics[width=0.42\textwidth]{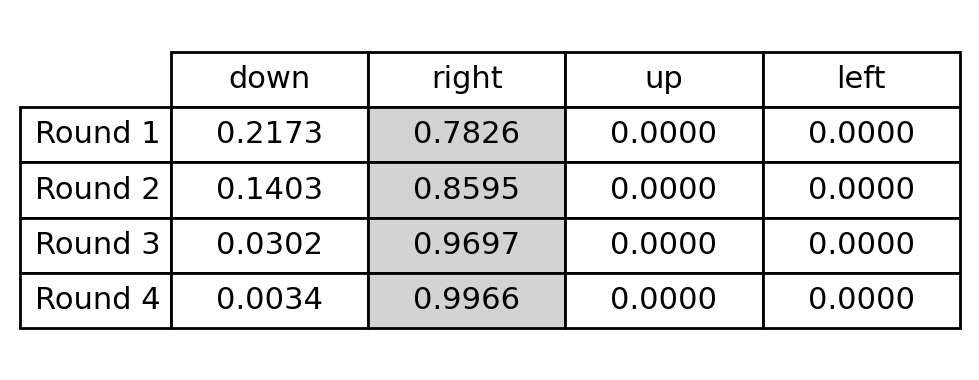}
        \caption{Hard maze example 1.}
        \label{fig:hard-maze-1}
    \end{subfigure}

    \vspace{1.2em}

    \begin{subfigure}[t]{\textwidth}
        \centering

        \vspace{0.5em}

        \begin{subfigure}[t]{0.31\textwidth}
            \centering
            $x$:
            \begin{verbatim}
            ..#.#.
            #.#...
            ..@..#
            .#.###
            #.....
            .##.#*
            \end{verbatim}
        \end{subfigure}
        \hspace{-80pt}
        \begin{subfigure}[t]{0.31\textwidth}
            \centering
            $x_A$:
            \begin{verbatim}
            ..????
            ?????.
            ..@?.?
            ????#?
            #..???
            .?#?#*
            \end{verbatim}
        \end{subfigure}
        \hspace{-80pt}
        \begin{subfigure}[t]{0.31\textwidth}
            \centering
            $x_B$:
            \begin{verbatim}
            .?#.#.
            #.#..?
            ??@.?#
            .#.#?#
            ???...
            ?#?.?*
            \end{verbatim}
        \end{subfigure}

        \vspace{0.5em}

        \includegraphics[width=0.42\textwidth]{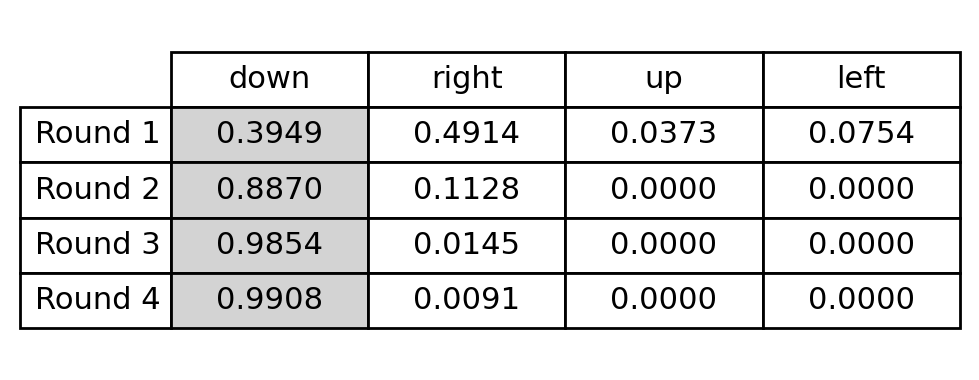}
        \caption{Hard maze example 2.}
        \label{fig:hard-maze-2}
    \end{subfigure}

    \caption{Two examples of hard moves in mazes. For each maze, the panels show the full maze $x$, agents' partially observed mazes $x_A$ and $x_B$ with the current position denoted by @, and the probabilistic predictions made by our trained transformer models on this move over 4 rounds (details are in Appendix~\ref{app:exp_details}). Agent $A$ communicates on odd rounds and Agent $B$ communicates on even rounds. The correct answer is shaded.}
    \label{fig:hard-mazes}
\end{figure}

Figure \ref{fig:hard-mazes} shows two examples where both agents are initially uncertain about the next move. In the first example (Figure \ref{fig:hard-maze-1}), agent $A$ assigns 78\% probability to moving right in the first round, which is the correct move. Subsequent rounds progressively reinforce this prediction, and by the fourth round, agent $B$ is almost completely certain that this is the correct next move. In the second example (Figure \ref{fig:hard-maze-2}), agent $A$ is initially fairly uncertain and assigns most of its probability mass to moving right, which is incorrect. Agent $B$ corrects this in the second round by assigning 89\% probability to moving down. Agent $A$ then resolves almost all of its uncertainty and ends up assigning 99\% probability to moving down. Note that in all other moves in these mazes, two rounds suffice: one agent exactly knows the next move.

\section{Limitations}
\label{ap:limit}
Our work studies one specific axis of multi-agent LLM collaboration---the communication protocol when information is partitioned between two agents---and we make no claim to have addressed the full complexity of this space, including issues of task decomposition, trust between agents, and prompt sensitivity. On the theory side, while we establish a clean separation between action and probability communication in the chained parity model, a complete characterization of when and why this gap arises in general remains open. The maze task, though a useful controlled testbed, is stylized relative to real collaborative tasks. Finally, our setting is restricted to two agents; extending the framework to networks of more than two agents is a natural direction for future work.

\section{Compute Resources} \label{app:compute}

Our experiments were conducted partially on a local desktop computer and on a compute cluster with various types of graphical processing units (GPUs), including Nvidia RTX3090 and Nvidia A6000 GPUs. Training a single parity experiment can be done on a household-grade GPU in less than one hour time. The training of the maze transformers took approximately 6-12 hours per run depending on the GPU. Our experiments thus amount to approximately $300-400$ GPU hours. This is an estimate for the experiments included in the paper and does not reflect potential developing and debugging runtime. All LLM experiments were conducted directly via OpenAI's public API and did not require local computational resources beyond a standard laptop.

\ifarxiv
\else
\section{Impact}
\label{ap:impact}
This work does not have societal impact beyond the general impact of machine learning technologies and large language models.
\newpage
\section*{NeurIPS Paper Checklist}

\begin{enumerate}

\item {\bf Claims}
    \item[] Question: Do the main claims made in the abstract and introduction accurately reflect the paper's contributions and scope?
    \item[] Answer: \answerYes{} 
    \item[] Justification: The abstract and introduction describe the results of the paper.
    \item[] Guidelines:
    \begin{itemize}
        \item The answer \answerNA{} means that the abstract and introduction do not include the claims made in the paper.
        \item The abstract and/or introduction should clearly state the claims made, including the contributions made in the paper and important assumptions and limitations. A \answerNo{} or \answerNA{} answer to this question will not be perceived well by the reviewers. 
        \item The claims made should match theoretical and experimental results, and reflect how much the results can be expected to generalize to other settings. 
        \item It is fine to include aspirational goals as motivation as long as it is clear that these goals are not attained by the paper. 
    \end{itemize}

\item {\bf Limitations}
    \item[] Question: Does the paper discuss the limitations of the work performed by the authors?
    \item[] Answer: \answerYes{} 
    \item[] Justification: Discussion of the work's limitations is in Appendix \ref{ap:limit}
    \item[] Guidelines:
    \begin{itemize}
        \item The answer \answerNA{} means that the paper has no limitation while the answer \answerNo{} means that the paper has limitations, but those are not discussed in the paper. 
        \item The authors are encouraged to create a separate ``Limitations'' section in their paper.
        \item The paper should point out any strong assumptions and how robust the results are to violations of these assumptions (e.g., independence assumptions, noiseless settings, model well-specification, asymptotic approximations only holding locally). The authors should reflect on how these assumptions might be violated in practice and what the implications would be.
        \item The authors should reflect on the scope of the claims made, e.g., if the approach was only tested on a few datasets or with a few runs. In general, empirical results often depend on implicit assumptions, which should be articulated.
        \item The authors should reflect on the factors that influence the performance of the approach. For example, a facial recognition algorithm may perform poorly when image resolution is low or images are taken in low lighting. Or a speech-to-text system might not be used reliably to provide closed captions for online lectures because it fails to handle technical jargon.
        \item The authors should discuss the computational efficiency of the proposed algorithms and how they scale with dataset size.
        \item If applicable, the authors should discuss possible limitations of their approach to address problems of privacy and fairness.
        \item While the authors might fear that complete honesty about limitations might be used by reviewers as grounds for rejection, a worse outcome might be that reviewers discover limitations that aren't acknowledged in the paper. The authors should use their best judgment and recognize that individual actions in favor of transparency play an important role in developing norms that preserve the integrity of the community. Reviewers will be specifically instructed to not penalize honesty concerning limitations.
    \end{itemize}

\item {\bf Theory assumptions and proofs}
    \item[] Question: For each theoretical result, does the paper provide the full set of assumptions and a complete (and correct) proof?
    \item[] Answer: \answerYes{} 
    \item[] Justification: Complete proofs are provided in Appendices \ref{app:chained-parity-proofs} and \ref{app:calibration}.
    \item[] Guidelines:
    \begin{itemize}
        \item The answer \answerNA{} means that the paper does not include theoretical results. 
        \item All the theorems, formulas, and proofs in the paper should be numbered and cross-referenced.
        \item All assumptions should be clearly stated or referenced in the statement of any theorems.
        \item The proofs can either appear in the main paper or the supplemental material, but if they appear in the supplemental material, the authors are encouraged to provide a short proof sketch to provide intuition. 
        \item Inversely, any informal proof provided in the core of the paper should be complemented by formal proofs provided in appendix or supplemental material.
        \item Theorems and Lemmas that the proof relies upon should be properly referenced. 
    \end{itemize}

    \item {\bf Experimental result reproducibility}
    \item[] Question: Does the paper fully disclose all the information needed to reproduce the main experimental results of the paper to the extent that it affects the main claims and/or conclusions of the paper (regardless of whether the code and data are provided or not)?
    \item[] Answer: \answerYes{} 
    \item[] Justification: The experiments described in Section \ref{sec:maze} and \ref{sec:llm} are elaborated upon in Appendix \ref{app:exp_details}. 
    \item[] Guidelines:
    \begin{itemize}
        \item The answer \answerNA{} means that the paper does not include experiments.
        \item If the paper includes experiments, a \answerNo{} answer to this question will not be perceived well by the reviewers: Making the paper reproducible is important, regardless of whether the code and data are provided or not.
        \item If the contribution is a dataset and\slash or model, the authors should describe the steps taken to make their results reproducible or verifiable. 
        \item Depending on the contribution, reproducibility can be accomplished in various ways. For example, if the contribution is a novel architecture, describing the architecture fully might suffice, or if the contribution is a specific model and empirical evaluation, it may be necessary to either make it possible for others to replicate the model with the same dataset, or provide access to the model. In general. releasing code and data is often one good way to accomplish this, but reproducibility can also be provided via detailed instructions for how to replicate the results, access to a hosted model (e.g., in the case of a large language model), releasing of a model checkpoint, or other means that are appropriate to the research performed.
        \item While NeurIPS does not require releasing code, the conference does require all submissions to provide some reasonable avenue for reproducibility, which may depend on the nature of the contribution. For example
        \begin{enumerate}
            \item If the contribution is primarily a new algorithm, the paper should make it clear how to reproduce that algorithm.
            \item If the contribution is primarily a new model architecture, the paper should describe the architecture clearly and fully.
            \item If the contribution is a new model (e.g., a large language model), then there should either be a way to access this model for reproducing the results or a way to reproduce the model (e.g., with an open-source dataset or instructions for how to construct the dataset).
            \item We recognize that reproducibility may be tricky in some cases, in which case authors are welcome to describe the particular way they provide for reproducibility. In the case of closed-source models, it may be that access to the model is limited in some way (e.g., to registered users), but it should be possible for other researchers to have some path to reproducing or verifying the results.
        \end{enumerate}
    \end{itemize}

\item {\bf Open access to data and code}
    \item[] Question: Does the paper provide open access to the data and code, with sufficient instructions to faithfully reproduce the main experimental results, as described in supplemental material?
    \item[] Answer: \answerNo{} 
    \item[] Justification: At camera-ready, code will be made available. 
    \item[] Guidelines:
    \begin{itemize}
        \item The answer \answerNA{} means that paper does not include experiments requiring code.
        \item Please see the NeurIPS code and data submission guidelines (\url{https://neurips.cc/public/guides/CodeSubmissionPolicy}) for more details.
        \item While we encourage the release of code and data, we understand that this might not be possible, so \answerNo{} is an acceptable answer. Papers cannot be rejected simply for not including code, unless this is central to the contribution (e.g., for a new open-source benchmark).
        \item The instructions should contain the exact command and environment needed to run to reproduce the results. See the NeurIPS code and data submission guidelines (\url{https://neurips.cc/public/guides/CodeSubmissionPolicy}) for more details.
        \item The authors should provide instructions on data access and preparation, including how to access the raw data, preprocessed data, intermediate data, and generated data, etc.
        \item The authors should provide scripts to reproduce all experimental results for the new proposed method and baselines. If only a subset of experiments are reproducible, they should state which ones are omitted from the script and why.
        \item At submission time, to preserve anonymity, the authors should release anonymized versions (if applicable).
        \item Providing as much information as possible in supplemental material (appended to the paper) is recommended, but including URLs to data and code is permitted.
    \end{itemize}

\item {\bf Experimental setting/details}
    \item[] Question: Does the paper specify all the training and test details (e.g., data splits, hyperparameters, how they were chosen, type of optimizer) necessary to understand the results?
    \item[] Answer: \answerYes{} 
    \item[] Justification: Experimental details are provided in Appendix \ref{app:exp_details}.
    \item[] Guidelines:
    \begin{itemize}
        \item The answer \answerNA{} means that the paper does not include experiments.
        \item The experimental setting should be presented in the core of the paper to a level of detail that is necessary to appreciate the results and make sense of them.
        \item The full details can be provided either with the code, in appendix, or as supplemental material.
    \end{itemize}

\item {\bf Experiment statistical significance}
    \item[] Question: Does the paper report error bars suitably and correctly defined or other appropriate information about the statistical significance of the experiments?
    \item[] Answer: \answerYes{} 
    \item[] Justification: Experiments provide means and bootstrapped 90 percent confidence intervals.
    \item[] Guidelines:
    \begin{itemize}
        \item The answer \answerNA{} means that the paper does not include experiments.
        \item The authors should answer \answerYes{} if the results are accompanied by error bars, confidence intervals, or statistical significance tests, at least for the experiments that support the main claims of the paper.
        \item The factors of variability that the error bars are capturing should be clearly stated (for example, train/test split, initialization, random drawing of some parameter, or overall run with given experimental conditions).
        \item The method for calculating the error bars should be explained (closed form formula, call to a library function, bootstrap, etc.)
        \item The assumptions made should be given (e.g., Normally distributed errors).
        \item It should be clear whether the error bar is the standard deviation or the standard error of the mean.
        \item It is OK to report 1-sigma error bars, but one should state it. The authors should preferably report a 2-sigma error bar than state that they have a 96\% CI, if the hypothesis of Normality of errors is not verified.
        \item For asymmetric distributions, the authors should be careful not to show in tables or figures symmetric error bars that would yield results that are out of range (e.g., negative error rates).
        \item If error bars are reported in tables or plots, the authors should explain in the text how they were calculated and reference the corresponding figures or tables in the text.
    \end{itemize}

\item {\bf Experiments compute resources}
    \item[] Question: For each experiment, does the paper provide sufficient information on the computer resources (type of compute workers, memory, time of execution) needed to reproduce the experiments?
    \item[] Answer:  \answerYes{}
    \item[] Justification: Details are included in Appendix \ref{app:compute}.
    \item[] Guidelines: 
    \begin{itemize}
        \item The answer \answerNA{} means that the paper does not include experiments.
        \item The paper should indicate the type of compute workers CPU or GPU, internal cluster, or cloud provider, including relevant memory and storage.
        \item The paper should provide the amount of compute required for each of the individual experimental runs as well as estimate the total compute. 
        \item The paper should disclose whether the full research project required more compute than the experiments reported in the paper (e.g., preliminary or failed experiments that didn't make it into the paper). 
    \end{itemize}
    
\item {\bf Code of ethics}
    \item[] Question: Does the research conducted in the paper conform, in every respect, with the NeurIPS Code of Ethics \url{https://neurips.cc/public/EthicsGuidelines}?
    \item[] Answer: \answerYes{} 
    \item[] Justification: The paper conforms with the Code of Ethics as linked to above. 
    \item[] Guidelines:
    \begin{itemize}
        \item The answer \answerNA{} means that the authors have not reviewed the NeurIPS Code of Ethics.
        \item If the authors answer \answerNo, they should explain the special circumstances that require a deviation from the Code of Ethics.
        \item The authors should make sure to preserve anonymity (e.g., if there is a special consideration due to laws or regulations in their jurisdiction).
    \end{itemize}

\item {\bf Broader impacts}
    \item[] Question: Does the paper discuss both potential positive societal impacts and negative societal impacts of the work performed?
    \item[] Answer: \answerYes{} 
    \item[] Justification: Impact is discussed in Appendix \ref{ap:impact}
    \item[] Guidelines:
    \begin{itemize}
        \item The answer \answerNA{} means that there is no societal impact of the work performed.
        \item If the authors answer \answerNA{} or \answerNo, they should explain why their work has no societal impact or why the paper does not address societal impact.
        \item Examples of negative societal impacts include potential malicious or unintended uses (e.g., disinformation, generating fake profiles, surveillance), fairness considerations (e.g., deployment of technologies that could make decisions that unfairly impact specific groups), privacy considerations, and security considerations.
        \item The conference expects that many papers will be foundational research and not tied to particular applications, let alone deployments. However, if there is a direct path to any negative applications, the authors should point it out. For example, it is legitimate to point out that an improvement in the quality of generative models could be used to generate Deepfakes for disinformation. On the other hand, it is not needed to point out that a generic algorithm for optimizing neural networks could enable people to train models that generate Deepfakes faster.
        \item The authors should consider possible harms that could arise when the technology is being used as intended and functioning correctly, harms that could arise when the technology is being used as intended but gives incorrect results, and harms following from (intentional or unintentional) misuse of the technology.
        \item If there are negative societal impacts, the authors could also discuss possible mitigation strategies (e.g., gated release of models, providing defenses in addition to attacks, mechanisms for monitoring misuse, mechanisms to monitor how a system learns from feedback over time, improving the efficiency and accessibility of ML).
    \end{itemize}
    
\item {\bf Safeguards}
    \item[] Question: Does the paper describe safeguards that have been put in place for responsible release of data or models that have a high risk for misuse (e.g., pre-trained language models, image generators, or scraped datasets)?
    \item[] Answer: \answerNA{} 
    \item[] Justification: The work here has no such risk.
    \item[] Guidelines:
    \begin{itemize}
        \item The answer \answerNA{} means that the paper poses no such risks.
        \item Released models that have a high risk for misuse or dual-use should be released with necessary safeguards to allow for controlled use of the model, for example by requiring that users adhere to usage guidelines or restrictions to access the model or implementing safety filters. 
        \item Datasets that have been scraped from the Internet could pose safety risks. The authors should describe how they avoided releasing unsafe images.
        \item We recognize that providing effective safeguards is challenging, and many papers do not require this, but we encourage authors to take this into account and make a best faith effort.
    \end{itemize}

\item {\bf Licenses for existing assets}
    \item[] Question: Are the creators or original owners of assets (e.g., code, data, models), used in the paper, properly credited and are the license and terms of use explicitly mentioned and properly respected?
    \item[] Answer: \answerNA{} 
    \item[] Justification: No such assets are applicable.
    \item[] Guidelines:
    \begin{itemize}
        \item The answer \answerNA{} means that the paper does not use existing assets.
        \item The authors should cite the original paper that produced the code package or dataset.
        \item The authors should state which version of the asset is used and, if possible, include a URL.
        \item The name of the license (e.g., CC-BY 4.0) should be included for each asset.
        \item For scraped data from a particular source (e.g., website), the copyright and terms of service of that source should be provided.
        \item If assets are released, the license, copyright information, and terms of use in the package should be provided. For popular datasets, \url{paperswithcode.com/datasets} has curated licenses for some datasets. Their licensing guide can help determine the license of a dataset.
        \item For existing datasets that are re-packaged, both the original license and the license of the derived asset (if it has changed) should be provided.
        \item If this information is not available online, the authors are encouraged to reach out to the asset's creators.
    \end{itemize}

\item {\bf New assets}
    \item[] Question: Are new assets introduced in the paper well documented and is the documentation provided alongside the assets?
    \item[] Answer: \answerNA{} 
    \item[] Justification: This work does not release new assets.
    \item[] Guidelines:
    \begin{itemize}
        \item The answer \answerNA{} means that the paper does not release new assets.
        \item Researchers should communicate the details of the dataset\slash code\slash model as part of their submissions via structured templates. This includes details about training, license, limitations, etc. 
        \item The paper should discuss whether and how consent was obtained from people whose asset is used.
        \item At submission time, remember to anonymize your assets (if applicable). You can either create an anonymized URL or include an anonymized zip file.
    \end{itemize}

\item {\bf Crowdsourcing and research with human subjects}
    \item[] Question: For crowdsourcing experiments and research with human subjects, does the paper include the full text of instructions given to participants and screenshots, if applicable, as well as details about compensation (if any)? 
    \item[] Answer: \answerNA{} 
    \item[] Justification: This work does not involve crowdsourcing or human subject research.
    \item[] Guidelines:
    \begin{itemize}
        \item The answer \answerNA{} means that the paper does not involve crowdsourcing nor research with human subjects.
        \item Including this information in the supplemental material is fine, but if the main contribution of the paper involves human subjects, then as much detail as possible should be included in the main paper. 
        \item According to the NeurIPS Code of Ethics, workers involved in data collection, curation, or other labor should be paid at least the minimum wage in the country of the data collector. 
    \end{itemize}

\item {\bf Institutional review board (IRB) approvals or equivalent for research with human subjects}
    \item[] Question: Does the paper describe potential risks incurred by study participants, whether such risks were disclosed to the subjects, and whether Institutional Review Board (IRB) approvals (or an equivalent approval/review based on the requirements of your country or institution) were obtained?
    \item[] Answer: \answerNA{} 
    \item[] Justification: This work does not involve crowdsourcing or human subject research.
    \item[] Guidelines:
    \begin{itemize}
        \item The answer \answerNA{} means that the paper does not involve crowdsourcing nor research with human subjects.
        \item Depending on the country in which research is conducted, IRB approval (or equivalent) may be required for any human subjects research. If you obtained IRB approval, you should clearly state this in the paper. 
        \item We recognize that the procedures for this may vary significantly between institutions and locations, and we expect authors to adhere to the NeurIPS Code of Ethics and the guidelines for their institution. 
        \item For initial submissions, do not include any information that would break anonymity (if applicable), such as the institution conducting the review.
    \end{itemize}

\item {\bf Declaration of LLM usage}
    \item[] Question: Does the paper describe the usage of LLMs if it is an important, original, or non-standard component of the core methods in this research? Note that if the LLM is used only for writing, editing, or formatting purposes and does \emph{not} impact the core methodology, scientific rigor, or originality of the research, declaration is not required.
    \item[] Answer: \answerYes{} 
    \item[] Justification: Usage of LLMs in experiments is described in Appendix \ref{app:llm_exp_details}.
    \item[] Guidelines:
    \begin{itemize}
        \item The answer \answerNA{} means that the core method development in this research does not involve LLMs as any important, original, or non-standard components.
        \item Please refer to our LLM policy in the NeurIPS handbook for what should or should not be described.
    \end{itemize}

\end{enumerate}
\fi

\end{document}